\newtheorem{assumption}{Assumption}
\newtheorem{theorem}{Theorem}
\newtheorem{lemma}{Lemma}
\algnewcommand{\LeftComment}[1]{ \(\triangleright\) #1}
\newcommand{\cssr}[1]{{\color{red}#1}}
\newcommand{\cssr}[1]{#1}
\newcommand{\cssc}[1]{{\color{red}(Chengshuai: #1)}}
\newcommand{\cssc}[1]{}
\newcommand{\congr}[1]{{\color{blue}#1}}
\newcommand{\congr}[1]{#1}
\newcommand{\congc}[1]{{\color{magenta}(Cong: #1)}}
\newcommand{\congc}[1]{}
\long\def\appaddress#1{
     \ifnum\statePaper=0
    {
        {\def\and{\unskip\enspace{\rm and}\enspace}%
        \def\And{\end{tabular}\hss \egroup \hskip 1in plus 2fil
                \hbox to 0pt\bgroup\hss \begin{tabular}[t]{c} }%
        \def\AND{\end{tabular}\hss\egroup \hfil\hfil\egroup
                \vskip 0.25in plus 1fil minus 0.125in
                \hbox to \linewidth\bgroup \hfil\hfil
                    \hbox to 0pt  \bgroup \hss \begin{tabular}[t]{c}}
        \def\ANDD{\end{tabular}\hss\egroup \hfil\hfil\egroup
                \vskip 0.25in plus 1fil minus 0.125in
                \hbox to \linewidth \bgroup \hfil\hfil
                    \hbox to 0pt  \bgroup \hss\begin{tabular}[t]{c}\bfseries}
            \hbox to \linewidth\bgroup \hfil\hfil
            \hbox to 0pt\bgroup\hss \begin{tabular}[t]{c} 
            Anonymous Institution
            \end{tabular}
            \hss\egroup
            \hfil\hfil\egroup}
       % \vskip 0.3in plus 2fil minus 0.1in
    }
    \else
    {
        {\def\and{\unskip\enspace{\rm and}\enspace}%
        \def\And{\end{tabular}\hss \egroup \hskip 1in plus 2fil
                \hbox to 0pt\bgroup\hss \begin{tabular}[t]{c} }%
        \def\AND{\end{tabular}\hss\egroup \hfil\hfil\egroup
                \vskip 0.25in plus 1fil minus 0.125in
                \hbox to \linewidth\bgroup \hfil\hfil
                    \hbox to 0pt  \bgroup \hss \begin{tabular}[t]{c}}
        \def\ANDD{\end{tabular}\hss\egroup \hfil\hfil\egroup
                \vskip 0.25in plus 1fil minus 0.125in
                \hbox to \linewidth \bgroup \hfil\hfil
                    \hbox to 0pt  \bgroup \hss\begin{tabular}[t]{c}\bfseries}
            \hbox to \linewidth\bgroup \hfil\hfil
            \hbox to 0pt\bgroup\hss \begin{tabular}[t]{c} #1
                                \end{tabular}
        \hss\egroup
        \hfil\hfil\egroup}
   % \vskip 0.3in plus 2fil minus 0.1in
    }
   \fi
}
\begin{document}
	
	% If your paper is accepted and the title of your paper is very long,
	% the style will print as headings an error message. Use the following
	% command to supply a shorter title of your paper so that it can be
	% used as headings.
	%
	\runningtitle{Decentralized Multi-player Multi-armed Bandits with No Collision Information}
	
	% If your paper is accepted and the number of authors is large, the
	% style will print as headings an error message. Use the following
	% command to supply a shorter version of the authors names so that
	% they can be used as headings (for example, use only the surnames)
	%
	%\runningauthor{Surname 1, Surname 2, Surname 3, ...., Surname n}
	
	\twocolumn[
	
	\aistatstitle{Decentralized Multi-player Multi-armed Bandits\\ with No Collision Information}
	
	\aistatsauthor{Chengshuai Shi \And Wei Xiong \And Cong Shen \And Jing Yang}
	
	\aistatsaddress{University of Virginia \And University of Virginia \And University of Virginia  \And  Pennsylvania State University} ]

	\begin{abstract}
	% CR
	The decentralized stochastic multi-player multi-armed bandit (MP-MAB) problem, where the collision information is not available to the players, is studied in this paper. Building on the seminal work of \cite{Boursier2019}, we propose error correction synchronization involving communication (EC-SIC), whose regret is shown to approach that of the centralized stochastic MP-MAB with collision information. By recognizing that the communication phase without collision information corresponds to the \emph{Z-channel} model in information theory, the proposed EC-SIC algorithm applies optimal error correction coding for \congr{the} communication of reward statistics. A fixed message length, as opposed to the logarithmically growing one in  \cite{Boursier2019}, also plays a crucial role in controlling the communication loss. Experiments with practical Z-channel codes, such as repetition code, flip code and \cssr{modified} Hamming code, \congr{demonstrate the superiority of EC-SIC in both synthetic and real-world datasets}. 
	
	% Online form, use the following plain test version
	
	% The decentralized stochastic multi-player multi-armed bandit (MP-MAB) problem, where the collision information is not available to the players, is studied in this paper. Building on the seminal work of Boursier and Perchet (2019), we propose error correction synchronization involving communication (EC-SIC), whose regret is shown to approach that of the centralized stochastic MP-MAB with collision information. By recognizing that the communication phase without collision information corresponds to the Z-channel model in information theory, the proposed EC-SIC algorithm applies optimal error correction coding for the communication of reward statistics. A fixed message length, as opposed to the logarithmically growing one in Boursier and Perchet (2019), also plays a crucial role in controlling the communication loss. Experiments with practical Z-channel codes, such as repetition code, flip code and modified Hamming code, demonstrate the superiority of EC-SIC in both synthetic and real-world datasets.

	\end{abstract}
	
	\section{Introduction}
	\label{sec:intro}
	
	\begin{table*}%[htb]
		\tiny
		\caption{Regret Upper Bounds of MP-MAB Algorithms}
		\vspace{-0.15in}
		\begin{center}
			\begin{tabular}{|ccc|}
				\hline
				\textbf{Model}  &\textbf{Reference} &\textbf{Asymptotic Upper Bound (up to constant factor)} \\
				\hline 
				Centralized Multiplayer &\cite{komiyama2015optimal} &$\sum_{k>M}\frac{\log(T)}{\mu_{(M)}-\mu_{(k)}}$\\
				Decentralized, Col. Sensing &SIC-MMAB \citep{Boursier2019} &$\sum_{k>M}\frac{\log(T)}{\mu_{(M)}-\mu_{(k)}}+MK\log(T)$ \\
				Decentralized, No Sensing &\cite{lugosi2018multiplayer} &$\frac{MK\log(T)}{\Delta^2}$\\
				Decentralized, No Sensing &\cite{lugosi2018multiplayer} &$\frac{MK^2}{\mu_{(M)}}\log^2(T)+MK\frac{\log(T)}{\Delta'}$\\
				Decentralized, No Sensing &ADAPTED SIC-MMAB \citep{Boursier2019} &$\sum_{k>M}\frac{\log(T)}{\mu_{(M)}-\mu_{(k)}}+\frac{M^3K\log(T)}{\mu_{(K)}}\log^2(\log(T))$ \\
				Decentralized, No Sensing &SIC-MMAB2 \citep{Boursier2019} &$M\sum_{k>M}\frac{\log(T)}{\mu_{(M)}-\mu_{(k)}}+\frac{MK^2}{\mu_{(K)}}\log(T)$\\
				Decentralized, No Sensing &EC-SIC \congr{(this paper)}  &$\sum_{k>M}\frac{\log(T)}{\mu_{(M)}-\mu_{(k)}}+( \frac{M^2K}{E\cssr{(\mu_{(K)})}}\log(\frac{1}{\Delta})+\frac{MK}{\mu_{(K)}})\log(T)$ \\
				\hline
			\end{tabular}
		\end{center}
		\begin{center}
		     $K$: number of arms; $M\cssr{\leq}K$: number of players; $\mu_{(j)}$ is $j$-th order statistics of $\mu$; $\Delta := \mu_{(M)}-\mu_{(M+1)}>0$; 
		     %$0<\mu_{\min}\leq \min_{i\in[K]}\mu[i]$; 
		     $\Delta':=\min\{\mu_{(M)}-\mu_{(i)}|\mu_{(M)}-\mu_{(i)}>0\}$; $E(\mu_{(K)})$ is the random coding error exponent.
			 % $\Delta':=\min\{\mu_{(M)}-\mu_{(i)}|\mu_{(M)}-\mu_{(i)}>0\}$ is the positive suboptimality gap; $K$: number of arms; $M\cssr{\leq}K$: number of players.
		\end{center}
		\vspace{-0.2in}
		\label{regret_table}
	\end{table*}  	
	
	%v2
	Recent years have \congr{witnessed} an increased interest in the multi-player multi-armed bandits (MP-MAB) problem, in which multiple players simultaneously play the bandit game and interact with each other through arm collisions. In particular, motivated by the practical application of cognitive radio \citep{anandkumar2011distributed}, \emph{decentralized} stochastic MP-MAB problems have been widely studied. See Section~\ref{sec:related} for a review of the related work.
	
	Since the stochastic MAB problem for a single player is well understood, a predominant approach in decentralized MP-MAB is to let each player play the single-player MAB game while avoiding collisions for as much as possible \citep{liu2010distributed,avner2014concurrent,rosenski2016multi,besson2017multi}. Intuitively, this would allow the algorithm to \congr{behave as the single-player MAB}. Comparing to the centralized MP-MAB, there is a multiplicative factor $M$ (number of players) increase in the regret coefficient \congr{of $\log(T)$}. This has long been considered fundamental due to the lack of communication among players.%, which reduces the learning rate of each player.
	
    Recently, a surprising and inspiring approach, called SIC-MMAB, is proposed in \cite{Boursier2019} for the collision-sensing \congr{MP-MAB} problem. Instead of viewing collision as detrimental, \cite{Boursier2019} purposely instigates collisions as a way to communicate between players. With a careful design, an internal rank can be assigned to each player and arm statistics can be completely shared among players at a communication cost that does not dominate the arm \congr{exploration} regret, which leads to an overall regret approaching the centralized setting. The proposed communication phase transmits the \emph{total reward} by using collision/no-collision to represent bit 1/0. The theoretical analysis shows, for the first time, that the regret of a decentralized MP-MAB algorithm can approach that of the centralized counterpart, which represents a significant progress in \congr{decentralized} MP-MAB.
    
    The no-sensing problem, on the other hand, represents arguably the most difficult setting in MP-MAB and there has been little progress in the literature. \cite{Boursier2019} makes two attempts to generalize the forced collision idea to this setting.  Directly applying SIC-MMAB to this setting leads to a O($\log(T)\log^2(\log(T))$) regret, \congr{which has an additional $\log(T)$ multiplicative coefficient due to no sensing}. In other words, a straightforward application of SIC-MMAB results in the communication loss dominating the total regret. Then, the authors propose a different approach: use communication only to exchange the accepted and rejected arms, thus reducing the regret caused by communication. However, this approach, philosophically speaking, deviates from the core idea of SIC-MMAB and does not fully utilize the communication benefit of collision (arm statistics are not shared among players). This is also the reason why the multiplicative factor $M$ reappears in the regret formula, which was eliminated in the collision-sensing case by SIC-MMAB. It remains an open problem \emph{whether a decentralized MP-MAB algorithm without collision information can approach the performance of its centralized counterpart}.

	In this work, we return to the original idea of utilizing collisions to communicate sampled arm rewards. By modelling no collision information as the \emph{Z-channel} communication problem in information theory, we propose to incorporate optimal error correction coding in the communication phase to control \congr{the error rate of decoding the message}. With this approach, we are able to transmit a \emph{quantized} sample reward mean with a \emph{fixed} length for each player without having the communication loss dominate the total regret. 
	%The resulting asymptotic regret of $\cssr{O(}\sum_{k>M}\frac{\log(T)}{\mu_{(M)}-\mu_{(k)}}+( \frac{M^2K}{E(\cssr{\mu_{(K)}})}\log(\frac{1}{\Delta})+\frac{MK}{\mu_{(K)}})\log(T)\cssr{)}$ improves the $\log(T)$ coefficients over SIC-MMAB2, and represents the best known regret in the no-sensing MP-MAB model to the best of the authors' knowledge.
	The resulting asymptotic regret improves the $\log(T)$ coefficients over SIC-MMAB2, and represents the best known regret in the no-sensing MP-MAB model to the best of the authors' knowledge.
	Table~\ref{regret_table} compares the asymptotic regret upper bounds for different algorithms. We also propose two practical enhancements that significantly improve the algorithm's empirical performance. \congr{Numerical experiments on both synthetic and real-world datasets corroborate the analysis and offer interesting insights into EC-SIC.}

	\vspace{-0.1in}
	\section{The No-Sensing MP-MAB Problem}
	\label{sec:model}
	In the standard (single player) stochastic MAB setting, there are $K$ arms, with rewards $X_i$ of arm $i\in [K]$ sampled independently from \congr{a} distribution $\nu_i$ on $[0,1]$, \congr{where} $\mathbb{E} \left[ X_i \right] = \mu[i]$. At time $t$, a player chooses an arm $\pi(t)$ and the goal is to receive the highest \congr{mean} cumulative reward in $T$ rounds.
	
	In this section, we introduce the no-sensing multiplayer MAB model with a known number of arms $K$ but an unknown number of players $M\leq K$. The horizon $T$ is known to the players. At each time step $t\in [T]$, all the players $j\in [M]$ simultaneously pull the arms $\pi^j(t)$ and receive the reward $r^j(t)$ such that 
    \begin{equation*}
    r^j(t):=X_{\pi^j(t)}(t)(1-\eta_{\pi^j(t)}(t))
    \end{equation*}
	where $\eta_{\pi^j (t)}(t)$ is the collision indicator defined by
	%\vspace{-0.17in}
	%\begin{equation*}
	$\eta_{k}(t):=\mathds{1}_{ | C_k(t) |>1}$
	%\end{equation*}
	with $C_k(t):=\{j\in[M]|\pi^j(t)=k\}$. 
	
	If players can observe both $r^j(t)$ and $\eta_{\pi^j(t)}\cssr{(t)}$, it is the collision-sensing problem. On the other hand, in the no-sensing case as in our paper, players can only access $r^j(t)$, i.e., a reward of $0$ can indistinguishably come from a collision with another player or $X_{\pi^j(t)}(t)=0$.
	Note that if $\mathbb{P}(\forall k\in[K], X_k=0) =0$, the no-sensing and collision-sensing models are equivalent\footnote{We further note that the no-sensing model can be generalized to an arbitrary but bounded reward support where collision results in the lowest value in the support.}.
	
	The performance in the standard single-player MAB setting is usually measured by the regret:
	\vspace{-0.1in}
    \begin{equation*}
    R(T):=T \mu_*-\sum_{t=1}^{T}\mu[\pi(t)],
    \end{equation*}
	where $\mu_*=\max_i\mu[i]$ is the expected reward of the arm with the highest expected reward. As shown in the lower bound by \cite{lai1985asymptotically}, the optimal order of the regret cannot be better than $O(\log(T))$.
	
	In the multiplayer setting, the notion of regret can be generalized and defined with respect to the best allocation of players to arms, as follows:
	\vspace{-0.1in}
	$$R(T):=T\sum_{j\in [M]}\mu_{(j)}-\mathbb{E}\left[\sum_{t=1}^{T}\sum_{j\in [M]}r^j(t) \right],$$
	where $\mu_{(j)}$ is $j$-th order statistics of $\mu$, i.e. $\mu_{(1)}\geq\mu_{(2)}\geq...\geq\mu_{(K-1)}\geq\mu_{(K)}$.
	
	Two technical assumptions are made in this paper, which are also widely used in the literature. The first is a strictly positive lower bound of $\mu_{(K)}$, which has been used by \cite{lugosi2018multiplayer} and \cite{Boursier2019} for the no-sensing model. The second assumption is a finite gap between the optimal and suboptimal (group of) arms; see \cite{avner2014concurrent,kalathil2014decentralized,rosenski2016multi,nayyar2016regret}.% for this assumption.
	\begin{assumption}
	\begin{enumerate}[leftmargin=12pt,topsep=0pt, itemsep=0pt,parsep=0pt]
	\item[1)] %\textbf{{Reward} lower bound:} 
	A positive lower bound $\mu_{\min}$ of $\mu_{(K)}$ is known to all players: $0<\mu_{\min} \leq \min_{i \in [K]} \mu[i]$.
	\item[2)] 
	%\textbf{Finite reward gap:} 
	There exists a positive gap $\Delta \doteq\mu_{(M)}-\mu_{(M+1)}> 0$, and it is known to all players.
	\end{enumerate}
	\label{asp}
	\end{assumption}
	Assumption~1.1 is equivalent to $\forall k\in[K]$, $\mathbb{P}[X_k>0]\geq \mu_{\min}$. This also bounds $\mathbb{P}[X_k=0]$. \cssr{Note that although $\mu_{\min}$ provides a lower bound for $\mu_{(K)}$, Assumption 1.1 does not require the exact value of $\mu_{(K)}$.} The gap in Assumption~1.2 measures the difficulty of the bandit game and ensures the existence of only one optimal choice.
	
	\vspace{-0.1in}
	\section{The EC-SIC Algorithm}
	\label{sec:alg}
	%The proposed EC-SIC algorithm has a similar structure to SIC-MMAB 
	%In this section, an overall structure of our algorithm will first be presented and then the details of each phase will also be covered. 
	The proposed error correction synchronization involving communication (EC-SIC) is compactly described in Algorithm~\ref{alg_overall}. Similar to SIC-MMAB, the overall algorithm can be \congr{structurally} divided into four phases: initialization phase, exploration phase, communication phase, and exploitation phase. It is important to note that all players are synchronized in running EC-SIC, i.e., they enter each phase at the same time (or at least with high probability in some cases) except the exploitation phase. Until a player fixates on a specific arm and enters the exploitation phase, the algorithm keeps iterating between the exploration and communication phases. Players that have (not) entered the exploitation phase are called inactive (active). We denote the set of active players during the $p$-th phase by $[M_p]$ and its cardinality by $M_p$. Similarly, arms that have not been decided to be optimal or sub-optimal are called active. The set of active arms during the $p$-th phase is denoted by $[K_p]$ with cardinality $K_p$. 
	
	\begin{algorithm}[thb]
		\small
		\caption{The EC-SIC Algorithm}
		\label{alg_overall}
		\begin{algorithmic}[1]
			\Require $T,\ K,\ \Delta,\ \epsilon,\ \mu_{\min}$;
			\State Initialize \cssr{$p \gets 1$}; $F \gets -1$; $T_0, T_0^j\gets 0$; $[K_p] \gets [K]$; \textcolor{black}{$Q\gets \max\{\lceil\log_2{\frac{1}{\frac{\Delta}{4}-\epsilon}} \rceil, \lceil\log_2(K+1)\rceil\}$}; $T_c\gets \lceil \frac{\log(T)}{\mu_{\min}}\rceil$
			%\State \textcolor{red}{$N'\gets$ Code length for $Q$ bits with Encoder() }
			\State Select an error-correction code $\left(\cssr{N', Q}\right)$ with code length $N'$ defined in Theorem~\ref{regret_overall_theorem}
			\Statex\LeftComment{\texttt{Initialization Phase:}}
			\State $k \gets$ Musical\_Chair($[K]$, $KT_c$)
			\State $(M,\ j) \gets$ Estimate\_M\_NoSensing($\cssr{k}$, $T_c$)
			\While{$F=-1$}
			\Statex \LeftComment{\texttt{Exploration Phase:}}
			\State $\pi \gets j$-th active arm
			\For {\textcolor{black}{$K_p 2^p \lceil \log(T) \rceil $ time steps}}
			\State$\pi \gets \pi+1\ (\text{mod }K_p)$ and play arm $\pi$
			\State $s[\pi]\gets s[\pi]+\cssr{r^j(t)}$
			\EndFor
			\State $T_p^j=T_{p-1}^j+2^p\lceil\log(T)\rceil$
			\State \textcolor{black}{$\hat{\mu}_j=s/T_p^j$}
			\Statex \LeftComment{\texttt{Communication Phase:}}
			\If {$j=1$} ($F$, $M_{p+1}$, [$K_{p+1}$]) $\gets$ \textcolor{black}{Communication Leader}($\textcolor{black}{\cssr{\hat{\mu}_1}},\ p,\ [K_p],\ M_p,\ Q,\ N'$)
			\Else \ ($F$, $M_{p+1}$, [$K_{p+1}$]) $\gets$ \textcolor{black}{Communication Follower}($\textcolor{black}{\hat{\mu}_j},\ j,\ p,\ [K_p],\ M_p,\ Q,\ N'$)
			\EndIf
			\State $p\gets p+1$
			\EndWhile
			\State \LeftComment{\texttt{Exploitation phase: }}Pull $F$ until $T$
		\end{algorithmic}
	\end{algorithm}

	\subsection{Initialization phase}
	The same structure of \congr{the} initialization phase as \cite{Boursier2019} is used in EC-SIC, which outputs an internal rank $j\in\{1,......,M\}$ for each player as well as the \congr{estimated} value of $M$. It starts with a ``\congr{Musical} Chair'' phase and is followed by a so-called Sequential Hopping protocol. The full procedure is described in Appendix~\ref{appendix_proof_init} for completeness. %For the sake of completeness, it is described in details in Appendix~\ref{appendix_proof_init}. The internal ranks generated here are the key to keep synchronization in the later exploration and communication phase.
	
	\subsection{Exploration phase}
	During the $p$-th exploration phase, active players sequentially hop among the active arms for $K_p2^p\lceil\log(T)\rceil$ steps, and any active arm is pulled $2^p\lceil\log(T)\rceil$ times by each active player. Since the hopping is based on each player's internal rank, the exploration phase is collision-free. 
	
	We note that the length of an exploration phase is different from \cite{Boursier2019}, which is a key component of the performance improvement. The difference of a $\lceil\log(T)\rceil$ factor, in fact, results in an overall $\cssr{O(}\log(\log(T))\cssr{)}$ rounds of exploration and communication phases in the ADAPTED SIC-MMAB algorithm of \cite{Boursier2019}. This directly leads to a dominating communication loss that breaks the order-optimality. With an expansion of length by $\lceil\log(T)\rceil$ in EC-SIC, the overall rounds become a constant, \congr{and the communication regret can be better controlled as shown in Section~\ref{sec:analysis}.}%which lays one of the foundations of an order-optimal communication loss. 
	
	% This is different to \citep{Boursier2019} and a key step in the performance improvement. 

	\begin{algorithm}[htb]
		\small
		\caption{Communication Leader}
		\label{alg_comm_leader}
		\begin{algorithmic}[1]
			\Require $\hat{\mu}_1,\ p,\ [K_p],\ M_p,\ Q,\ N'$
			\Ensure $F$, $M_{p+1}$, [$K_{p+1}$]
			\State Initialize $T_p=T_{p-1}+M_p2^p\lceil\log(T)\rceil$; \cssr{$T_p^i= T_p^1$, $i=1,...,M_p$; $\bar{\mu}_i^p=\bar{\mu}^{p-1}_i$, $T_p^i=T_{p-1}^i$, $i=M_p+1,...,M$}
            \Statex \LeftComment{\texttt{Gather information from followers:}}
			\For {$i=2,...,M_p$}\Comment{\cssr{\texttt{Receive arm statistics}}}
			\For {$k\in[K_p]$}
			%\State $\boldsymbol{m}\gets$ Receive($j$, $i$, $N$)
			%\State $\bar{\mu}^p_{i}[k]\gets$ Decoder($\boldsymbol{m}$) 
			\State $\textcolor{black}{\bar{\mu}^p_{i}[k]}\gets$ \textcolor{black}{Decoder}(Receive(\cssr{$1$}, $i$, $N'$)) % $*\ \cssr{T^1_p/T_p}$
			\EndFor 
			\EndFor
			\State $\bar{\mu}^{p}=\sum_{i=1}^{M}\bar{\mu}^p_{i}\cssr{\cdot T_p^i/T_p}$; \textcolor{black}{$B_{T_p}=\sqrt{\frac{2\log(T)}{T_p}}+(\frac{\Delta}{4}-\epsilon)$}
			\Statex \LeftComment{\texttt{Update statistics:}}
			%\State $B_{T_p}=\sqrt{\frac{2\log(T)}{T_p}}+(\frac{\Delta}{4}-\epsilon)$
			\State Rej $\gets$ set of active arms $k$ \congr{satisfying} $|\{i\in[K_p]|\bar{\mu}^p[i]-B_{T_p}\geq \bar{\mu}^p[k]+B_{T_p}\} |\geq M_p$ 
			\State Acc $\gets$ set of active arms $k$ \congr{satisfying} $| \{i\in[K_p]|\bar{\mu}^p[k]-B_{T_p}\geq \bar{\mu}^p[i]+B_{T_p}\} |\geq K_p-M_p$, ordered according to their indices
			\Statex \LeftComment{\texttt{Transmit acc$\backslash$rej arms to followers:}}
			%\State $\boldsymbol{N_{\text{rej}}},\boldsymbol{N_{\text{acc}}}\gets$ \textcolor{black}{Encoder}($|\text{Rej}|$, $Q$), ($|\text{Acc}|$, $Q$)
			\For {$i=2,..., M_p$}\Comment{\cssr{\texttt{Send acc$\backslash$rej set size}}}
			\State Send(\cssr{$1$}, $i$, $N'$, \textcolor{black}{Encoder}($|\text{Rej}|$, $Q$))
			\State Send(\cssr{$1$}, $i$, $N'$, \textcolor{black}{Encoder}($|\text{Acc}|$, $Q$))
			\EndFor
			\For {$i=2,...,M_p$}\Comment{\cssr{\texttt{Send acc$\backslash$rej set content}}}
			\State Send(\cssr{$1$}, $i$, $N'$, Encoder($k$, $Q$)) for $k\in \text{Rej}$
			%\EndFor
			%\For {$i=2,...,M_p$}
			\State Send(\cssr{$1$}, $i$, $N'$, Encoder($k$, $Q$)) for $k\in \text{Acc}$
			\EndFor
			%\For {$k\in [\text{Rej}]$}\ $\boldsymbol{w}$ $\gets$ \textcolor{black}{Encoder}(Rej[$k$], $Q$)
			%\For {$i=2,...,M_p$}\ Send($j$, $i$, $N'$, $\boldsymbol{w}$) 
			%\EndFor
			%\EndFor
			%\For {$k\in [\text{Acc}]$}\ $\boldsymbol{w}$ $\gets$ \textcolor{black}{Encoder}(Rej[$k$], $Q$)
			%\For {$i=2,...,M_p$}\ Send($j$, $i$, $N'$, $\boldsymbol{w}$)
			%\EndFor
			%\EndFor
			\If {$\cssr{M_p} \leq |\text{Acc}|$} 
			\State $F \gets \text{Acc}[\cssr{M_p}]$
			\Else \ $M_p\gets M_p-|\text{Acc}|$
			\State $[K_{p+1}]\gets[K_p]\backslash(\text{Acc}\cup\text{Rej})$
			\EndIf
		\end{algorithmic}
	\end{algorithm}
	
	\begin{algorithm}[htb]
		\small
		\caption{Communication Follower}
		\label{alg_comm_follower}
		\begin{algorithmic}[1]
			\Require $\hat{\mu}_j,\ j,\ p,\ [K_p],\ M_p,\ Q,\ N'$
			\Ensure $F$, $M_{p+1}$, [$K_{p+1}$]
			\Statex \LeftComment{\texttt{Transmit information to the leader:}}
			\For {$i=2,...,M_p$}\Comment{\cssr{\texttt{Send arm statisitcs}}}
			\If {$j=i$}
			\State Send($j$, $1$, $N'$, \textcolor{black}{ Encoder}(\textcolor{black}{$\hat{\mu}_j[k]$}, \cssr{$Q$})) for $k\in[K_p]$
			\Else  \ pull the $j$-th active arm for $K_pN'$ steps
			\EndIf
			\EndFor
			\Statex\LeftComment{\texttt{Receive acc$\backslash$rej arms from the leader:}}
			\For {$i=2,...,M_p$}\Comment\cssr{{\texttt{Receive acc$\backslash$rej set size}}}
			\If {$j=i$} 
			\State $N_{\text{rej}}\gets$ \textcolor{black}{Decoder}(Receive($j$, $1$, $N'$))
			\State $N_{\text{acc}}\gets$ \textcolor{black}{Decoder}(Receive($j$, $1$, $N'$))
			%\State $\boldsymbol{N_{rej}}\gets$ Receive($j$, $1$, $N$)
			%\State $N_{rej}\gets$ Decoder($\boldsymbol{N_{rej}}$)
			%\State $\boldsymbol{N_{acc}}\gets$ Receive($j$, $1$, $N$) 
			%\State $N_{acc}\gets$ Decoder($\boldsymbol{N_{acc}}$)
			\Else \ pull $j$-th active arm for $2N'$ steps
			\EndIf
			\EndFor
			%\State {Rej} $\gets \text{Zeros}(N_{\text{rej}})$, Acc $\gets \text{Zeros}(N_{\text{acc}})$
			%\For {$k=1,..,N_{rej}$}
			\For {\cssr{$i=2,...,M_p$}}\Comment{\cssr{\texttt{Receive acc$\backslash$rej set content}}}
			\If {$j=i$}
			\State $\boldsymbol{w}[k]\gets$Receive($j,1,N'$)) and %for $k=1,...,N_{\text{rej}}$
			\State Rej[$k$] $\gets$ \textcolor{black}{Decoder}($\boldsymbol{w}[k]$) for $k=1,...,N_{\text{rej}}$
			%\Else \ pull $j$-th active arm for $N_{\text{rej}}N'$ steps
			%\EndIf
			%\EndFor
			%\EndFor
			%\For {$k=1,..,N_{\text{acc}}$}
			%\For {$i=2,...,M_p$}
			%\If {$j=i$}
			%\State $\boldsymbol{w}\gets$ Receive($j$, $1$, $N'$)
			\State $\boldsymbol{w}[k]\gets$Receive($j,1,N'$)) and %for $k=1,...,N_{\text{acc}}$
			\State Acc[$k$] $\gets$ \textcolor{black}{Decoder}($\boldsymbol{w}[k]$) for $k=1,...,N_{\text{acc}}$
			\Else \ pull $j$-th active arm for $(N_{\text{rej}}+N_{\text{acc}})N'$ steps
			\EndIf
			%\EndFor
			\EndFor
			%\If {$M_p-j+1 \leq \text{length(Acc)}$} 
			\If {$M_p-j+1 \leq |\text{Acc}|$} 
			\State $F \gets \text{Acc}[M_p-j+1]$
			\Else \ $M_p\gets M_p-|\text{Acc}|$
			\State $[K_{p+1}]\gets[K_p]\backslash(\text{Acc}\cup\text{Rej})$
			\EndIf
		\end{algorithmic}
	\end{algorithm}

	\subsection{Communication phase}
	In the communication phase, all players attempt to exchange their sampled reward information in a synchronized and distributed manner. The communication \congr{takes place} via a careful collision design. All players enter this phase synchronously and, by default, keep pulling different arms based on their internal ranks. Then, when it is player $i$'s turn to communicate with player $j$, she would purposely pull (not pull) player $j$'s arm as a way to communicate bit 1 (0). If player $j$ can fully access the collision information, i.e., knowing whether collision happens or not at each time step, she will be able to receive the bit sequence successfully, which conveys player $i$'s sample reward statistics. However, for the no-sensing model, such error-free communication becomes impossible.
	
	Three new ideas are used in the communication phase of EC-SIC. The first is the introduction of \emph{Z-channel coding}. 	
	In the no-sensing scenario, players cannot directly identify collision. If the same communication protocol in \cite{Boursier2019} (representing $1$ or $0$ by collision or no collision) is used, the confusion may mislead the player to believe that collision has occurred (bit $1$) while it is actually a null statistic of reward sampling (bit $0$). This error has a catastrophic consequence in that it breaks the essential synchronization between players. We are thus facing the challenge of communicating the reward statistics to other users while controlling the error rate for the overall communication loss to \emph{not} dominate the regret.
	
	\begin{figure}[thb]
		\vspace{-0.1in}
		\centering
		\includegraphics[width=0.85 \linewidth]{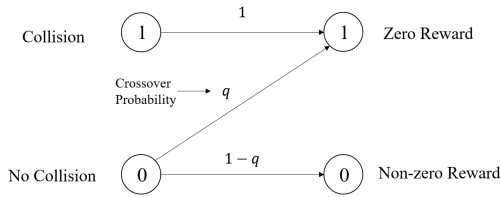}
		\caption{The Z-channel model}
		\label{zchannel}
		\vspace{-0.15in}
	\end{figure}
	
	Luckily, this is the well-known \emph{reliable communication over a noisy channel} problem, one of the foundations in information theory. In particular, our communication channel is {asymmetric}: $1$ (collision) is always received correctly and $0$ (no collision) may be received incorrectly with a certain probability $P(X_k=0)$. This corresponds to the \textbf{Z-channel} model (see Fig.\ref{zchannel}) in information theory \citep{Tallini2002}, which represents a broad class of asymmetric channels. The Z-channel has a crossover probability \cssr{$q$} of $0 \rightarrow 1$ that corresponds to $P(X_k=0)$\footnote{\cssr{Since the crossover probability $P(X_k=0)$ is unknown and varies for different arm $k$, $1-\mu_{\min}$ is used to capture the worst case.}}. 
	
	The Z-channel capacity is derived in \cite{Tallini2002} as follows.
	\begin{theorem}
		The capacity $C_Z(\cssr{q})$ of a Z-channel with crossover $0\to 1$ probability $\cssr{q}$ is:
		%\vspace{-0.17in}
		\begin{equation}
		\cssr{C_Z(q)=\log_2(1+(1-q)q^{q/(1-q)})}.
		\label{eqn:zchcap}
		\end{equation}
	\end{theorem}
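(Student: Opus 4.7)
The plan is to compute the capacity from the definition $C_Z(q)=\max_{p}I(X;Y)$, where $p=\Pr(X=1)$ and the Z-channel is specified by $\Pr(Y=1\mid X=1)=1$ and $\Pr(Y=1\mid X=0)=q$. First I would compute the output distribution: $\Pr(Y=0)=(1-p)(1-q)$ and $\Pr(Y=1)=1-(1-p)(1-q)$. Writing $\alpha\doteq(1-p)(1-q)$, the output entropy is the binary entropy $h(\alpha)$. On the conditional side, $H(Y\mid X=1)=0$ because the channel is deterministic on input $1$, and $H(Y\mid X=0)=h(q)$, so $H(Y\mid X)=(1-p)h(q)$. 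Hence $I(X;Y)=h(\alpha)-(1-p)h(q)$.

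Next I would maximize over $p\in[0,1]$. Using $\tfrac{d}{d\alpha}h(\alpha)=\log_2\tfrac{1-\alpha}{\alpha}$ and $\tfrac{d\alpha}{dp}=-(1-q)$, the first-order condition reads
\begin{equation*}
(1-q)\log_2\frac{1-\alpha^\ast}{\alpha^\ast}=h(q),
\end{equation*}
which gives $(1-\alpha^\ast)/\alpha^\ast = 2^{h(q)/(1-q)}$. Writing $z\doteq 2^{h(q)/(1-q)}$, this yields $\alpha^\ast=1/(1+z)$ and $1-\alpha^\ast=z/(1+z)$, and correspondingly $1-p^\ast=\alpha^\ast/(1-q)$. (Concavity of $I(X;Y)$ in $p$ for fixed channel, which is standard, ensures this stationary point is the global maximum, and one checks $p^\ast\in(0,1)$ whenever $q\in(0,1)$.)

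Substituting back, $h(\alpha^\ast)=\log_2(1+z)-\tfrac{z}{1+z}\log_2 z$ and $(1-p^\ast)h(q)=\tfrac{h(q)}{(1+z)(1-q)}=\tfrac{\log_2 z}{1+z}$, so after cancellation
\begin{equation*}
C_Z(q)=\log_2(1+z)-\log_2 z=\log_2\!\left(1+z^{-1}\right)=\log_2\!\left(1+2^{-h(q)/(1-q)}\right).
\end{equation*}
The final step is the algebraic identity $2^{-h(q)/(1-q)}=(1-q)\,q^{q/(1-q)}$, which follows by expanding $-h(q)/(1-q)=\tfrac{q}{1-q}\log_2 q+\log_2(1-q)$ and exponentiating. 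Plugging this in yields exactly \eqref{eqn:zchcap}.

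I expect the only non-routine step to be recognizing and verifying that compact identity; the rest is bookkeeping of binary entropies. Boundary behaviour ($q\to 0$ gives $C_Z\to 1$, $q\to 1$ gives $C_Z\to 0$ since $q^{q/(1-q)}\to e^{-1}\cdot 0$ in the relevant sense) serves as a sanity check on the final formula.
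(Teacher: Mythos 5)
Your derivation is correct: the mutual information decomposition $I(X;Y)=h\bigl((1-p)(1-q)\bigr)-(1-p)h(q)$ is the right expression for this orientation of the Z-channel, the first-order condition $(1-q)\log_2\frac{1-\alpha^\ast}{\alpha^\ast}=h(q)$ is what concavity of $I$ in the input law requires, and the cancellation giving $C_Z(q)=\log_2\bigl(1+2^{-h(q)/(1-q)}\bigr)$ together with the identity $2^{-h(q)/(1-q)}=(1-q)\,q^{q/(1-q)}$ (which follows exactly as you say, by expanding $-h(q)/(1-q)=\tfrac{q}{1-q}\log_2 q+\log_2(1-q)$) reproduces \eqref{eqn:zchcap}. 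Note, however, that the paper does not prove this statement at all: it imports the capacity formula from \cite{Tallini2002} as a known result, so there is no in-paper argument to compare against. What your route buys is a self-contained, elementary verification by direct maximization of $I(X;Y)$ over $p=\Pr(X=1)$, at the cost of the bookkeeping you acknowledge; the only cosmetic slip is in your sanity check at $q\to 1$, where the correct statement is that $q^{q/(1-q)}\to e^{-1}$ while the prefactor $(1-q)\to 0$, so the product $(1-q)q^{q/(1-q)}\to 0$ and hence $C_Z\to 0$.
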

	Shannon theory guarantees that as long as the coding rate $R$ is below $C_Z(\cssr{q})$ in Eqn.~\eqref{eqn:zchcap}, there exists at least one code that allows for an arbitrarily low error rate asymptotically. This means that theoretically, for this Z-channel, it is possible to transmit information nearly \congr{error-free} when the rate is close to $C_Z(\cssr{q})$ bits per channel use.  In reality, however, different finite block-length channel codes may have different performances; we thus evaluate several practical codes both theoretically (in Section \ref{sec:analysis}) and experimentally (in Section \ref{sec:sim}).  For simplicity, Functions $\texttt{Send()}$, $\texttt{Receive()}$, $\texttt{Encoder()}$ and $\texttt{Decoder()}$ are used in the algorithm as the sending and receiving protocol and the encoder and corresponding decoder, respectively.

	The second enhancement is to \emph{transmit each arm's quantized sample reward mean with a fixed length}. The reason not to use the \emph{total} reward as \cite{Boursier2019} is that the gradually increasing total reward leads to a message length $O(\log(\log(T)))$, which cannot be transmitted efficiently in the no-sensing case with $O(\log(T))$ bits. However, with a finite gap, a less precise statistics sharing is tolerable as long as it does not affect the choice of optimal arms. For a quantized sample mean of length $Q$, the error is at most $2^{-Q}$. We thus control the length $Q$ such that $2^{-Q}<\frac{\Delta}{4}-\epsilon$, where $\epsilon\in(0,\frac{\Delta}{4})$ is a pre-defined constant. By constructing $B_s=\sqrt{\frac{2\log(T)}{s}}+(\frac{\Delta}{4}-\epsilon)$ as the confidence bound, analysis in Section~\ref{sec:analysis} shows that acceptation and rejection maintain a high probability of success.
	
	Lastly, compared to the mesh-structured communication in \cite{Boursier2019}, it is more efficient to form a tree structure that one player (``leader'') gathers all statistics and makes decisions for others (``followers''). The player with internal rank $1$ becomes the leader and the rest become the followers \citep{kaufmann2019new}. Statistics of arms are transmitted from followers to the leader. The leader decides the set of arms to be accepted or rejected \cssr{by comparing their upper confidence bounds and lower confidence bounds with each other}, and sends back to the followers. Upon reception, active players either enter another iteration of exploration and communication, or begin exploitation. This process utilizes reward statistics from all players and has better communication efficiency. Procedures for the leader and followers are given in Algorithm \ref{alg_comm_leader} and \ref{alg_comm_follower}, respectively.
		
	\vspace{-0.1in}
	\section{Theoretical Analysis}
	\label{sec:analysis}
	The overall regret of EC-SIC can be decomposed as $R(T)=R^{init}+R^{expl}+R^{comm}$. The first, second and third term refers to the regret caused by the initialization, exploration, and communication phase, respectively. The main result is presented in Theorem \ref{regret_overall_theorem}, and each component regret is subsequently analyzed. Detailed proofs can be found in Appendix~\ref{appendix_proof}.
	
	\begin{theorem}
		\label{regret_overall_theorem}
		With an optimal coding technique that achieves Gallager's error exponent $E(\mu_{\text{min}})$ for the corresponding Z-channel with crossover probability $1-\mu_{\text{min}}$, for any $\epsilon\in (0,\frac{\Delta}{4})$, we have
		\vspace{-0.07in}
		\begin{equation}
		\small
		\begin{aligned}
		%&R = R^{init}+R^{expl}+R^{comm} \leq c_1 MK\lceil\frac{\log(T)}{\mu_{\min}}\rceil\\
		&R(T) \leq c_1 MK\frac{\log(T)}{\mu_{\min}}\\
		&+ c_2 \frac{\Delta}{4\epsilon} \left (\sum_{k>M} {\min} \left \{\frac{\log(T)}{\mu_{(M+1)}-\mu_{(k)}+4\epsilon},\sqrt{T\log(T)} \right \} \right ) \\
		&+ c_3 N'\left  (M^2(K+2)\log \left ( \min \left \{\frac{1}{4\epsilon},\cssr{T} \right \} \right) +M^2K \right)
		\end{aligned}
		\label{eqn:regretoverall}
		\end{equation}
		where $c_1$, $c_2$ and $c_3$ are constants and $N'=\max\{\frac{Q}{\cssr{C_Z(1-\mu_{\min})}}, \frac{1}{E(\mu_{\text{min}})}\log(T)\}$. %In addition, all sub-optimal coding techniques lead to a factor $\frac{1}{\alpha} > \frac{1}{E(R)}$
	\end{theorem}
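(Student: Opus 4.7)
The plan is to follow the standard decomposition $R(T) = R^{init} + R^{expl} + R^{comm}$ used in the SIC-MMAB analysis of \cite{Boursier2019} and to control each piece on the intersection of two high-probability ``good events'': (i) the Hoeffding event that every empirical arm mean $\hat{\mu}_j[k]$ stays within $\sqrt{2\log(T)/T_p^j}$ of $\mu[k]$ throughout the run, and (ii) the coding event that every Z-channel codeword exchanged between leader and followers is decoded correctly by its intended recipient. By the choice $N' \geq \log(T)/E(\mu_{\min})$ and Gallager's random coding bound, each codeword fails with probability at most $\exp(-N' E(\mu_{\min})) \leq 1/T$, and a union bound over the $O(M^2 K P)$ transmissions made during the entire run keeps the complementary event contributing only $O(1)$ regret, which is folded into the constants. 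The initialization bound $R^{init} \leq c_1 MK\log(T)/\mu_{\min}$ is inherited directly from \cite{Boursier2019}: Musical Chair succeeds in $KT_c = K\lceil\log(T)/\mu_{\min}\rceil$ steps with high probability under Assumption~1.1, and the player-counting step adds another $T_c$ rounds, during which each of the $M$ players loses at most unit regret per round.

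For $R^{expl}$, on event~(i) and combined with the quantization error (at most $\Delta/4 - \epsilon$ by the construction of $Q$), the aggregated leader statistic satisfies $|\bar{\mu}^{p}[k] - \mu[k]| \leq B_{T_p}$. Consequently the acceptance/rejection rules classify every arm $k > M$ correctly once the confidence radius $B_{T_p}$ falls below a fixed fraction of the effective gap $\mu_{(M+1)} - \mu_{(k)} + 4\epsilon$, which inverts to a sample complexity of order $\log(T)/(\mu_{(M+1)} - \mu_{(k)} + 4\epsilon)^2$ per arm. Multiplying by the per-pull suboptimality gap yields the $\log(T)/(\mu_{(M+1)} - \mu_{(k)} + 4\epsilon)$ term; the $\sqrt{T\log T}$ alternative is the trivial horizon cap that dominates when the effective gap is too small. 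The $\Delta/(4\epsilon)$ prefactor tracks the maximum number of exploration-communication doubling rounds, $P \leq \lceil \log_2(1/(4\epsilon)) \rceil + 1$, after which the sampling part of $B_{T_p}$ is dominated by the quantization floor and further exploration cannot shrink the uncertainty.

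The communication regret $R^{comm}$ is obtained by tallying, for each phase $p$, the $K_p$ reward-statistic codewords transmitted by each of the $M_p - 1$ followers, the two set-size codewords returned by the leader, and the $|\mathrm{Rej}| + |\mathrm{Acc}|$ content codewords, each of length $N'$ channel uses, with $M_p$ players simultaneously occupied (either transmitting, receiving, or idling on a fixed active arm); summing over the $P$ phases and using $\sum_p (|\mathrm{Rej}_p| + |\mathrm{Acc}_p|) \leq K$ together with $P \leq \log(\min\{1/(4\epsilon), T\})$ produces the third term of~\eqref{eqn:regretoverall}. The main obstacle, and the reason this fixed-length quantized protocol succeeds where the growing-length protocol of ADAPTED SIC-MMAB does not, is that a single mis-decoded codeword desynchronizes the leader and some follower on the active arm set and can corrupt the entire remainder of the run; the error-exponent calibration of $N'$ is precisely what drives the desynchronization probability below $1/T$ after a union bound over all transmissions, so that it can be safely absorbed into a constant additive regret.
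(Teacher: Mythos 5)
Your overall architecture matches the paper's proof: decompose $R(T)=R^{init}+R^{expl}+R^{comm}$, condition on a typical event combining correct initialization, correct decoding of every codeword (via Gallager's bound with $N'\geq \log(T)/E(\mu_{\min})$), and Hoeffding-plus-quantization confidence bounds, then tally the communication cost phase by phase. However, two steps as written do not hold. First, the atypical event does not contribute ``$O(1)$ regret'': each codeword fails with probability at most $1/T$, but the union bound over the $O(MK\log T)$ transmissions (together with the initialization and Hoeffding events) gives a failure probability of order $MK\log(T)/T$, and on failure the regret can only be bounded by $MT$, so the atypical contribution is $O(M^2K\log T)$. This is not a constant; the paper absorbs it into the third term of \eqref{eqn:regretoverall} precisely because $N'\geq\log(T)/E(\mu_{\min})$ makes $c_3 N' M^2K$ at least of that order. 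To genuinely get $O(1)$ you would need per-codeword error probability of order $1/(MKT\log T)$, i.e., a longer code than the one you calibrated.

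Second, your exploration bound is incomplete and the $\Delta/(4\epsilon)$ prefactor is mis-attributed. It does not come from the number of doubling rounds (that number, $O(\log\min\{1/(4\epsilon),T\})$, enters only the communication term); it comes from converting the pull count $O(\log T/(\mu_{(M+1)}-\mu_{(k)}+4\epsilon)^2)$ of a suboptimal arm into regret by multiplying by the true gap $\mu_{(M)}-\mu_{(k)}$ and bounding the ratio $(\mu_{(M)}-\mu_{(k)})/(\mu_{(M+1)}-\mu_{(k)}+4\epsilon)\leq\Delta/(4\epsilon)$, which is exactly the paper's Lemma~\ref{regret_decom_lemma}, part 1; simply ``multiplying by the per-pull suboptimality gap'' does not yield the $\log(T)/(\mu_{(M+1)}-\mu_{(k)}+4\epsilon)$ form without this ratio argument. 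Moreover, the multiplayer regret decomposition has a second component, $\sum_{k\leq M}(\mu_{(k)}-\mu_{(M)})(T^{expl}-T^{expl}_{(k)})$, the loss from optimal arms not being exclusively played while exploration of the remaining active arms continues; your plan never addresses it, whereas the paper needs a separate, nontrivial argument (its Lemmas~\ref{appendix_lemma_part1} and \ref{appendix_lemma_part2}, a phase-by-phase summation using $T_{p+1}\leq 3T_p$) to show it is $O(\sum_{k>M}\min\{\log T/(\mu_{(M+1)}-\mu_{(k)}+4\epsilon),\sqrt{T\log T}\})$. Your initialization and communication tallies are otherwise consistent with the paper (up to the minor point that the player-estimation protocol lasts $2KT_c$, not $T_c$, rounds, which does not change the order).
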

	Theorem~\ref{regret_overall_theorem} involves an information-theoretic concept called \emph{error exponent}, which is explained in Theorem~\ref{coding_theorem} in Section~\ref{regret_explo} but more details can be found in \citep{gallager1968information}.
	
	An asymptotic upper bound can be obtained from \eqref{eqn:regretoverall} with $\epsilon=\frac{\Delta}{8}$:
	\vspace{-0.07in}
	\begin{equation}
	\small
	\label{eqn:regretasy}
	\resizebox{.49 \textwidth}{!}{
	$
	R(T) =O \left (\sum_{k>M}\frac{\log(T)}{\mu_{(M)}-\mu_{(k)}}+(\frac{M^2K\log(\frac{1}{\Delta})}{E(\mu_{\min})}+\frac{MK}{\mu_{\min}})\log(T) \right ).
	$
	}
	\end{equation}
	Compared to SIC-MMAB2, we have successfully removed the multiplicative factor of $M$ in the first $\log(T)$ term. This is due to the efficient communication phase that transmits the reward statistics. In addition, we have a $M^2K$ factor in the second $\log(T)$ term, as opposed to $MK^2$ in SIC-MMAB2. This is also an improvement since $M<K$. \cssr{We also note that Eqn. \eqref{eqn:regretoverall} and Eqn. \eqref{eqn:regretasy} hold when $\mu_{(\min)}$ is replaced by $\mu_{(K)}$.} 
	
	\congr{To prove Theorem~\ref{regret_overall_theorem}, we first define the ``typical event'' as the success of initialization, communication and exploration throughout the entire horizon $T$. More specifically, we define three events: $A_1=\{$each player has a correct estimation of $M$ and an orthogonal internal rank after initialization$\}$; $A_2=\{$messages are decoded correctly in all communication phases$\}$; $A_3=\{|\bar\mu^p[k]-\mu[k]|\leq B_{T_p} \text{ holds for phase $p$}, \forall k\in[\cssr{K_p}], \forall p \}$. We use $P_s$ to denote the probability that the typical event happens, which is $A_1 \cap A_2 \cap A_3$. The regret caused by the ``atypical event'' can be simply bounded by a linear regret $O(MT)$. Then the result of \eqref{eqn:regretoverall} can be proved by controlling $P_s$ to balance both events.  }
	
% 	Probability $P_s$ corresponds to the probability that the typical event happens, which has three components: $A_1=\{$At the end of the initialization phase, each player has a correct estimation of $M$ and an orthogonal internal rank$\}$; $A_2=\{$All the messages are decoded correctly$\}$; $A_3=\{\forall k\in[K],|\bar\mu^p[k]-\mu[k]|\geq B_{T_p}\text{ holds for all exploration-communication phases}\}$.
	%Probability $P_s$ corresponds to the event that initialization, communications and arm statistics estimations are all successful. 
	%Lemma \ref{regret_explo_lemma} conditions on three events: a successful initialization, good communications, and correct estimations of arms. 
% 	Lemma \ref{regret_init_lemma} provides a guarantee for event $A_1$ to happen; the other two are analyzed by Lemma \ref{regret_good_comm} and Lemma \ref{regret_correct_ar}, respectively. 

%	\cssr{The following proofs are conditioned on the success of initialization, communication and exploration, which is referred as the typical event. The probability that the typical event happens is analyzed in three parts and guaranteed to be dominating. In the case that the atypical event happens, the regret is assumed to be linear, but under a low probability, it has a marginal effect on the overall regret.}

	\subsection{Initialization phase}
	Similar to Lemma 11 in  \cite{Boursier2019}, we can bound the regret of initialization as follows.
	\begin{lemma}\label{regret_init_lemma}
		With probability $P_i=1-O(\frac{MK}{T})$, event $A_1$ happens.
		%at the end of the initialization phase, each player has a correct estimation of $M$ and an orthogonal internal rank $j$. 
		Furthermore, the regret of the initialization phase satisfies: 
		%\vspace{-0.07in}
		\begin{equation*}
		\small
		R^{init}<3MK\left \lceil\frac{\log(T)}{\mu_{\min}} \right \rceil.
		\end{equation*}
	\end{lemma}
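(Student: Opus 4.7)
The plan is to decompose the initialization into its two components --- the Musical Chair sub-phase of length $KT_c$ and the Sequential Hopping sub-phase used by \texttt{Estimate\_M\_NoSensing} --- and bound the failure probability and accumulated regret of each, then combine them via a union bound. Since the overall structure closely parallels Lemma 11 of \cite{Boursier2019}, the real work is in adapting the observation model to the no-sensing setting, in which a zero reward ambiguously represents either a collision or an honest $X_k=0$ draw. In particular, I need to show that neither the Musical Chair fix-on rule nor the collision-counting used for estimating $M$ is broken by this ambiguity, up to a controlled failure probability.

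For the Musical Chair analysis, I would exploit the crucial one-sidedness of the no-sensing model: observing a \emph{strictly positive} reward on arm $k$ still unambiguously implies that no collision occurred there, so the fix-on decision is sound. As long as at least one arm remains unclaimed, the per-step probability that a given player fixes on a fresh arm is at least $\Omega(\mu_{\min}/K)$ by Assumption~1.1, which after $KT_c = K\lceil \log(T)/\mu_{\min}\rceil$ steps drives the per-player failure probability below $T^{-\Omega(1)}$. A union bound over the $M$ players then gives the $1 - O(MK/T)$ guarantee. For the subsequent Sequential Hopping sub-phase, each player cyclically visits all arms and counts zero-reward observations; she must separate the contribution of collisions from the intrinsic $\mathbb{P}[X_k=0]$ mass, which I would handle by a Chernoff bound on the Bernoulli variables $\mathds{1}\{X_k>0\}$, whose mean is lower-bounded by $\mu_{\min}$. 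With $T_c = \lceil \log(T)/\mu_{\min}\rceil$ samples per arm, the estimator recovers $M$ exactly up to an $O(K/T)$ per-player failure probability, which folds into the same $O(MK/T)$ bound and finishes the analysis of event $A_1$.

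The regret bound is then essentially a length argument that does not require $A_1$ to hold: the whole initialization lasts at most $3KT_c$ time steps, and the instantaneous regret is trivially bounded by $M$ since each of the $M$ players loses at most $1$ per round relative to the oracle allocation. Multiplying gives $R^{init} < 3MK\lceil\log(T)/\mu_{\min}\rceil$ deterministically. The main technical obstacle I expect is the conditioning structure in the Musical Chair step: the effective per-step fix probability for a player depends on the random fix-times of the other players and on whether their reward draws happened to be zero. However, since additional fixes by other players only reduce the set of free arms (and, because $M \le K$, at least one free arm always remains available to an unfixed player), a pessimistic coupling that lower-bounds the fix probability by $\mu_{\min}/K$ at every step suffices, mirroring the argument in \cite{Boursier2019} without further complications.
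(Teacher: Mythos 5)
Your proposal is correct and follows essentially the same route as the paper: bound the Musical Chair failure by $(1-\mu_{\min}/K)^{KT_c}\leq 1/T$ per player, bound false ``collision'' detections in \texttt{Estimate\_M\_NoSensing} by $(1-\mu_{\min})^{T_c}\leq 1/T$ per block, union bound over players and the $2K$ blocks, and bound the regret deterministically by the $3KT_c$ duration times the per-step loss $M$. The only cosmetic remark is that for the all-zeros event the direct product bound $(1-\mu_{\min})^{T_c}$ is what yields the $1/T$ rate with $T_c=\lceil\log(T)/\mu_{\min}\rceil$ (a generic multiplicative Chernoff bound would lose a factor of $2$ in the exponent), but this does not change the argument.
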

	
	\subsection{Exploration phase}\label{regret_explo}
	The regret due to exploration is bounded in Lemma~\ref{regret_explo_lemma}.
	\begin{lemma}
		\label{regret_explo_lemma}
		With probability \cssr{$P_s=1-O(\frac{MK\log(T)}{T})$}, the typical event happens and the exploration regret \congr{conditioned on the typical event} satisfies:%$P_s=1-O(\frac{(MK+2M)\log(T) + KM}{T})$, 
		%\vspace{-0.07in}
		\begin{equation*}
		\small
		\resizebox{.5 \textwidth}{!}{
		$
		R^{expl} = O \left (\frac{\Delta}{4\epsilon}\sum_{k>M}\min \left \{\frac{\log(T)}{\mu_{(M+1)}-\mu_{(k)}+4\epsilon},\sqrt{T\log(T)} \right \} \right).
		$}
		\end{equation*}
		%where $P_s=1-O(\frac{(MK+2M)\log(T) + KM}{T})$.
		%$$Q=\lceil \log_2{\frac{1}{\Delta/4-\epsilon}}\rceil$$
	\end{lemma}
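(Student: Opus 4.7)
The plan is to prove Lemma~2 in two stages: first bound $P_s$, the probability of the typical event $A_1\cap A_2\cap A_3$, and then analyze the exploration regret conditionally via a UCB-style rejection-time argument.

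For $P_s$, I would apply a union bound over the three component events. Lemma~\ref{regret_init_lemma} already gives $P(A_1^c)=O(MK/T)$. For $A_2$, I would invoke the random-coding error exponent guarantee (to be stated in Section~\ref{regret_explo}): with $N'\ge\log(T)/E(\mu_{\min})$, each codeword is decoded incorrectly with probability at most $\exp(-N'E(\mu_{\min}))\le 1/T$; since each communication phase transmits $O(MK)$ codewords and there are at most $O(\log T)$ phases, this yields $P(A_2^c)=O(MK\log T/T)$. For $A_3$, I would apply Hoeffding's inequality to the unquantized weighted average $\sum_i \bar\mu_i^p T_p^i/T_p$ (giving deviation at most $\sqrt{2\log(T)/T_p}$ with probability $1-2/T^2$) and then add the per-message quantization error, which is at most $2^{-Q}\le\Delta/4-\epsilon$; the two together are exactly the radius $B_{T_p}$. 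A union bound over $O(K\log T)$ (phase, arm) pairs gives $P(A_3^c)=O(K\log T/T)$, and combining the three yields $P_s\ge 1-O(MK\log T/T)$.

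For the conditional regret, I would determine the phase $p_k$ at which each suboptimal arm $k>M$ is rejected. Using an inductive invariant (preserved under $A_3$) that the $M_p$ top active arms are precisely the top-$M$ true arms not yet accepted, the rejection criterion $\bar\mu^p[i]-B_{T_p}\ge\bar\mu^p[k]+B_{T_p}$ for $M_p$ arms collapses to $\mu_{(M)}-\mu_{(k)}\ge 4B_{T_p}$. Substituting $B_{T_p}=\sqrt{2\log(T)/T_p}+(\Delta/4-\epsilon)$ and the identity $\mu_{(M)}-\mu_{(k)}=\Delta+(\mu_{(M+1)}-\mu_{(k)})$ shows rejection is triggered once $T_p=\Omega\!\left(\log(T)/(\mu_{(M+1)}-\mu_{(k)}+4\epsilon)^2\right)$; the doubling schedule then yields $T_{p_k}$ of the same order. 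The regret contribution of arm $k$ is bounded by (per-pull opportunity cost) times (pulls before rejection), and the key algebraic step is the inequality
\begin{equation*}
\mu_{(M)}-\mu_{(k)} \le \frac{\Delta}{4\epsilon}\bigl(\mu_{(M+1)}-\mu_{(k)}+4\epsilon\bigr), \qquad \epsilon\in(0,\Delta/4),
\end{equation*}
which is easily verified by noting that $(\Delta+x)/(x+4\epsilon)$ is decreasing in $x\ge 0$ and maximized at $x=0$. Multiplying by $T_{p_k}$ produces the gap-dependent term $\frac{\Delta}{4\epsilon}\cdot\frac{\log T}{\mu_{(M+1)}-\mu_{(k)}+4\epsilon}$, and balancing this against the trivial linear bound (regret bounded by horizon) yields the $\sqrt{T\log T}$ alternative inside the $\min$. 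Summing over $k>M$ completes the bound.

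The main obstacle is the compound confidence analysis for $A_3$: the leader's estimate $\bar\mu^p=\sum_i\bar\mu_i^p T_p^i/T_p$ is a weighted aggregation of quantized per-follower means, so both the Hoeffding deviation of the unquantized aggregate and the per-message quantization error must be tracked simultaneously and absorbed cleanly into the single radius $B_{T_p}$. A secondary but interlocking subtlety is maintaining, across phases, the invariant that active arms and their induced ordering still match the true means $\mu_{(\cdot)}$; this is what lets the rejection condition collapse to the clean inequality $\mu_{(M)}-\mu_{(k)}\ge 4B_{T_p}$, and it has to be carried by induction alternating with $A_2$ (correct acceptance/rejection messages) across exploration and communication phases.
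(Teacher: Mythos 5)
Your probability analysis of the typical event and your treatment of the suboptimal arms track the paper's argument closely (Lemmas~\ref{regret_good_comm}, \ref{regret_correct_ar}, \ref{regret_artimes_lemma}, and part~1 of Lemma~\ref{regret_decom_lemma}, including the key inequality $\frac{\mu_{(M)}-\mu[k]}{\mu_{(M+1)}-\mu[k]+4\epsilon}\le\frac{\Delta}{4\epsilon}$ and the $\delta=\sqrt{\log(T)/T}$ balancing). However, there is a genuine gap in your regret accounting: you only bound the term $\sum_{k>M}(\mu_{(M)}-\mu_{(k)})T^{expl}_{(k)}(T)$, i.e., the cost of pulling suboptimal arms, and then declare the bound complete. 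The exploration regret is measured against the best allocation of $M$ players to the top-$M$ arms, and the paper's decomposition \eqref{regret_expl_decom} contains a second component, $\sum_{k\leq M}(\mu_{(k)}-\mu_{(M)})(T^{expl}-T^{expl}_{(k)}(T))$: while a top-$M$ arm is still active, the sequential hopping makes each active player cycle over \emph{all} $K_p$ active arms, so that arm is left unpulled during a $(K_p-M_p)/K_p$ fraction of the exploration steps, and each missed step costs $\mu_{(k)}-\mu_{(M)}$, which can be of order one even when all the suboptimal gaps are tiny. This term cannot be folded into your per-suboptimal-arm bound by a simple slot-counting argument, because the weights differ ($\mu_{(k)}-\mu_{(M)}$ versus $\mu_{(M)}-\mu_{(j)}$).

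Controlling that second term is a substantive piece of the paper's proof (Lemmas~\ref{appendix_lemma_part1} and \ref{appendix_lemma_part2}): one first relates the missed pulls of an active optimal arm to the survival indicators $\mathds{1}_{\hat t_{(j)}>T_{p-1}}$ of the still-active suboptimal arms $j>M$, and then exploits the fact that if an optimal arm is still active in phase $p$ its gap satisfies $\mu_{(k)}-\mu_{(M)}<\Delta(p)-4\epsilon$ with $\Delta(p)=\sqrt{c\log(T)/T_{p-1}}$ (otherwise it would already have been accepted), so that a telescoping sum over the doubling schedule yields a contribution of $O\!\left(\min\left\{\frac{\log(T)}{\mu_{(M+1)}-\mu_{(j)}+4\epsilon},\sqrt{T\log(T)}\right\}\right)$ per surviving suboptimal arm $j$. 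Without this argument (or an equivalent one), the claimed bound on $R^{expl}$ is not established; your proposal needs to add it to be complete. The remaining discrepancies (e.g., your $2/T^2$ versus the paper's $2/T$ in the Hoeffding step) are only constant-level and harmless.
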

% 	\cssr{Probability $P_s$ corresponds to the probability that the typical event happens, which has three components: $A_1=\{$At the end of the initialization phase, each player has a correct estimation of $M$ and an orthogonal internal rank$\}$; $A_2=\{$All the messages are decoded correctly$\}$; $A_3=\{\forall k\in[K],|\bar\mu^p[k]-\mu[k]|\geq B_{T_p}\text{ holds for all exploration-communication phases}\}$.
% 	%Probability $P_s$ corresponds to the event that initialization, communications and arm statistics estimations are all successful. 
% 	%Lemma \ref{regret_explo_lemma} conditions on three events: a successful initialization, good communications, and correct estimations of arms. 
% 	Lemma \ref{regret_init_lemma} provides a guarantee for event $A_1$ to happen; the other two are analyzed by Lemma \ref{regret_good_comm} and Lemma \ref{regret_correct_ar}, respectively. }
	
	We first present a fundamental result of channel coding for communication in a noisy channel, known as the \emph{error exponent} \citep{gallager1968information}.
	\begin{theorem}%(Gallager's bound)
		\label{coding_theorem}
		For a discrete memoryless channel, if $R<C$, there exists a code of block length $N$ without feedback such that the error probability is bounded by
		\begin{equation*}
		\vspace{-0.05in}
		P_e\leq \exp[-N E_{r}(R)],
		\end{equation*}
		where $E_{r}(R)$ is the random coding error exponent with rate $R$.
	\end{theorem}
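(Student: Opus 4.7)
The plan is to prove Theorem~\ref{coding_theorem} via Gallager's classical random coding argument, which establishes the existence of a good code through the probabilistic method rather than by explicit construction. I would first fix an arbitrary input distribution $Q$ on the channel input alphabet and construct a random ensemble of codebooks: for rate $R$, generate $M = \lceil e^{NR}\rceil$ codewords $\mathbf{x}_1,\ldots,\mathbf{x}_M$ by drawing each symbol independently from $Q$. Assume maximum-likelihood decoding at the receiver. Then, conditioned on codeword $\mathbf{x}_m$ being transmitted and $\mathbf{y}$ received, a decoding error requires that some $\mathbf{x}_{m'}$ with $m' \neq m$ satisfy $P(\mathbf{y}\mid\mathbf{x}_{m'}) \geq P(\mathbf{y}\mid\mathbf{x}_m)$.

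The next step is to apply Gallager's $\rho$-trick to bound the pairwise error: for any $0 \leq \rho \leq 1$ and $s > 0$,
\begin{equation*}
P_{e,m} \leq \Bigl[\sum_{m' \neq m} (P(\mathbf{y}\mid\mathbf{x}_{m'})/P(\mathbf{y}\mid\mathbf{x}_m))^{s}\Bigr]^{\rho},
\end{equation*}
choosing $s = 1/(1+\rho)$. Averaging this bound over the random ensemble, using independence of the codewords $\mathbf{x}_{m'}$ from $\mathbf{x}_m$ and Jensen's inequality (valid since $\rho \leq 1$ makes $x \mapsto x^{\rho}$ concave), I would push the expectation inside the $\rho$-th power and factor across the $N$ channel uses, since the channel is memoryless and symbols are i.i.d.\ under $Q$. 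This yields
\begin{equation*}
\mathbb{E}[\bar{P}_e] \leq (M-1)^{\rho}\exp\bigl(-N\, E_0(\rho,Q)\bigr),
\end{equation*}
where $E_0(\rho, Q) := -\log\sum_y \bigl[\sum_x Q(x) P(y\mid x)^{1/(1+\rho)}\bigr]^{1+\rho}$.

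Substituting $M \leq e^{NR}$ and optimizing over $\rho \in [0,1]$ and $Q$ gives the bound $\mathbb{E}[\bar{P}_e] \leq \exp(-N E_r(R))$ with $E_r(R) := \max_{0 \leq \rho \leq 1}\max_Q [E_0(\rho, Q) - \rho R]$. The final step is to invoke the probabilistic method: since the expectation of $\bar{P}_e$ over random codebooks is bounded, there must exist at least one deterministic codebook whose average error probability meets the bound; then by expurgating the worse half of the codewords (a standard step), one obtains a codebook whose \emph{maximal} error probability satisfies the same exponential bound (up to a constant, which can be absorbed by slightly reducing the rate, leaving the exponent $E_r(R)$ unchanged asymptotically). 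Finally, to ensure $E_r(R) > 0$ whenever $R < C$, I would verify the standard property that $E_0(\rho,Q)$ is concave and differentiable in $\rho$ with $\partial E_0/\partial \rho\big|_{\rho = 0} = I(Q;P)$, so that $E_r(R) > 0$ for any $R < \max_Q I(Q;P) = C$.

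The main obstacle is the careful handling of the Jensen step and the exchange of expectation with the $\rho$-th power, which is what makes the exponent nontrivial; a secondary technical subtlety is the expurgation step, which must be done without destroying the rate. Because the statement we need is merely the existence of a code with error probability bounded by $\exp(-N E_r(R))$ (not the tightest exponent), we can afford loose constants throughout and invoke Gallager's bound as a black box from \citep{gallager1968information} for the exponent's positivity whenever $R < C$.
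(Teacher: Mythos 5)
Your proposal is correct and is precisely the classical random-coding argument of Gallager (codebook drawn i.i.d.\ from $Q$, the $\rho$-trick with $s=1/(1+\rho)$, Jensen, factorization over the memoryless channel, optimization over $\rho$ and $Q$, and the probabilistic method), which is exactly the proof the paper relies on by citing \cite{gallager1968information} rather than proving the theorem itself. The expurgation step you mention is not even needed for the statement as given, since the bound on the ensemble-average error probability already yields existence of a code meeting it.
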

	
	We note that the error exponent used in Theorem~\ref{regret_overall_theorem} corresponds to $E(\mu_{\text{min}}) = E_{r}(\cssr{C_Z(1-\mu_{\min})})$.
	
	Theorem~\ref{coding_theorem} suggests that, to transmit a $Q$-bit message over a Z-channel, there exists an optimal coding scheme with length $N'=\max\{\frac{Q}{\cssr{C_Z(1-\mu_{\min})}}, N\}$ to achieve an error rate less than $\frac{1}{T}$, where $N = \frac{1}{E(\mu_{\min})}\log(T)$. Several of the existing coding techniques, although not optimal, can achieve this error rate with $N=\Theta(\log(T))$, which only leads to a multiplicative factor larger than $\frac{1}{E(\mu_{\min})}$ but does not change the regret order. For example, with repetition code, flip code and \cssr{modified} Hamming code, we have  $N_{rep}=Q\lceil\frac{\log(QT)}{\mu_{\min}}\rceil$, $N_{flip}=Q \lceil \frac{\log(QT/2)}{\mu_{\min}}\rceil$, $N_{ham} = \frac{7Q}{8}\lceil \frac{\log(7QT/8)}{\mu_{\min}}\rceil$ respectively (see Appendix~\ref{appendix_coding} for detailed analysis of practical codes). The remaining analysis will be based on the optimal channel coding   with the caveat that a ``good'' Z-channel code should be applied in practice. %Since it is only a matter of factor, the following analysis will be based on the optimal coding method, which also provides a regret lower bound for all coding techniques using our framework. 
	
	With at most $\log(T)$ exploration and communication phases and $K$ arms to be accepted or rejected, there are at most $MK\log(T)$ communication instances on arm statistics, $2M\log(T)$ communication instances on the number of acc/rej arms, and $KM$ communication instances on the index of acc/rej arms. A simple union bound analysis leads to the following result.
	\begin{lemma}
		\label{regret_good_comm}
		Denoting the probability that event \cssr{$A_2$ holds} by $P_r$, with an optimal Z-channel code of $N'=\max\{\frac{Q}{\cssr{C_Z(1-\mu_{\min})}}, \frac{\log(T)}{E(\mu_{\min})}\}$, we have
		\begin{equation*}
		\small
		%\vspace{-0.1in}
		%P_r=1-O\left (\frac{(MK+2M)\log(T)+KM}{T} \right )
		P_r=\cssr{1-O\left(\frac{MK\log(T)}{T}\right)}.
		\end{equation*}
	\end{lemma}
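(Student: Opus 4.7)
The plan is to combine Theorem~\ref{coding_theorem} with a union bound over every transmitted codeword in every communication phase. First I would verify that the chosen block length $N' = \max\{Q / C_Z(1-\mu_{\min}),\ \log(T)/E(\mu_{\min})\}$ makes each single-message error probability at most $1/T$. The first term in the max ensures the coding rate $R = Q/N' \le C_Z(1-\mu_{\min})$, so Theorem~\ref{coding_theorem} applies to the Z-channel with crossover $1-\mu_{\min}$ and gives per-message error $P_e \le \exp[-N' E_r(R)]$. Since $E(\mu_{\min}) = E_r(C_Z(1-\mu_{\min}))$ and since $E_r$ is nonincreasing in the rate (with $R \le C_Z$), we have $E_r(R) \ge E(\mu_{\min})$; the second term in the max then guarantees $N' E_r(R) \ge \log(T)$, i.e.\ $P_e \le 1/T$.

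Next I would count how many codewords are transmitted over the entire horizon. Because each exploration phase $p$ uses $2^p \lceil \log(T) \rceil$ samples per active arm, the number of exploration-communication rounds is at most $\log(T)$. Within the $p$-th communication phase, Algorithms~\ref{alg_comm_leader}--\ref{alg_comm_follower} produce three groups of codewords: (i) arm-statistics transmissions, contributing at most $(M_p-1)K_p \le MK$ codewords per phase, hence at most $MK\log(T)$ in total; (ii) transmissions of $|\text{Acc}|$ and $|\text{Rej}|$ from the leader, contributing $2(M_p-1) \le 2M$ per phase, hence at most $2M\log(T)$ in total; and (iii) transmissions of the contents of $\text{Acc}\cup \text{Rej}$, which aggregate to at most $KM$ across all phases (since each of the $K$ arms is accepted or rejected at most once, and each such decision is broadcast to at most $M-1$ followers). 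Summing gives $O(MK\log(T))$ codewords in total.

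Finally I would apply a union bound. Event $A_2$ fails iff at least one of these $O(MK\log(T))$ codewords is decoded in error. Each individual decoding error has probability at most $1/T$ by the first step, so
\begin{equation*}
\mathbb{P}(A_2^c) \le O(MK\log(T)) \cdot \frac{1}{T} = O\!\left(\frac{MK\log(T)}{T}\right),
\end{equation*}
which yields $P_r = 1 - O(MK\log(T)/T)$ as claimed.

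I do not expect any step to be a real obstacle; the only mildly delicate point is aligning the two terms in the definition of $N'$ with Theorem~\ref{coding_theorem}, namely that the first term secures admissibility of the rate (so that the error-exponent bound is valid at all) while the second term drives the per-message error down to $1/T$. Once that is in place, the message-counting and the union bound are essentially bookkeeping matching the argument already sketched informally just before the lemma statement.
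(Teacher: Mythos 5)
Your proposal matches the paper's proof essentially step for step: bound each codeword's decoding error by $1/T$ via Theorem~\ref{coding_theorem} and the choice of $N'$, count at most $(MK+2M)\log(T)+MK$ transmissions across the at most $\log(T)$ communication phases, and conclude by a union bound. In fact your justification of the per-message error bound (rate admissibility from the first term of the max, monotonicity of $E_r$, and the second term driving $P_e \le 1/T$) is spelled out more explicitly than in the paper, which simply asserts $P(\xi_p)\le 1/T$.
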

	
	Lemma~\ref{regret_good_comm} guarantees all communications are correct. To bound the probability that all arms are correctly estimated, we have the following \congr{result}.
	\begin{lemma}
		\label{regret_correct_ar}
		%For any arm $k$ and a positive integer $n$,
		\cssr{In phase $p$, for any active arm $k\in[K_p]$,}
		\begin{equation*}
		%\vspace{-0.1in}
		%P\left \{|\bar\mu^p[k]-\mu[k]|\geq B_{T_p}  \cssr{\text{ holds for any } p\leq n}\right \}\leq \frac{2n}{T}.
		\cssr{P\left \{|\bar\mu^p[k]-\mu[k]|\geq B_{T_p}\right \}\leq \frac{2}{T}}.
		\end{equation*}
	\end{lemma}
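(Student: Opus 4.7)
The plan is to decompose the deviation $\bar\mu^p[k] - \mu[k]$ into a deterministic quantization error plus a random sampling error, and bound each in turn.

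First, unwinding the aggregation step in Algorithm~\ref{alg_comm_leader}, I would write $\bar\mu^p[k] = \sum_{i=1}^{M} (T_p^i/T_p)\,\bar\mu^p_i[k]$, where $\bar\mu^p_i[k]$ is the $Q$-bit quantization of player $i$'s empirical mean $\hat\mu_i[k]$ formed from the $T_p^i$ independent pulls of arm $k$ that player $i$ has performed so far. Letting $\hat\mu^{\mathrm{agg}}[k] := \sum_{i=1}^{M} (T_p^i/T_p)\,\hat\mu_i[k]$ denote the unquantized aggregate, the triangle inequality gives
\[
|\bar\mu^p[k] - \mu[k]| \leq |\bar\mu^p[k] - \hat\mu^{\mathrm{agg}}[k]| + |\hat\mu^{\mathrm{agg}}[k] - \mu[k]|.
\]

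For the first term, I would invoke the algorithm's choice $Q \geq \lceil \log_2(1/(\Delta/4-\epsilon)) \rceil$, which ensures per-player quantization error $|\bar\mu^p_i[k] - \hat\mu_i[k]| \leq 2^{-Q} \leq \Delta/4 - \epsilon$; since the weights $T_p^i/T_p$ are nonnegative and sum to one, the same bound carries over to give $|\bar\mu^p[k] - \hat\mu^{\mathrm{agg}}[k]| \leq \Delta/4 - \epsilon$. For the second term, I would observe that $T_p \cdot \hat\mu^{\mathrm{agg}}[k] = \sum_i T_p^i \hat\mu_i[k]$ is exactly a sum of $T_p$ i.i.d. $[0,1]$-valued samples of arm $k$, so Hoeffding's inequality gives
\[
P\!\left(|\hat\mu^{\mathrm{agg}}[k] - \mu[k]| \geq \sqrt{2\log(T)/T_p}\right) \leq 2\exp(-4\log T) = 2/T^4 \leq 2/T.
\]

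Combining the two estimates and recalling $B_{T_p} = \sqrt{2\log(T)/T_p} + (\Delta/4-\epsilon)$ closes the argument. The main point to handle with care is the bookkeeping of $T_p^i$ for players that have already exited into exploitation: by the initialization in Algorithm~\ref{alg_comm_leader}, their stored means and counts are frozen from their last active phase, so one must verify that $T_p = \sum_i T_p^i$ still holds and that the pooled samples remain independent across players --- this is the only subtle step; once it is checked, the single pooled Hoeffding bound is legitimate and the rest is routine.
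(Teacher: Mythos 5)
Your proposal follows essentially the same route as the paper's proof: split $|\bar\mu^p[k]-\mu[k]|$ into a deterministic quantization error, bounded by $\Delta/4-\epsilon$ via the choice of $Q$ and the convex weighting by $T_p^i/T_p$, plus a sampling error on the pooled empirical mean, bounded by Hoeffding at deviation $\sqrt{2\log(T)/T_p}$, then combine using the definition of $B_{T_p}$. The bookkeeping point you flag (frozen statistics and $T_p=\sum_i T_p^i$ for inactive players) is handled implicitly in the paper's weighting and checks out, so your argument is correct and equivalent.
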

	
	With at most $\log(T)$ exploration-communication phases, \cssr{event $A_3$ happens }%the estimation of any arm at any phase are within the confidence interval 
	with probability:
	\vspace{-0.07in}
	\begin{equation}
	\small
	\label{equation_corret_ar}
	P_c=1-O\left (\frac{K\log(T)}{T} \right ).
	\end{equation} 
	A union bound argument leveraging $P_i$, $P_r$ and $P_c$ leads to probability $P_s$ for the typical event to happen, as defined in Lemma \ref{regret_explo_lemma}. Finally, for the exploration phases, the number of times that an arm is pulled before being accepted or rejected are well controlled.
	\begin{lemma}
		\label{regret_artimes_lemma}
		In the typical event, every optimal arm is accepted after at most $O \left (\frac{\log(T)}{(\mu_{(k)}-\mu_{(M)}+4\epsilon)^2} \right)$ pulls, and every sub-optimal arm is rejected after at most $O \left (\frac{\log(T)}{(\mu_{(M+1)}-\mu_{(k)}+4\epsilon)^2} \right)$ pulls. 
	\end{lemma}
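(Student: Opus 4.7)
The plan is to leverage the typical event $A_1 \cap A_2 \cap A_3$, under which $|\bar\mu^p[k]-\mu[k]|\le B_{T_p}$ for every active arm $k$ and every phase $p$. I would first translate the acceptance/rejection criteria of Algorithm~\ref{alg_comm_leader} into sufficient conditions on the true means: two applications of the typical-event bound give that $\bar\mu^p[k]-B_{T_p}\ge \bar\mu^p[i]+B_{T_p}$ is implied by $\mu[k]-\mu[i]\ge 4B_{T_p}$. A short preliminary argument (also needed to make the reduction below rigorous) shows that in the typical event no arm has been misclassified in any earlier phase, since under $A_3$ only arms with strictly larger mean can ``beat'' a given arm, so the optimal arm with the smallest mean cannot be rejected, and the sub-optimal arm with the largest mean cannot be accepted.

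Next, I would argue that under this induction the $K_p-M_p$ sub-optimal arms still in $[K_p]$ all have mean at most $\mu_{(M+1)}$ and the $M_p$ optimal ones have mean at least $\mu_{(M)}$. For an optimal arm $k$, the binding comparison is therefore against an arm of mean exactly $\mu_{(M+1)}$, and acceptance is guaranteed at phase $p$ once $\mu_{(k)}-\mu_{(M+1)}\ge 4B_{T_p}$. Plugging in $B_{T_p}=\sqrt{2\log(T)/T_p}+(\Delta/4-\epsilon)$ and using $\Delta=\mu_{(M)}-\mu_{(M+1)}$, the gap $\Delta$ cancels and this becomes
\begin{equation*}
(\mu_{(k)}-\mu_{(M)})+4\epsilon \ \ge\ 4\sqrt{\tfrac{2\log(T)}{T_p}} \quad\Longleftrightarrow\quad T_p \ \ge\ \frac{32\log(T)}{\bigl((\mu_{(k)}-\mu_{(M)})+4\epsilon\bigr)^2}.
\end{equation*}
A symmetric computation for a sub-optimal arm $k$, binding against an optimal arm of mean $\mu_{(M)}$, yields the threshold $T_p\ge 32\log(T)/\bigl((\mu_{(M+1)}-\mu_{(k)})+4\epsilon\bigr)^2$.

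To conclude, I would exploit the doubling schedule. Since $M_p$ is monotonically non-increasing, $T_p-T_{p-1}=M_p 2^p\lceil\log(T)\rceil\le 2T_{p-1}$, so the cumulative samples-per-arm grow at most geometrically. Hence at the first phase $p^\star$ in which the threshold above is met, one has $T_{p^\star}=O(T_{p^\star-1})$, and the total number of pulls on arm $k$ until classification lies within a constant factor of the derived threshold. This gives exactly the $O\!\bigl(\log(T)/((\mu_{(k)}-\mu_{(M)})+4\epsilon)^2\bigr)$ bound for optimal arms and the $O\!\bigl(\log(T)/((\mu_{(M+1)}-\mu_{(k)})+4\epsilon)^2\bigr)$ bound for sub-optimal arms.

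The main obstacle is the bookkeeping in the second paragraph: one must carefully verify inductively that no misclassification has occurred in earlier phases so that ``the hardest remaining opponent'' is genuinely $\mu_{(M+1)}$ (for acceptance) or $\mu_{(M)}$ (for rejection); this is where the finite-gap Assumption~1.2 and the quantization slack $\Delta/4-\epsilon$ built into $B_{T_p}$ interact, and where the choice $Q$ with $2^{-Q}<\Delta/4-\epsilon$ is used crucially. Once that structural claim is in place, the remainder is elementary algebra on the confidence bound together with the geometric growth of $T_p$.
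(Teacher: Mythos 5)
Your proposal is correct and follows essentially the same route as the paper's proof: translate the accept/reject tests into the sufficient condition $\mu[k]-\mu[i]\ge 4B_{T_p}$ via the typical-event bound, solve $4B_{T_p}\le \Delta_k$ (where the $\Delta/4-\epsilon$ quantization slack cancels against $\Delta$) to get the threshold $T_p\ge 32\log(T)/\bigl((\mu_{(k)}-\mu_{(M)})+4\epsilon\bigr)^2$ (and symmetrically for rejection), and then use the geometric growth $T_p\le 3T_{p-1}$ to conclude the $O(\cdot)$ pull count. The only difference is that you make explicit the inductive no-misclassification bookkeeping (so the binding opponent is indeed an arm of mean at most $\mu_{(M+1)}$, resp.\ at least $\mu_{(M)}$), which the paper leaves implicit; this is a sound and welcome clarification, not a deviation in method.
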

	\cssr{Denote $T^{expl}$ as the overall time of exploration and exploitation phase and $T^{expl}_{(k)}(T)$ as the number of time steps where the $k$-th best arm is pulled during these two phases.}
	With no collision \cssr{in exploration and exploitation}, the exploration regret can be decomposed as \citep{anantharam1987asymptotically}
	\begin{equation}
	\small
	\label{regret_expl_decom}
	\begin{aligned}
	R^{expl}&=\sum_{k>M}(\mu_{(M)}-\mu_{(k)})T^{expl}_{(k)}(T)
	\\&+\sum_{k\leq M}(\mu_{(k)}-\mu_{(M)})(T^{expl}-T^{expl}_{(k)}(T)),
	\end{aligned}
	\end{equation}
     Both components in \eqref{regret_expl_decom} can be upper bounded by Lemma \ref{regret_decom_lemma} in Appendix~\ref{appd:lem6}, which proves Lemma \ref{regret_explo_lemma}.
	
% 	Based on Lemma \ref{regret_artimes_lemma}, this decomposition can be further bounded.
% 	\begin{lemma}
% 		\label{regret_decom_lemma}
% 		With probability $P_s$, the following \congr{results hold.}
% 		\vspace{-0.07in}
% 		\begin{equation}
% 		\notag
% 		\begin{aligned}
% 		&\text{1) \congr{for any sub-optimal arm} $k$, }(\mu_{(M)}-\mu[k])T^{expl}_{k}(T) = \\&O \left (\frac{\Delta}{4\epsilon}\min\{\frac{\log(T)}{\mu_{(M+1)}-\mu[k]+4\epsilon},\sqrt{T\log(T)}\} \right );
% 		\\&\text{2) }\sum_{k\leq M}(\mu_{(k)}-\mu_{(M)})(T^{expl}-T^{expl}_{(k)}) = 
% 		\\&O \left (\cssr{\sum_{k>M}}\min\{\frac{\log(T)}{\mu_{(M+1)}-\mu_{(k)}+4\epsilon},\sqrt{T\log(T)}\} \right).
% 		\end{aligned}
% 		\end{equation}
% 	\end{lemma}
	
% 	Lemma \ref{regret_explo_lemma} can then be directly derived from Eqn. \eqref{regret_expl_decom} and Lemma \ref{regret_decom_lemma}.
    
	\subsection{Communication phase}
	%We now focus on the $R^{comm}$ term. 
	Thanks to the expanded length of each exploration phase and the \congr{fixed}-length quantization of arm statistics, the regret $R^{comm}$ does not dominate the overall regret, as stated in Lemma \ref{regret_comm_lemma}.
	\begin{lemma}
		\small
		\label{regret_comm_lemma}
		In the typical event, 
		%\vspace{-0.07in}
		\begin{equation*}
		\small
		\resizebox{.5 \textwidth}{!}{
		$
		%\begin{aligned}
		R^{comm}=O\left(N'\left(M^2\left(K+2\right)\log\left(\min\left\{\frac{1}{4\epsilon},\cssr{T}\right\}\right)+M^2K\right)\right).
		$
		}
		%\end{aligned}
		\end{equation*}
	\end{lemma}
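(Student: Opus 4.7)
My plan is to bound the communication regret by counting the number of channel uses consumed in each phase, summing across phases, and multiplying by an $O(M)$ per-step regret bound. Since every active player plays a single arm at each communication step, the instantaneous regret is at most $\sum_{j\in[M]}\mu_{(j)}\leq M$, so it suffices to bound the total number of communication time steps and then scale by $M$.

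Inspecting Algorithms~\ref{alg_comm_leader} and~\ref{alg_comm_follower}, the length of the $p$-th communication phase decomposes into three blocks: (i) $(M_p-1)K_p N'$ steps for each of the $M_p-1$ followers to transmit its $K_p$ quantized arm statistics to the leader; (ii) $2(M_p-1)N'$ steps for the leader to broadcast the cardinalities $|\text{Rej}|$ and $|\text{Acc}|$ back to each follower; and (iii) $(M_p-1)(|\text{Rej}|_p+|\text{Acc}|_p)N'$ steps to broadcast the contents of those two sets. Using $M_p\leq M$ and $K_p\leq K$, blocks (i) and (ii) contribute at most $(M-1)(K+2)N'$ steps per phase. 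Block (iii) telescopes across phases: since every arm is accepted or rejected at most once over the horizon, $\sum_p (|\text{Rej}|_p+|\text{Acc}|_p)\leq K$, so block (iii) contributes at most $(M-1)KN'$ steps in total, independently of $P$.

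It remains to bound the number of exploration-communication rounds $P$. Under the typical event, Lemma~\ref{regret_artimes_lemma} implies that every arm is accepted or rejected once its cumulative pull count reaches $O(\log(T)/(4\epsilon)^2)$, and since phase $p$ contributes $2^p\lceil\log T\rceil$ pulls per arm, one obtains $P=O(\log(1/\epsilon))$; this is also trivially bounded by $O(\log T)$ because the algorithm terminates once the horizon is consumed, giving $P=O(\log\min\{1/(4\epsilon),T\})$. Summing blocks (i) and (ii) over the $P$ rounds, adding the telescoped block (iii), and multiplying by the $O(M)$ per-step regret yields the claimed bound. The main obstacle is obtaining the sharp $M^2K$ summand rather than $M^2K\log\min\{1/(4\epsilon),T\}$ for block (iii); this relies crucially on the observation that accept/reject events partition rather than repeat across phases, so one must not naively apply the per-phase bound $K_p\leq K$ inside the sum. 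Handling the two boundary regimes ($T$-capped versus $\epsilon$-capped $P$) is then absorbed uniformly into the single $\min\{1/(4\epsilon),T\}$ factor.
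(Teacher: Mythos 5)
Your proposal is correct and follows essentially the same route as the paper's own argument: bound the per-phase communication length for statistics and set-size transmissions by $O(N'(MK+2M))$, bound the number of exploration--communication rounds by $O(\log\min\{\tfrac{1}{4\epsilon},T\})$ via Lemma~\ref{regret_artimes_lemma}, account for the accept/reject \emph{contents} separately as a total of $O(M K N')$ steps over the whole horizon (your telescoping observation is exactly how the paper obtains the standalone $M^2KN'$ term), and multiply by the $O(M)$ per-step loss. The only difference is presentational: you make the telescoping of $\sum_p(|\text{Acc}|_p+|\text{Rej}|_p)\leq K$ and the two capping regimes for the round count explicit, which the paper states more tersely.
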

	We note that $\log(\min\{\frac{1}{4\epsilon},\cssr{T}\})$ becomes a constant when $T$ is sufficiently large. Noting that $N'=\max\{\frac{Q}{\cssr{C_Z(1-\mu_{\min})}},\frac{\log(T)}{E(\mu_{\min})}\}$, the communication loss has the same order as other phases.
	
	\cssr{\subsection{Overall regret}
	When the typical event happens, the overall regret is bounded by the sum of $R^{init}, R^{comm}$ and $R^{expl}$; otherwise, for the atypical event, the regret can be upper bounded as $MT$. Thus, the overall regret satisfies
% 	With probability $P_s$, the typical event happens and the regret $R_s$ is the sum of the regret of $R^{init}, R^{comm}$ and $R^{expl}$ as analyzed in Lemma \ref{regret_init_lemma}, \ref{regret_explo_lemma} and \ref{regret_comm_lemma}. With probability $1-P_s$, under the atypical event, the regret is assumed to be linear as $R_f=MT$. Thus, the overall regret is the average of these two terms as
	\begin{equation*}
	%\begin{aligned}
	    R(T)%&\leq +(1-P_s)R_f\\
	    \leq R^{init}+R^{expl}+R^{comm}+O(M^2K\log(T)).
	%\end{aligned}
	\end{equation*}
	With Lemmas \ref{regret_init_lemma}, \ref{regret_explo_lemma} and \ref{regret_comm_lemma}, Theorem \ref{regret_overall_theorem} can be proven.
	}
	\begin{figure*}[htb]
	\setlength{\abovecaptionskip}{-2pt} 
	\centering
	\begin{minipage}[t]{0.32\textwidth}
		\centering
		\includegraphics[width=5.5cm,height=4cm]{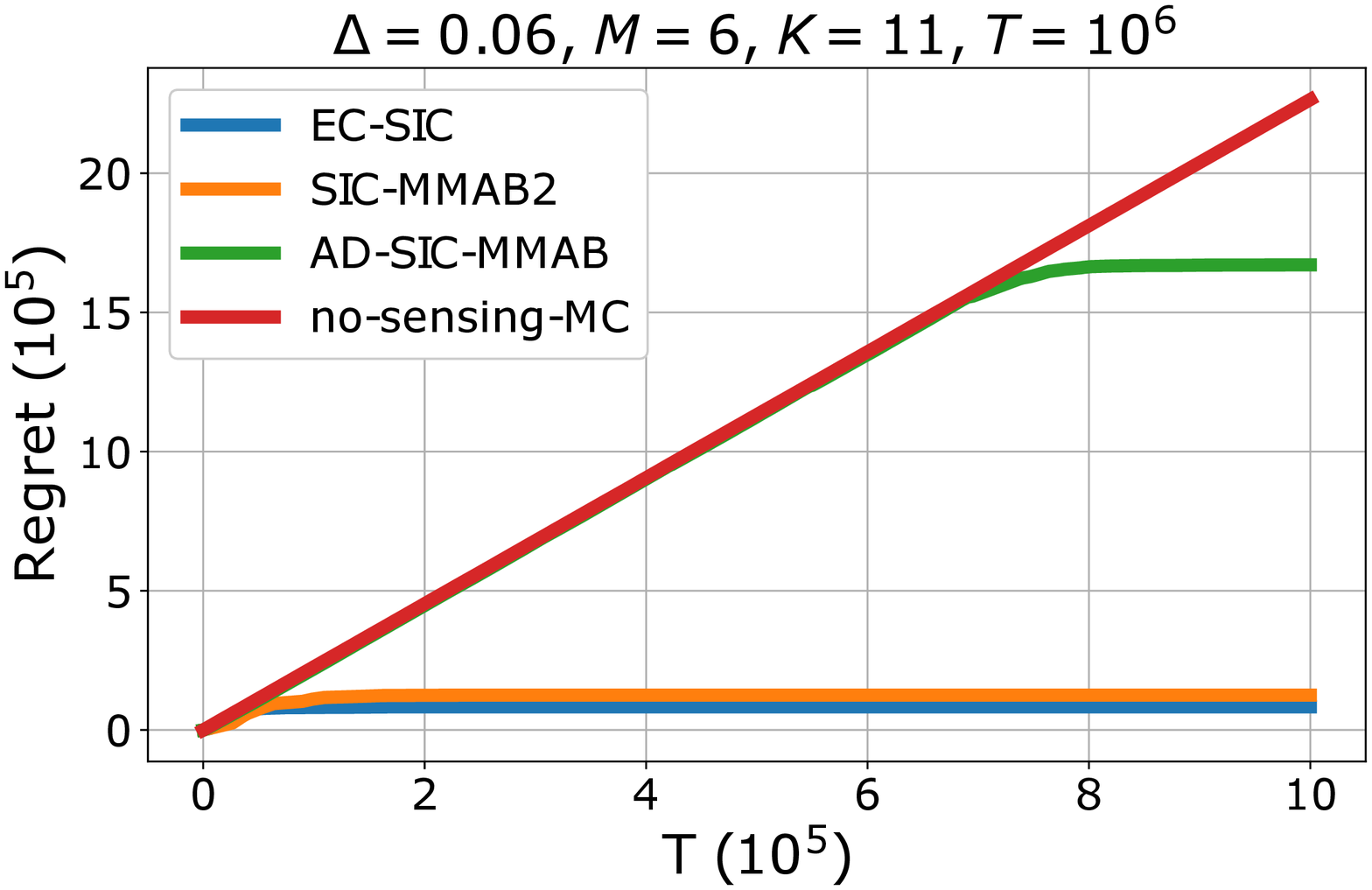}
		\caption{Easy game}
		\label{fig:easy_game}
	\end{minipage}
	\begin{minipage}[t]{0.32\textwidth}
		\centering
		\includegraphics[width=5.5cm,height=4cm]{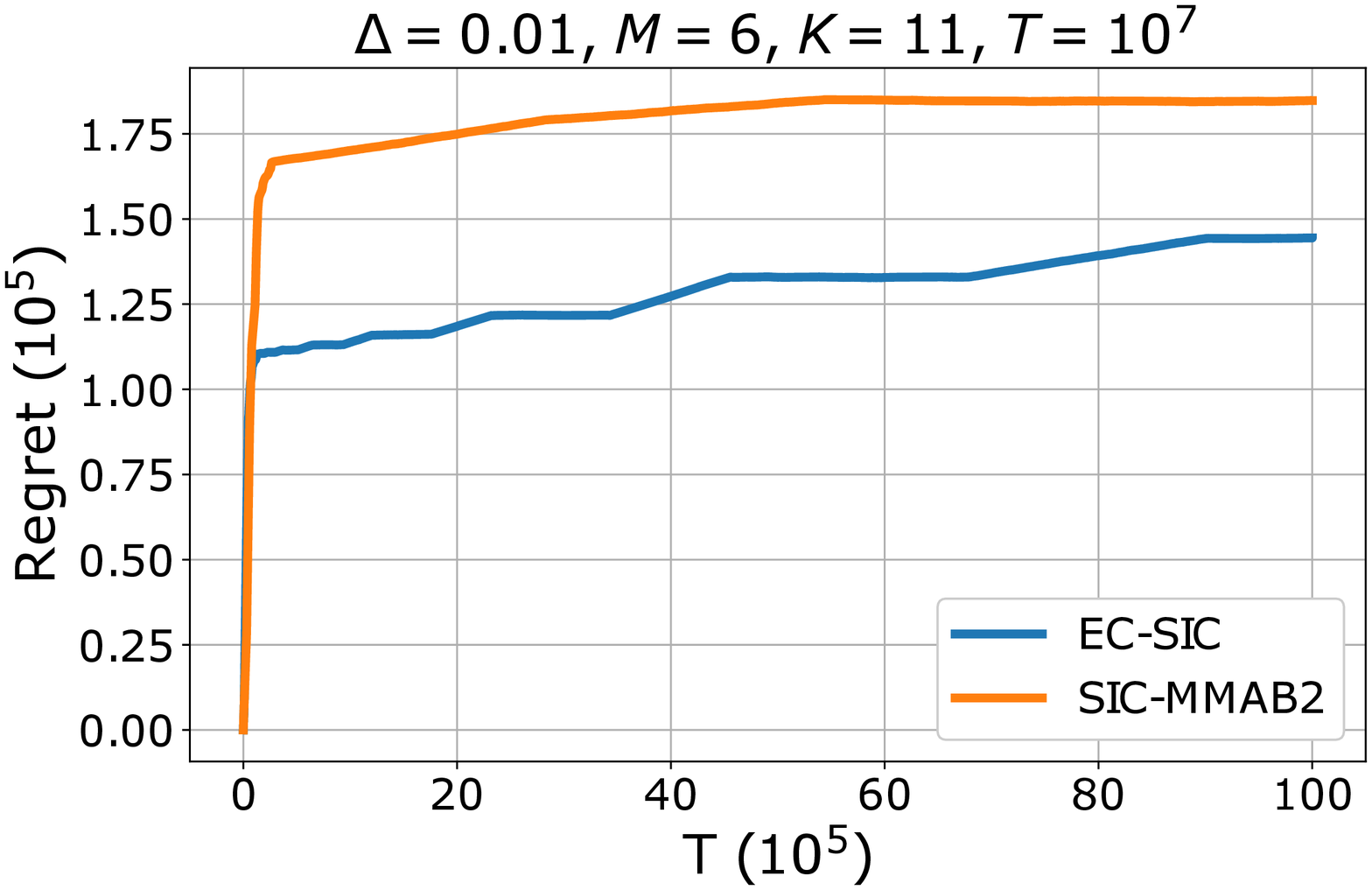}
		\caption{Hard game}
		\label{fig:hard_game}
	\end{minipage}
	\begin{minipage}[t]{0.32\textwidth}
		\centering
		\includegraphics[width=5.5cm,height=4cm]{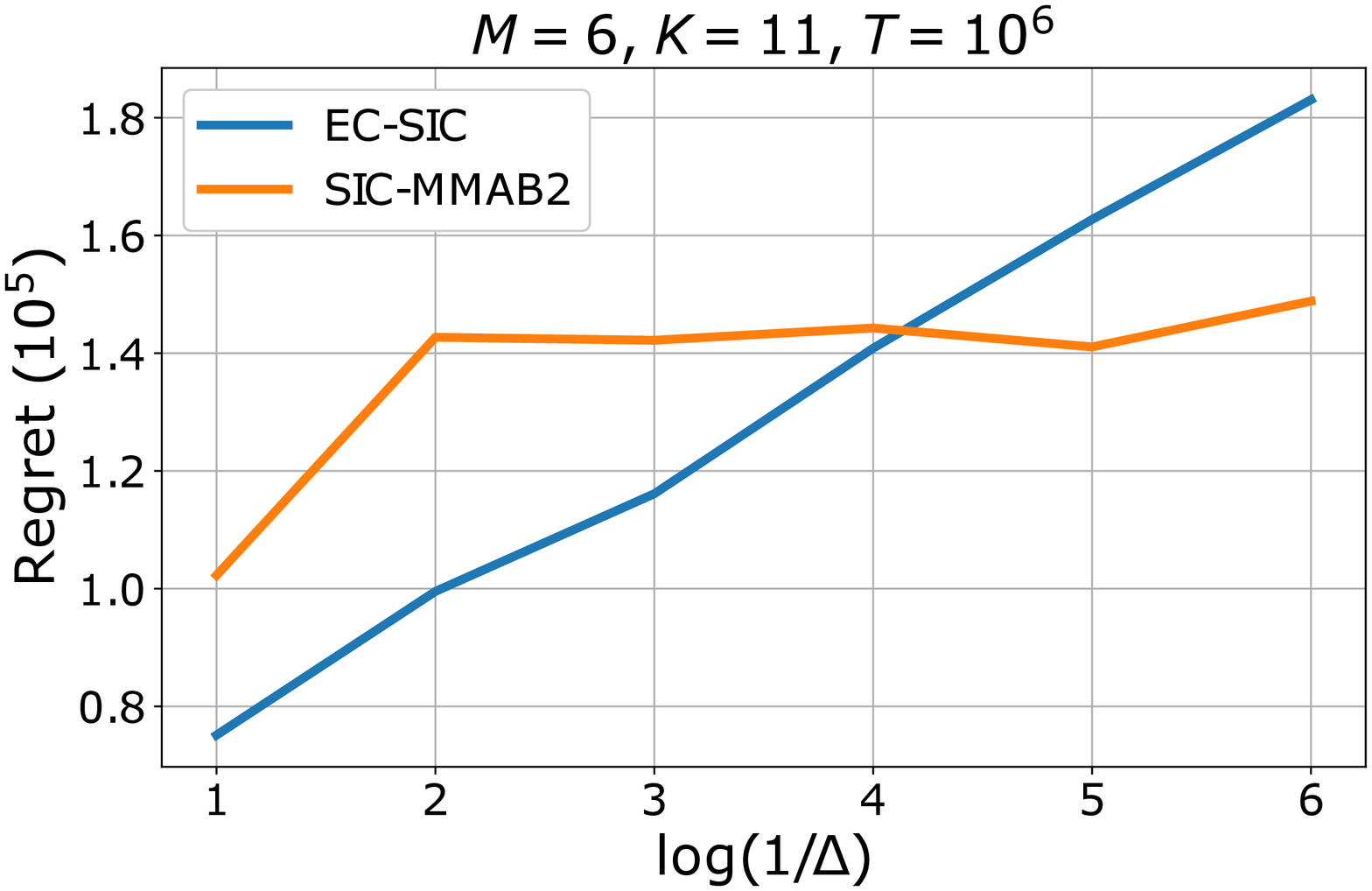}
		\caption{Different game difficulties}
		\label{fig:game_difficulty}
	\end{minipage}
	\vspace{-0.2in}
%    \end{figure*}
%    	\begin{figure*}[htb]
%	\setlength{\abovecaptionskip}{-2pt} 
%	\centering
	\begin{minipage}[t]{0.32\textwidth}
		\centering
		\includegraphics[width=5.5cm,height=4cm]{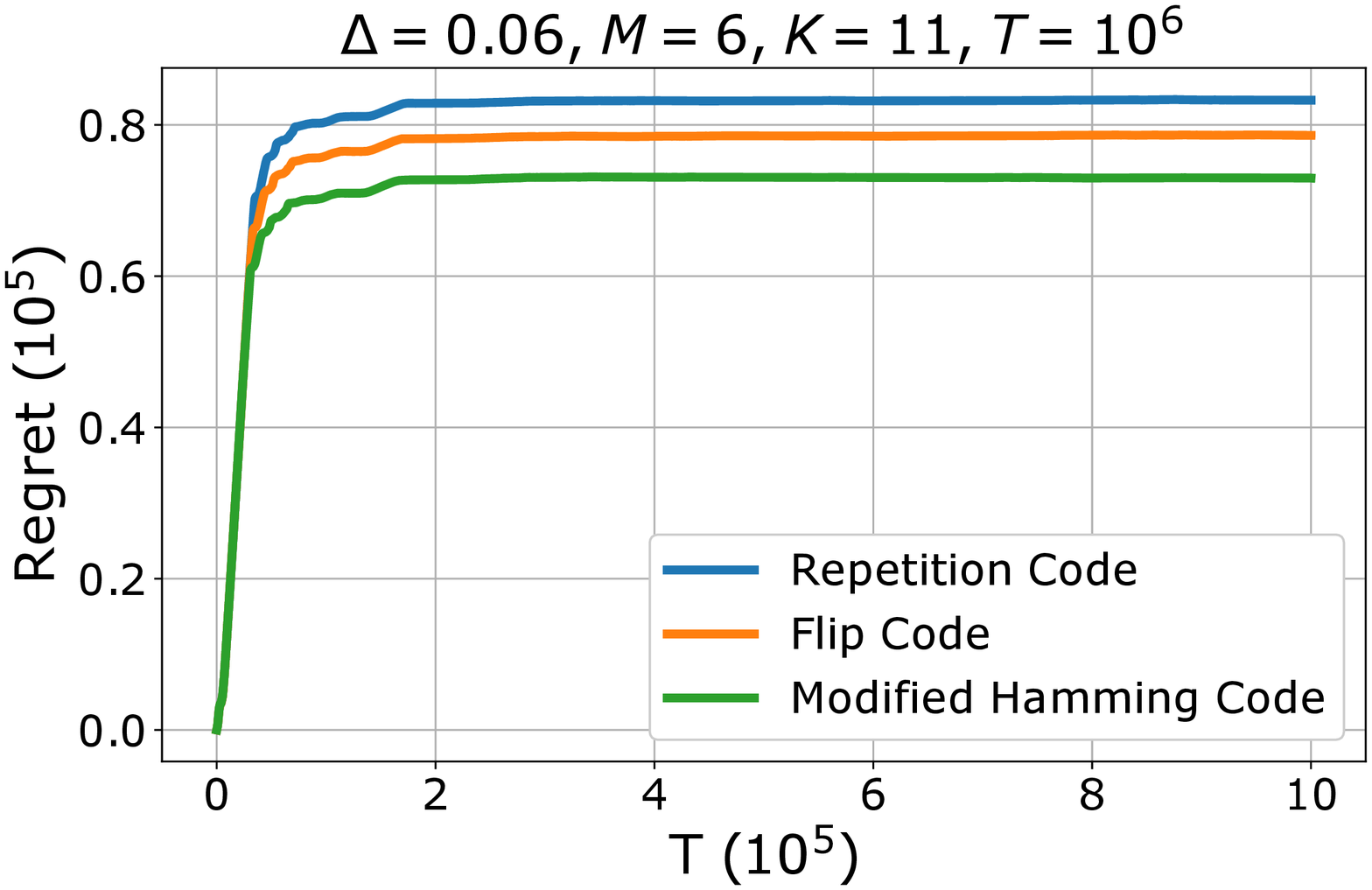}
		\caption{Different coding techniques}
		\label{fig:coding}
	\end{minipage}
	\begin{minipage}[t]{0.32\textwidth}
		\centering
		\includegraphics[width=5.5cm,height=4cm]{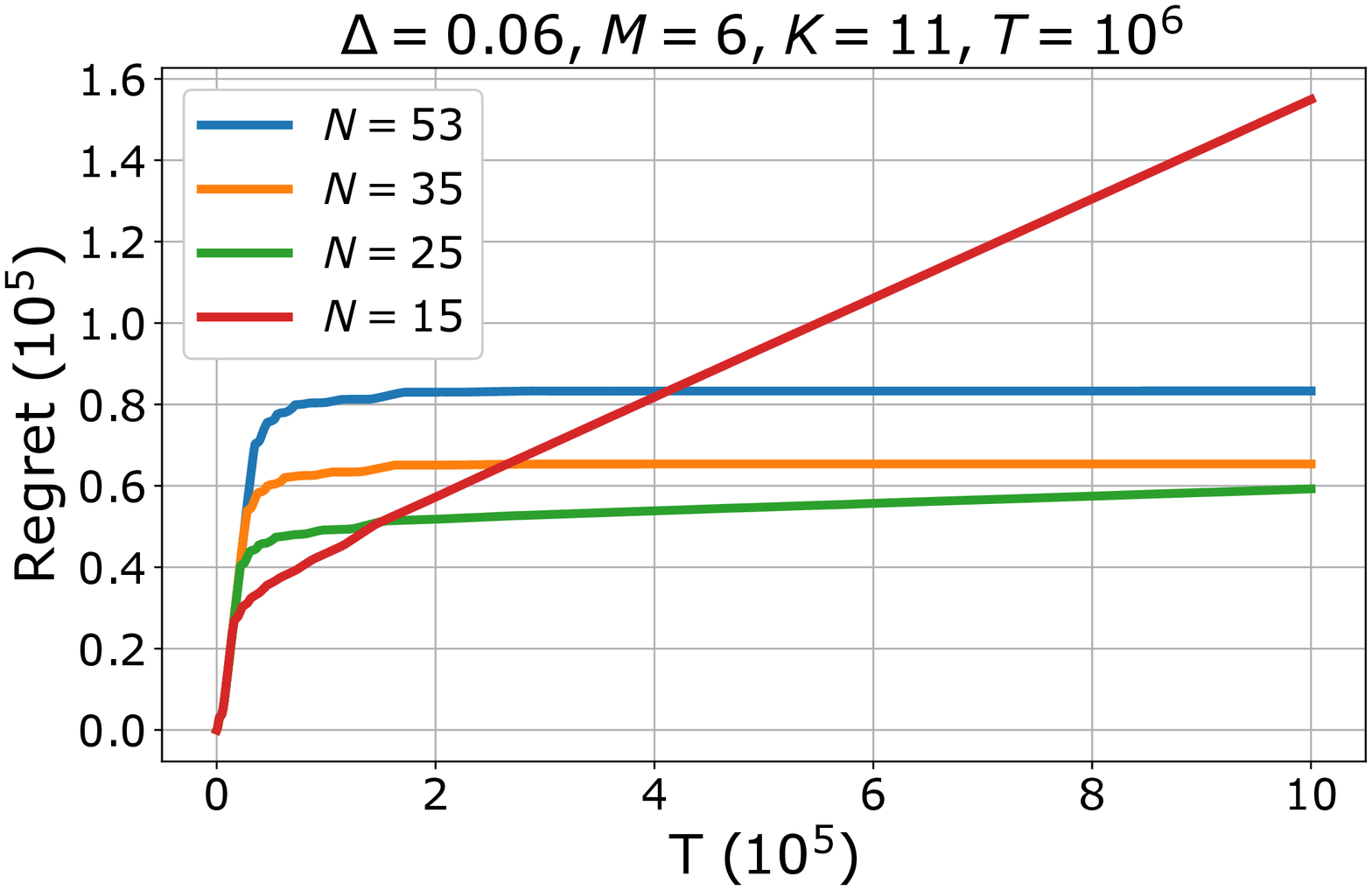}
		\caption{Different codeword lengths}
		\label{fig:length}
	\end{minipage}
	\begin{minipage}[t]{0.32\textwidth}
		\centering
		\includegraphics[width=5.5cm,height=4cm]{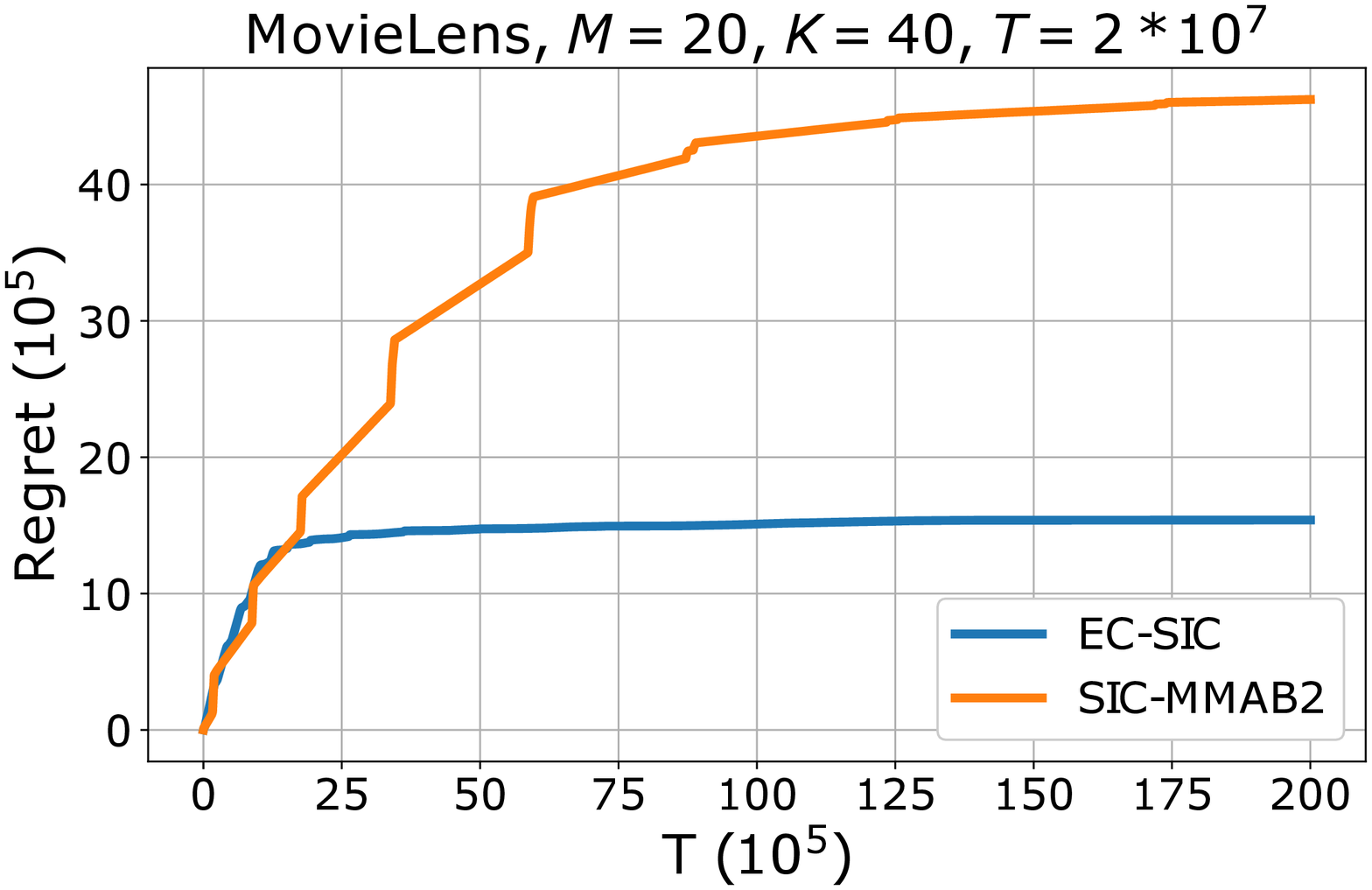}
		\caption{The MovieLens dataset}
		\label{fig:movie}
	\end{minipage}
    \end{figure*}
    
    	\vspace{-0.1in}
	\section{Algorithm Enhancement}
	\label{sec:enhance}
	EC-SIC  has nice theoretical performance guarantees, but we have noticed that in practice some minor enhancements improve its performance significantly, which are \congr{shown in the next section}.  First, after each exploration and communication phase, player $j$ can use the active arm with the $j$-th best empirical mean (sent by the leader) as her communication arm for the next around, instead of the $j$-th active arm. Since players keep receiving rewards from their communication arms while waiting for communication or receiving bit $0$, using an arm with higher empirical mean can lead to a lower loss in these time steps. 
	
	Second, we have observed in practice that the first one or two explorations do not lead to effective acceptation or rejection even when the game is easy, which means all the communication losses during these phases are incurred with no benefit (this is much larger than the exploration loss). Thus, $p$ can be initialized to a larger integer (e.g. $5$), which leads to a longer exploration to start with and less ineffective communication. 
	
	\cssr{Lastly, if $\mu_{\min}$ and $\Delta$ in Assumption \ref{asp} are not available, \emph{adaptive estimation with confidence intervals} can be used to replace the true $\mu_{\min}$ and $\Delta$ in EC-SIC. The influence of mismatched $\mu_{\min}$ and $\Delta$ are evaluated in the experiments and reported in Appendix \ref{appendix:exp}.}
	
	\vspace{-0.1in}
	\section{Experiments}
	\label{sec:sim}
	
	%In this section, practical performance of EC-SIC and previous No-Sensing algorithms are measured under different game difficulty. Performance of using different coding techniques have also been covered. Without loss of generality, all arms' reward distribution follow Bernoulli distribution with $\mu_{\min}=0.3$ and we set $\epsilon=\frac{\Delta}{8}$ for illustration.

    Numerical experiments have been carried out to verify the analysis of EC-SIC and compare its empirical performance to other methods. All rewards follow the Bernoulli distributions with $\mu_{\min}=0.3$, and we set $\epsilon=\Delta/8$. Results are obtained by averaging over 500 experiments. More detailed discussions and additional results can be found in Appendix~\ref{appendix_sim}.% of the supplementary material.
    
    We compare state of the art algorithms under both easy and difficult bandit game settings. EC-SIC (with repetition code), ADAPTED SIC-MMAB, SIC-MMAB2, and the algorithm proposed by \cite{lugosi2018multiplayer} (labeled as ``no-sensing-MC'') are first compared in a relatively easy game ($\Delta = 0.06$). Fig.~\ref{fig:easy_game} shows that even in an easy game, no-sensing-MC could not finish exploration within $10^6$ time steps, and ADAPTED SIC-MMAB has poor performance compared to the other two. Both EC-SIC and SIC-MMAB2 converge to the optimal arm set quickly, but the overall regret of EC-SIC is smaller. For a hard game with $\Delta = 0.01$, Fig.~\ref{fig:hard_game} shows that EC-SIC is superior to SIC-MMAB2. %The periodic inflections in the curve show the iteration of exploration and communication phases. At first, the exploration phases of both algorithms lead to a rapid-growing loss. However, with the shared statistics, EC-SIC identifies most of optimal arms faster than SIC-MMAB2. Then, some additional loss is paid to find the $M$-th optimal arm because the small gap $\Delta$ is hard to be distinguished.
    
    A detailed comparison of EC-SIC with SIC-MMAB2 is done by comparing their regrets as a function of the gap $\Delta$ in Fig.\ref{fig:game_difficulty}. We see that when the game is not extremely difficult ($\Delta > 10^{-4}$), EC-SIC has better performance since players benefit from sharing statistics. When $\Delta$ becomes extremely small, the required communication length increases significantly, leading to a dominating communication regret in EC-SIC that cannot be offset by the benefits of sharing statistics. %As a by-product, Fig.\ref{fig:game_difficulty} corroborates Theorem~\ref{regret_overall_theorem} in that the overall regret is linear in $\log(1/\Delta)$.
	
	%More extreme tests have been performed with $\Delta=10^{-1}$ to $\Delta=10^{-6}$. Fig.\ref{fig:game_difficulty} shows the regret of EC-SIC and SIC-MMAB2 as a function of the gap $\Delta$. For a normal range of $\Delta$, EC-SIC has better performance since players are benefiting from sharing statistics. When $\Delta$ becomes extremely small, the requirement of communication length increases tremendously, which leads to a dominating communication loss in EC-SIC that cannot be offset by the benefits of sharing statistics. Thus, SIC-MMAB2 shows a more stable performance in these cases. As a by-product, Fig.\ref{fig:game_difficulty} also corroborates Theorem~\ref{regret_overall_theorem} in that the overall regret is linear in $\log(\frac{1}{\Delta})$.
	
	Fig.~\ref{fig:coding} reports the performance while using different Z-channel codes in communication.  We observe that \cssr{modified} Hamming Code has the best performance, which is due to its superior error correction capability. This observation also implies that with a near-optimal code that is specifically designed for Z-channel, performance of EC-SIC can be further improved.
	
	%Except from repetition code that has been used for the above experiments, flip code and Hamming code are also powerful coding for Z-channel. Fig.~\ref{fig:coding} shows the performance of these three codes. It can be observed that Hamming Code has the best performance over the other two, which coincides with its strong error correction ability. This observation also implies that if a specific designed code for Z-channel that can approach the optimal code, performance of EC-SIC can still be improved.
	
	We also evaluate the impact of \congr{codeword} length on the regret. For our simulation setting, the theoretical analysis requires a repetition code length $N = 53$ to transmit one bit, in order to achieve an error rate of $\frac{1}{T}$. We are interested in evaluating whether the theoretically required code length can be shortened in practice. Under the easy game setting of Fig.~\ref{fig:easy_game} with $2000$ rounds averaging, Fig.~\ref{fig:length} shows that with $N$ decreasing from $53$ to $35$, the regret decreases $20\%$. More importantly, it shows that the convergence of EC-SIC does not change. When further reducing $N$ to $25$, we see the regret curve trends upward at large $t$, which represents a non-negligible loss due to unsuccessful communications. With $N=15$, the regret increases rapidly, indicating that players suffer from an increased error rate. It is thus essential to strike a balance between error rate and communication loss.
	
	Lastly, we evaluate EC-SIC on \congr{a real world dataset}: the movie watching dataset (ml-20m) from MovieLens \citep{harper2016movielens}. It consists of watching data of more than $2\times 10^4$ movies from over $10^5$ users between January 09, 1995 and March 31, 2015. \congr{In} the pre-processing, we group these movies into $K=40$ \congr{categories} by their total number of views from high to low. The \congr{binary} reward at time $t$ (hour) is \congr{defined as} whether there are users watching films in this group, and we replicate it to a final reward sequence of length $T=2\times 10^7$. $M=20$ players are assumed to engage in the game. This final sequence has $\Delta\approx 0.007$ and $\mu_{\min}\approx 0.6$. \cssr{Compared to synthetic datasets, this setting poses a larger and more difficult game.} For each \congr{experiment}, the reward sequence is randomly shuffled. We report the \congr{cumulative} regret of EC-SIC and SIC-MMAB2, \congr{averaged over $100$ experiments,} in Figure \ref{fig:movie}. One can see that the advantage of EC-SIC over SIC-MMAB2 is significant for this real-world dataset. Intuitively, this is because the game is hard ($\Delta\approx 0.007$)\footnote{However, the game is also not too hard for communication to be ineffective as the case of Fig.~\ref{fig:game_difficulty}.}, and $M$ and $K$ are also large.
	
%	\cssr{Within this larger and harder game setting}, the advantage of EC-SIC over SIC-MMAB2 is \congr{significant}. 
	
	%\congc{Chengshuai, can you add one or two sentences to explain why this is a setting that the gain of EC-SIC is large? Because the game is difficult?}
    
	%Also, to achieve a error rate of $\frac{1}{T}$, a large amount of coding bits is required theoretically, i.e., $N = 53$ for one bit with repetition code in the above easy game setting ($T=10^6$). Thus, although order-optimal, the communication loss is still the major cause to regret. Here, we evaluate whether it is necessary to maintain a theoretical computed coding length. Under the same easy game setting over $2000$ rounds average, Fig.~\ref{fig:length} shows that with $N$ decreasing from $53$ (theoretical computed) to $35$, the regret decreases over $20\%$, and the curve also maintains flat after converging, which implies that almost all rounds converge successfully. When further decreasing $N$ to $25$, despite of decreasing average regret, the curve shows an upward trend at the end, which represents an non-negligible amount of rounds haven't converged due to unsuccessful communications. With $N=15$, the overall regret increases rapidly. At this length, players are no longer benefiting from a lower communication loss but start to suffer from an increased error rate. Thus, it is essential to achieve a balance between error rate and communication loss.
	
	\vspace{-0.1in}
	\section{Related Work}
	\label{sec:related}
	
	Depending on how information is shared and actions are determined, existing literature can be categorized into centralized or decentralized (distributed) MP-MAB problems.  The centralized scenario can be viewed as an application of the multiple-play bandit \citep{anantharam1987asymptotically,komiyama2015optimal}. A more interesting and challenging problem, introduced by \cite{liu2010distributed} and \cite{anandkumar2011distributed}, lies in the decentralized scenario where explicit communications between players are not allowed and thus collision may happen. For the collision-sensing MP-MAB problem, earlier works attempt to let each player play the single-player MAB game while avoiding collisions for as much as possible; see \citep{liu2010distributed,avner2014concurrent,rosenski2016multi} for some representative approaches. 

% 	Majority of the decentralized MP-MAB literature focuses on the collision-sensing problem, where players can observe the reward \emph{and} whether a collision has occurred. As the stochastic MAB problem for a single player is well understood, a natural idea is to let each player play the single-player MAB game while avoiding collisions for as much as possible. Following this principle, \cite{liu2010distributed} combines a time-division sharing protocol and the celebrated UCB algorithm to achieve a sub-linear regret. However, a pre-agreed ID to each player is required so that they can use different phases to avoid collision. \cite{avner2014concurrent} proposes the MEGA algorithm, which is based on the $\epsilon$-greedy algorithm and augmented by a collision avoidance mechanism that is inspired by the classical ALOHA protocol in wireless networking. Although it eliminates the need for a pre-agreement ranking, MEGA leads to a regret of $O(T^{\frac{2}{3}})$, which is not order-optimal. \cite{rosenski2016multi} presents a Musical Chair approach that achieves better regret without a pre-agreement. It can be viewed as an extension of the Explore-then-Commit algorithm by adding a Musical Chair  phase, where $M$ players uniformly sample the top $M$ arms and stay on one arm if there is no collision (i.e. take the chair). 
	
	The SIC-MMAB algorithm in \cite{Boursier2019} is closely relevant to our work, which proposes to \emph{exploit} collisions as opposed to \emph{avoiding} them.   
	%Another nice feature of SIC-MMAB is that the number of players $M$ is estimated online instead of being a prerequisite.  
	%It also disproves the two lower bounds of the decentralized MP-MAB model in \cite{liu2010distributed} and \cite{besson2017multi}. 
	\cssr{\cite{proutiere2019optimal} further refines this idea and decreases the communication regret so that the lower bound of the centralized setting can be approached asymptotically for Bernoulli distributed rewards.} The SIC-MMAB principle has subsequently been applied to other multi-player settings. For example, \cite{kalathil2014decentralized} considers an extended multiplayer model where reward distribution varies for each player. \cite{bistritz2018distributed} proposes a Game of Thrones algorithm that achieves a regret of $O(\log^{2+\kappa}(T))$. This is further improved by \cite{kaufmann2019new}, leading to an improved regret of $O(\log^{1+\kappa}(T))$.
	
	The no-sensing model, on the other hand, is very challenging and  limited progress has been made so far. In \cite{lugosi2018multiplayer}, sample means are rectified by the probability of collision and then the same Musical Chair approach is adopted. As discussed in Section~\ref{sec:intro}, \cite{Boursier2019} touches upon the no-sensing model with ADAPTED SIC-MMAB and SIC-MMAB2. However, the former has a communication loss of $O(\log(T)\log^2(\log(T)))$ that dominates the total regret, while the latter drifts away from communication of full statistics, thus is fundamentally incapable of approaching the centralized performance.
	
	\vspace{-0.1in}
	\section{Conclusion}
	\label{sec:conc}
	In this work, we have proposed the EC-SIC algorithm for the no-sensing MP-MAB problem with forced collision. We proved that it is possible for a decentralized MP-MAB algorithm without collision information to approach the performance of its centralized counterpart. Recognizing that communication under \congr{the} no-sensing setting corresponds to the Z-channel model in information theory, optimal error correction codes are applied for reliable communication via collision. With this tool, we return to the original idea of utilizing forced collisions to share complete arm statistics among players. By expanding exploration phases and fixing the message length, an order-optimal communication loss is achieved. Practical simulation results with several Z-channel codes have proved the superiority of EC-SIC algorithm under different bandit game settings\congr{, using both synthetic and real-world datasets}. % For the further work, it would be essential to find a more suitable coding technique for the Z-channel model. And the trade-off between error probability and communication loss is also worth studying to make the algorithm more practical.
	
	\clearpage

    \subsubsection*{Acknowledgements}
    JY acknowledges the support from U.S. National Science Foundation under Grant ECCS-1650299.

	\medskip
	\small
	\bibliographystyle{apalike}
	\bibliography{MAB}

\begin{thebibliography}{}

\bibitem[Anandkumar et~al., 2011]{anandkumar2011distributed}
Anandkumar, A., Michael, N., Tang, A.~K., and Swami, A. (2011).
\newblock Distributed algorithms for learning and cognitive medium access with
  logarithmic regret.
\newblock {\em IEEE Journal on Selected Areas in Communications},
  29(4):731--745.

\bibitem[Anantharam et~al., 1987]{anantharam1987asymptotically}
Anantharam, V., Varaiya, P., and Walrand, J. (1987).
\newblock Asymptotically efficient allocation rules for the multiarmed bandit
  problem with multiple plays - part {I}: {IID} rewards.
\newblock {\em IEEE Trans. Autom. Control}, 32(11):968--976.

\bibitem[Avner and Mannor, 2014]{avner2014concurrent}
Avner, O. and Mannor, S. (2014).
\newblock Concurrent bandits and cognitive radio networks.
\newblock In {\em Joint European Conference on Machine Learning and Knowledge
  Discovery in Databases}, pages 66--81. Springer.

\bibitem[Barbero et~al., 2006]{barbero2006maximum}
Barbero, A., Ellingsen, P., Spinsante, S., and Ytrehus, O. (2006).
\newblock Maximum likelihood decoding of codes on the {Z}-channel.
\newblock In {\em IEEE International Conference on Communications}, volume~3,
  pages 1200--1205. IEEE.

\bibitem[Besson and Kaufmann, 2018]{besson2017multi}
Besson, L. and Kaufmann, E. (2018).
\newblock Multi-player bandits revisited.
\newblock In {\em Proceedings of Algorithmic Learning Theory}, pages 56--92.

\bibitem[Bistritz and Leshem, 2018]{bistritz2018distributed}
Bistritz, I. and Leshem, A. (2018).
\newblock Distributed multi-player bandits - a game of thrones approach.
\newblock In {\em Advances in Neural Information Processing Systems}, pages
  7222--7232.

\bibitem[Boursier and Perchet, 2019]{Boursier2019}
Boursier, E. and Perchet, V. (2019).
\newblock {SIC-MMAB}: Synchronisation involves communication in multiplayer
  multi-armed bandits.
\newblock In {\em Advances in Neural Information Processing Systems}, pages
  12071--12080.

\bibitem[Chen et~al., 2013]{chen2013optimal}
Chen, P.-N., Lin, H.-Y., and Moser, S.~M. (2013).
\newblock Optimal ultrasmall block-codes for binary discrete memoryless
  channels.
\newblock {\em IEEE Transactions on Information Theory}, 59(11):7346--7378.

\bibitem[Gallager, 1968]{gallager1968information}
Gallager, R.~G. (1968).
\newblock {\em Information theory and reliable communication}.
\newblock John Wiley \& Sons, USA.

\bibitem[Harper and Konstan, 2015]{harper2016movielens}
Harper, F.~M. and Konstan, J.~A. (2015).
\newblock The {MovieLens} datasets: History and context.
\newblock {\em ACM Transactions on Interactive Intelligent Systems}, 5(4).

\bibitem[Kalathil et~al., 2014]{kalathil2014decentralized}
Kalathil, D., Nayyar, N., and Jain, R. (2014).
\newblock Decentralized learning for multiplayer multiarmed bandits.
\newblock {\em IEEE Transactions on Information Theory}, 60(4):2331--2345.

\bibitem[Kaufmann and Mehrabian, 2019]{kaufmann2019new}
Kaufmann, E. and Mehrabian, A. (2019).
\newblock New algorithms for multiplayer bandits when arm means vary among
  players.
\newblock {\em arXiv preprint arXiv:1902.01239}.

\bibitem[Komiyama et~al., 2015]{komiyama2015optimal}
Komiyama, J., Honda, J., and Nakagawa, H. (2015).
\newblock Optimal regret analysis of {Thompson} sampling in stochastic
  multi-armed bandit problem with multiple plays.
\newblock In {\em Proceedings of the 32nd International Conference on Machine
  Learning}, pages 1152--1161, Lille, France.

\bibitem[Lai and Robbins, 1985]{lai1985asymptotically}
Lai, T.~L. and Robbins, H. (1985).
\newblock Asymptotically efficient adaptive allocation rules.
\newblock {\em Advances in applied mathematics}, 6(1):4--22.

\bibitem[Liu and Zhao, 2010]{liu2010distributed}
Liu, K. and Zhao, Q. (2010).
\newblock Distributed learning in multi-armed bandit with multiple players.
\newblock {\em IEEE Transactions on Signal Processing}, 58(11):5667--5681.

\bibitem[Lugosi and Mehrabian, 2018]{lugosi2018multiplayer}
Lugosi, G. and Mehrabian, A. (2018).
\newblock Multiplayer bandits without observing collision information.
\newblock {\em arXiv preprint arXiv:1808.08416}.

\bibitem[Nayyar et~al., 2016]{nayyar2016regret}
Nayyar, N., Kalathil, D., and Jain, R. (2016).
\newblock On regret-optimal learning in decentralized multiplayer multiarmed
  bandits.
\newblock {\em IEEE Transactions on Control of Network Systems}, 5(1):597--606.

\bibitem[Proutiere and Wang, 2019]{proutiere2019optimal}
Proutiere, A. and Wang, P.-A. (2019).
\newblock An optimal algorithm in multiplayer multi-armed bandits.
\newblock {\em arXiv preprint arXiv:1909.13079}.

\bibitem[Rosenski et~al., 2016]{rosenski2016multi}
Rosenski, J., Shamir, O., and Szlak, L. (2016).
\newblock Multi-player bandits -- a musical chairs approach.
\newblock In {\em Proceedings of The 33rd International Conference on Machine
  Learning}, pages 155--163.

\bibitem[{Tallini} et~al., 2002]{Tallini2002}
{Tallini}, L.~G., {Al-Bassam}, S., and {Bose}, B. (2002).
\newblock On the capacity and codes for the {Z}-channel.
\newblock In {\em Proceedings of the IEEE International Symposium on
  Information Theory}, page 422.

\end{thebibliography}
	
% 	\newpage
% 	\onecolumn
% 	\aistatstitle{Supplementary Material: Decentralized Multi-player Multi-armed Bandits with No Collision Information}
%     \aistatsauthor{ Chengshuai Shi \And Wei Xiong \And Cong Shen \And Jing Yang}
% 	\aistatsaddress{ University of Virginia \And University of Virginia \And University of Virginia  \And Pennsylvania State University} 

	\newpage
	\onecolumn
	\appendix
	\thispagestyle{empty}
	
	\hsize\textwidth
    \linewidth\hsize \toptitlebar {\centering
    {\Large\bfseries Supplementary Material: Decentralized Multi-player Multi-armed Bandits with No Collision Information \par}}
    \bottomtitlebar
    
    \aistatsauthor{Chengshuai Shi \And Wei Xiong \And Cong Shen \And Jing Yang}
    
    \appaddress{University of Virginia \And University of Virginia \And University of Virginia  \And Pennsylvania State University}
    
    \vspace{0.2in}
    
%	\section{Information Theory and Error Exponent}

	\section{Error Correction Codes for Communication over the Z-channel}
	\label{appendix_coding}
	More details about the representative coding techniques for the Z-channel are provided in this section.
	\subsection{Repetition code}
	%Due to the asymmetry of Z-channel, 
	Repetition code seems simple but is surprisingly powerful in the Z-channel. \cite{chen2013optimal} has proved that it is the optimal code for $Q=1$. The encoding and decoding processes are described as follows.
	\begin{itemize}
		\item Encoding. Repeat bit $0$ or bit $1$ in message $\boldsymbol{m}$ for $A$ times to generate codeword $\boldsymbol{X}$.
		\item Decoding. For channel output $\boldsymbol{Y}$, if there exists $i$ such that $\boldsymbol{Y}[i]\not=0$, then the decoder outputs 0. Otherwise, we have $\boldsymbol{Y}[i]=0$ for all $i$, and the decoder outputs 1.
	\end{itemize}
	With a crossover probability {no larger than $1-\mu_{\min}$}, the bit error probability is:
	\begin{equation*}
	P(Y_i \not= X_i) <  (1-\mu_{\min})^A.
	\end{equation*}
	For a message length of $Q$ bits, the error probability is:
	\begin{equation*}
	\begin{aligned}
	P_e &= P(\exists i, Y_i\not = X_i) \\ %&= 1 - P(\forall i, Y_i = X_i) \\
	& = 1-P(Y_i = X_i)^Q \\ %&= 1 - (1- P(Y_i \not= X_i))^Q\\
	& \leq 1-(1-(1-\mu_{\min})^A)^Q \\ &\leq Qe^{-\mu_{\min}A}.
	\end{aligned}
	\end{equation*}
	With the choice of $A = \lceil \frac{\log(QT)}{\mu_{\min}}\rceil$, we have $P_e<\frac{1}{T}$. Thus, the total code length for a $Q$-bit message is:
	\begin{equation*}
	N_{rep} = Q \left \lceil \frac{\log(QT)}{\mu_{\min}} \right \rceil.
	\end{equation*}
	With $N_{rep}=\Theta(\log(T))$, the regret remains order-optimal.
	
	\subsection{Flip code}
	%And to further utilize the property of Z-channel, 
	The flip code is designed by \cite{chen2013optimal} to better utilize the Z-channel property. The encoding and decoding processes are illustrated with the case of 4 codewords as follows.% are borrowed from them for 4 code-words coding:
	\begin{itemize}
		\item Encoding. Assuming we encode every two bits into a $2A$-bit codeword, the encoding function is:
		$$(0,0)\to(\underbrace{1,...,1}_{A},\underbrace{1,...,1}_{A});\  (0,1)\to(\underbrace{1,...,1}_{A},\underbrace{0,...,0}_{A});$$
		$$(1,0)\to(\underbrace{0,...,0}_{A},\underbrace{1,...,1}_{A});\  (1,1)\to(\underbrace{0,...,0}_{A},\underbrace{0,...,0}_{A}).$$
		\item Decoding. It is similar to the repetition code. A codeword $\boldsymbol{m}$ of length $2A$ will be divided into $\boldsymbol{m_1}$ of length $A$ and $\boldsymbol{m_2}$ of length $A$
		\begin{itemize}
			\item if all bits in $\boldsymbol{m_1}$ and $\boldsymbol{m_2}$ are $1$s, decoder outputs $(0,0)$;
			\item if all bits in $\boldsymbol{m_1}$ are $1$s and $\boldsymbol{m_2}$ contains $0$, decoder outputs $(0,1)$;
			\item if $\boldsymbol{m_1}$ contains $0$ and all bits in $\boldsymbol{m_2}$ are $1$s, decoder outputs $(1,0)$;
			\item for all other cases, decoder outputs $(1,1)$.
		\end{itemize}
	\end{itemize}
	With a crossover probability {no larger than $1-\mu_{\min}$}, the bit error probability is \citep{chen2013optimal}:
	\begin{equation*}
	P(Y_i\not=X_i) \leq (1-\mu_{\min})^A-\frac{1}{4}(1-\mu_{\min})^{2A}
	\end{equation*}
	The inequality holds because the function $q^A-\frac{1}{4}q^{2A}$ monotonically increases for $q\in [0,1]$. For a message length of $Q$ bits (we assume $Q$ is even here, otherwise an additional bit $0$ can always be padded to make it even), the error probability is:
	\begin{equation*}
	\begin{aligned}
	P_e &= P(\exists i, Y_i\not = X_i)\\% = 1 - P(\forall i, Y_i = X_i)\\
	& = 1-P(Y_i = X_i)^{\frac{Q}{2}}\\% = 1 - (1- P(Y_i \not= X_i))^{\frac{Q}{2}}\\
	& \leq 1-(1-(1-\mu_{\min})^A+\frac{1}{4}(1-\mu_{\min})^{2A})^{\frac{Q}{2}} \\
	& = 1 - (1-\frac{1}{2}(1-\mu_{\min})^A)^Q \\ &\leq \frac{Q}{2}(1-\mu_{\min})^A \\ &\leq \frac{Q}{2}e^{-\mu_{\min}A}.
	\end{aligned}
	\end{equation*}
	With the choice of $A = \lceil \frac{\log(QT/2)}{\mu_{\min}}\rceil$, we have $P_e <\frac{1}{T}$. Thus, the total codeword length for a message of length $Q$ is:
	\begin{equation*}
	N_{flip} = Q \lceil \frac{\log(QT/2)}{\mu_{\min}}\rceil.
	\end{equation*}
	With $N_{flip}=\Theta(\log(T))$, the regret remains order-optimal.
	
	\subsection{Modified Hamming code}
	As the number of codewords increases to $16$ (4 bits), a modified ($7$,$4$) Hamming Code can be designed. It is a concatenated code, with the standard ($7$,$4$) Hamming code as the inner code and  a repetition  code as the outer code. 
	\begin{itemize}
		\item Encoding. The standard (7,4) Hamming encoding matrix $\boldsymbol{G}$ is first used to encode a 4-bit message to a 7-bit codeword. Then we repeat each bit of the 7-bit codeword $A$ times, leading to a $7A$-bit codeword;
		\item Decoding. First by using the repetition code's decoding rule, $7A$-bit coded message is decoded into $7$ bits. This $7$ bits is then decoded with the standard (7,4) Hamming decoding matrix $\boldsymbol{H}$. The final output is a decoded $4$-bit message.
		$$ \boldsymbol{G}=
		\left(
		\begin{matrix}
		1& 1& 0& 1\\
		1& 0& 1& 1\\
		1& 0& 0& 0\\
		0& 1& 1& 1\\
		0& 1& 0& 0\\
		0& 0& 1& 0\\
		0& 0& 0& 1
		\end{matrix}
		\right),\ 
		\boldsymbol{H}=
		\left(
		\begin{matrix}
		1& 0& 1& 0& 1& 0& 1\\
		0& 1& 1& 0& 0& 1& 1\\
		0& 0& 0& 1& 1& 1& 1
		\end{matrix}
		\right)
		$$
	\end{itemize}
	The repetition code reduces the crossover probability from $q$ to $q^A$. With this relatively small crossover probability and the error correction capability of the Hamming Code, a reliable performance can be achieved. As stated by \cite{barbero2006maximum}, with $q^A$ as the crossover probability, we have the following error rate for the Hamming Code over a Z-channel.
	\begin{equation*}
		P(Y_i\not= X_i) = \frac{7}{2}(q^A)^2+o((q^A)^3).
	\end{equation*}
We neglect $o((q^A)^3)$ in the following analysis. The error probability of transmitting $Q$-bit messages (assuming $Q$ can be divided by $4$) using the ($7$,$4$) modified Hamming code is:
	\begin{equation}
	\label{eqn:ham_bound}
	\begin{aligned}
		P_e &= P(\exists i, Y_i\not = X_i)\\% = 1 - P(\forall i, Y_i = X_i)\\
		& = 1-P(Y_i = X_i)^{\frac{Q}{4}}\\% = 1 - (1-P(Y_i\not= X_i))^{\frac{Q}{4}}\\
		& = 1-(1-\frac{7}{2}q^{2A})^{\frac{Q}{4}} \\ &\leq 1-(1-\frac{7}{2}(1-\mu_{\min})^{2A})^{\frac{Q}{4}}\\
		& \leq \frac{7Q}{8}(1-\mu_{\min})^{2A} \\ &\leq \frac{7Q}{8} e^{-2\mu_{\min}A}.
	\end{aligned}
	\end{equation}
	By choosing $A = \frac{1}{2}\lceil \frac{\log(7QT/8)}{\mu_{\min}}\rceil$, we have $P_e < \frac{1}{T}$. Thus, the total codeword length for a message of length $Q$ is:
	\begin{equation*}
		N_{ham} = \frac{7Q}{8} \left \lceil \frac{\log(7QT/8)}{\mu_{\min}} \right \rceil,
	\end{equation*}
	which is still $\Theta(\log(T))$, but the bound in \eqref{eqn:ham_bound} indicates an improvement over the repetition code and flip code.
	
	\section{Proofs for the Regret Analysis}
	\label{appendix_proof}
	\subsection{Initialization phase}
	\label{appendix_proof_init}
	\cssr{The initialization phase starts with a ``Muscial Chair'' phase, which assigns a unique external rank in ${1,...,K}$ for each of the player. Then the following sequential hopping protocol converts the external rank into a unique internal rank in ${1,...,M}$ for each player and estimates the number of players $M$.} The proof for Lemma~\ref{regret_init_lemma} is the same as  Lemma 11 in  \cite{Boursier2019}. We re-state the algorithm and the proof for the sake of completeness.
	\begin{algorithm}[htb]
		\small
		\caption{Musical\_Chair}
		\label{mc}
		\begin{algorithmic}[1]
			\Require [$K$], $T_0$
			\Ensure Fixed (external rank)
			\State Initialize Fixed $\gets -1$
			\For {$T_0$ time steps}
			\If {Fixed $=-1$} 
			\State Sample $k$ uniformly at random in [$K$], play it in round $t$ \cssr{and receive reward $r(t)$}
			\If {$\cssr{r(t)}>0$} Fixed $\gets k$
			\EndIf
			\EndIf
			\EndFor
			\State \Return {Fixed}
		\end{algorithmic}
	\end{algorithm}
	
	\begin{algorithm}[htb]
		\small
		\caption{Estimate\_M\_NoSensing}
		\label{estimate_m}
		\begin{algorithmic}[1]
			\Require $k$ (external rank), $T_c$
			\Ensure $M$, $j$ (internal rank)
			\State Initialize $M \gets 1$ and $\pi \gets k$
			\For {$n=1,2,...,2K$}
			\State $r \gets 0$
			\If {$n \leq 2k$}
			\For {$T_c$ time steps} 
			\State Pull $\pi$ and get reward $r_{\pi}(t)$
			\State Update $r \gets r+r_{\pi}(t)$
			\EndFor
			\If {r=0} $M \gets M+1$, $j \gets j+1$
			\EndIf
			\Else \ $\pi \gets \pi+1 (\text{mod }K)$ 
			\For {$T_c$ time steps} 
			\State Pull $\pi$ and get reward $r_{\pi}(t)$
			\State Update $r \gets r+r_{\pi}(t)$
			\EndFor
			\If {r=0} $M \gets M+1$ 
			\EndIf
			\EndIf
			\EndFor
			\State \Return {$M$, $j$}
		\end{algorithmic}
	\end{algorithm}
	
	\begin{proof}
		As there is at least one arm that is not played by all the other players, the probability to encounter a positive reward for player $j$ during the Musical Chair process at time $t$ is lower bounded by $\frac{\mu_{\min}}{K}$. Thus with the choice of $T_0 = K\lceil \frac{\log(T)}{\mu_{\min}} \rceil$, the probability for a single player to encounter only zero rewards until time $T_0$ is:
		\begin{equation*}
		P(\forall t\leq T_0, \cssr{r^j(t)=0}) \leq (1-\frac{\mu_{\min}}{K})^{T_0}\leq \exp(-\frac{T_0\mu_{\min}}{K}) \leq \frac{1}{T}.
		\end{equation*}
		Applying a union bound over all players, the Musical Chair process is successful with a probability at least $1-O(\frac{M}{T})$.
		
		Now we analyze the \emph{Estimate\_M\_NoSensing} protocol. Similar to using repetition code for communication, the probability that a player detects a ``collision'' while there is none is:
		\begin{equation*}
		P_e \leq (1-\mu_{\min})^{T_c} \leq e^{-\mu_{\min} T_c} \leq \frac{1}{T}.
		\end{equation*}
		The union bound over the $M$ players and the $2K$ blocks yields that it will be successful with a probability at least $1-O(\frac{MK}{T})$. Furthermore, the initialization phase lasts $3KT_c$ time steps. Hence the regret satisfies:
		\begin{equation*}
		R^{init}\leq 3MKT_c = 3MK\lceil\frac{\log(T)}{\mu_{\min}}\rceil.
		\end{equation*}
	\end{proof}
	
	\subsection{Exploration phase}
	This section aims at proving Lemma~\ref{regret_explo_lemma}, which bounds the exploration regret. We start with the required lemmas and then go back to proving Lemma~\ref{regret_explo_lemma}.
	\subsubsection{Proof for Lemma~\ref{regret_good_comm}}
	This lemma ensures \cssr{that event $A_2$ happens} with a high probability. As mentioned before, there are at most $\log_2(T)$ communication phases, which leads to at most $(MK+2M)\log_2(T)$ instances of transmissions to send arm statistics to the leader and send the acc/rej arm sets to the followers. Since there are at most $K$ arms to be accepted or rejected, no more than $MK$ instances of transmissions are required for sending the acc/rej arm sets.
	\begin{proof}
		Denote $P(\cssr{\xi_p})$ as the probability that the decoding of a $Q$-bit message produces a wrong result at round \cssr{$p$}. With the choice of $N'=\max\{\frac{Q}{\cssr{C_Z(1-\mu_{\min})}}, \frac{\log(T)}{E(R)}\}$, and \cssr{$X_p$, $Y_p$} denoting the message before encoding and after decoding at round $\cssr{p}$, we have
		\begin{equation*}
		P(\xi_\cssr{p}) = P(\cssr{X_p \not= Y_p}) \leq \frac{1}{T}.
		\end{equation*}
		A simple union bound leads to
		\begin{equation*}
		P_r = 1- P(\cup \cssr{p}, \xi_\cssr{p}) \geq 1-\sum_{\cssr{p}}P(\xi_\cssr{p}) \geq 1-\frac{(MK+2M)\log(T)+MK}{T}\cssr{=1-O\left(\frac{MK\log(T)}{T}\right)}.
		\end{equation*}
	\end{proof}
	
	\subsubsection{Proof for Lemma~\ref{regret_correct_ar}}
	Lemma~\ref{regret_correct_ar} ensures the acceptance and rejection of arms are successful with a high probability, which requires a good estimation of the statistics of arms. The estimation error consists of two parts: the quantization error and the sampling error. We analyze them separately.
	\begin{proof}
		With the choice of $Q \geq \log_2(\frac{1}{\frac{\Delta}{4}-\epsilon})$, the quantization error \cssr{in phase $p$} can be bounded as:
		\begin{equation*}
		\begin{aligned}
		&\cssr{\forall i\in[M]}, |\bar{\mu}_i^p[k]-\hat{\mu}_i^p[k]| \leq \frac{\Delta}{4}-\epsilon,\\
		&|\bar{\mu}^p[k]-\hat{\mu}^p[k]| = \frac{ \left |\sum_{i=1}^M(\bar{\mu}_i^p[k]-\hat{\mu}_i^p[k])\cssr{T_p^i} \right |}{T_p}\leq \frac{\Delta}{4}-\epsilon.
		\end{aligned}
		\end{equation*}
		\cssr{For any active arm $k\in[K_p]$,} the gap between the sample mean $\hat{\mu}^p[k]$ (using all players' samples) and the true mean can be bounded with Hoeffding's inequality:
		\begin{equation*}
		%P\left (\exists p \leq n, |\hat{\mu}^p[k]-\mu[k]|\geq\sqrt{\frac{2\log(T)}{T_p}} \right)\leq \frac{2n}{T}.
		\cssr{P\left (|\hat{\mu}^p[k]-\mu[k]|\geq\sqrt{\frac{2\log(T)}{T_p}} \right)\leq \frac{2}{T}.
		}
		\end{equation*}
		Then, the overall gap between the quantized mean and the true mean \cssr{for any active arm $k\in [K_p]$} can be bounded as:
% 		\begin{equation*}
% 		\begin{aligned}
% 		&P(\exists p \leq n, |\bar{\mu}^p[k]-\mu[k]|>B_{T_p})\\
% 		 = & P(\exists p \leq n, |\bar{\mu}^p[k]-\hat{\mu}^p[k]+\hat{\mu}^p[k]-\mu[k]|\sqrt{\frac{2\log(T)}{T_p}}+\frac{\Delta}{4}-\epsilon)\\
% 		 \leq & P(\exists p \leq n, |\bar{\mu}^p[k]-\hat{\mu}^p[k]|+|\hat{\mu}^p[k]-\mu[k]|>\sqrt{\frac{2\log(T)}{T_p}}+\frac{\Delta}{4}-\epsilon)\\
% 		\leq & P(\exists p \leq n, |\hat{\mu}^p[k]-\mu[k]|> \sqrt{\frac{2\log(T)}{T_p}})\cup P(|\bar{\mu}^p[k]-\hat{\mu}^p[k]|>\frac{\Delta}{4}-\epsilon)\\
% 		 = & P(\exists p \leq n, |\hat{\mu}^p[k]-\mu[k]|> \sqrt{\frac{2\log(T)}{T_p}}) \\
% 		 \leq & \frac{2n}{T}.
% 		\end{aligned}
% 		\end{equation*}
        \cssr{
        \begin{equation*}
		\begin{aligned}
		&P\left (|\bar{\mu}^p[k]-\mu[k]|>B_{T_p} \right )\\
		 = & P \left (|\bar{\mu}^p[k]-\hat{\mu}^p[k]+\hat{\mu}^p[k]-\mu[k]|\geq\sqrt{\frac{2\log(T)}{T_p}}+\frac{\Delta}{4}-\epsilon \right )\\
		 \leq & P\left (|\bar{\mu}^p[k]-\hat{\mu}^p[k]|+|\hat{\mu}^p[k]-\mu[k]|\geq\sqrt{\frac{2\log(T)}{T_p}}+\frac{\Delta}{4}-\epsilon \right )\\
		\leq & P\left (|\hat{\mu}^p[k]-\mu[k]|\geq \sqrt{\frac{2\log(T)}{T_p}})\cup P(|\bar{\mu}^p[k]-\hat{\mu}^p[k]|\geq\frac{\Delta}{4}-\epsilon \right )\\
		 = & P\left (|\hat{\mu}^p[k]-\mu[k]|\geq \sqrt{\frac{2\log(T)}{T_p}} \right ) \\
		 \leq & \frac{2}{T}.
		\end{aligned}
		\end{equation*}
	There are at most $\log_2(T)$ iterations of exploration and communication.
	%Because $n$ is at most $\log_2(T)$ 
	By using a union bound of all these iterations and $K$ arms, Eqn. \eqref{equation_corret_ar} is obtained.}
	\end{proof}
	
	\subsubsection{Proof for Lemma~\ref{regret_artimes_lemma}}
	Lemma~\ref{regret_artimes_lemma} bounds the number of time steps an arm is pulled before being accepted or rejected, and is essential to control the rounds of exploration and communication. The proof is similar to the proof to Proposition 1 in \cite{Boursier2019}.
	\begin{proof}
		The proof is conditioned on \cssr{the typical event.}
		%a successful initialization $P_i$, correct communication for all rounds $P_r$, and a precise estimation of arm statistics $P_c$,, which can be combined into $P_s = 1-O(\frac{(MK+2M)\log(T)+MK}{T})$.
		We first consider an optimal arm $k$. Let $\Delta_k = \mu[k]-\mu_{(M+1)}$ be the gap between the arm $k$ and the first sub-optimal arm. Let $s_k$ be the first integer such that $4B_{s_k}\leq \Delta_k$. It satisfies:
		\begin{equation*}
		s_k\geq\frac{32\log(T)}{  \left (\Delta_k-\Delta+4\epsilon \right )^2}=\frac{32\log(T)}{ \left (\mu[k]-\mu_{(M)}+4\epsilon \right )^2}.
		\end{equation*}
		
		Recall that the number of time steps an active arm is pulled before the $p$-th exploration is $T_p = \sum_{l=1}^{p}M_l2^l\lceil \log(T)\rceil$. With a non-increasing $M_p$, it holds that 
		\begin{equation}
		\label{tp}
		T_{p+1}\leq 3T_p.
		\end{equation}
		
		For some $p$ such that $T_{p-1}\leq s_k<T_p$, the following inequalities are in order: $\Delta_{k} \geq 4B_{T_p}$; $|\bar\mu^p[k]-\mu[k]|\leq B_{T_p}$; and $|\bar\mu^p[i]-\mu[i]|\leq B_{T_p}$ for all sub-optimal arm $i$.
% 		\begin{equation}
% 		\notag
% 		\begin{aligned}
% 		&\Delta_{k} \geq 4B_{T_p}\\
% 		&|\bar\mu^p[k]-\mu[k]|\leq B_{T_p}\\
% 		&|\bar\mu^p[i]-\mu[i]|\leq B_{T_p}.\text{ for all sub-optimal arm } i%\\
% 		\end{aligned}
% 		\end{equation}
		We then have
		\begin{equation*}
		\bar\mu^p[k]-B_{T_p}\geq \bar\mu^p[\cssr{i}]+B_{T_p}+\mu[k]-\mu[i]-4B_{T_p}\geq \bar\mu^p[i]+B_{T_p}.
		\end{equation*}
		This suggests arm $k$ will be accepted at time $T_p$. Eqn. \eqref{tp} also leads to $T_p = O(s_k) = O  \left  (\frac{\log(T)}{(\mu[k]-\mu_{(M)}+4\epsilon)^2} \right )$. Thus, arm $k$ will be accepted after at most $O  \left  (\frac{\log(T)}{(\mu[k]-\mu_{(M)}+4\epsilon)^2} \right )$ pulls. The part of rejecting sub-optimal arms can be similarly proved with  $\Delta_k = \mu_{(M)}-\mu[k]$.
	\end{proof}
	
	%\cssr{In the following, we keep the notation $t_k=\min\{\frac{c\log(T)}{(\mu[k]-\mu_{(M)}+4\epsilon)^2}, T\}$, where $c$ is a universal constant such that conditioned the typical event, the number of exploration pulls before accepting/rejecting arm $k$ is at most $t_k$.}
	\subsubsection{Lemma~\ref{regret_decom_lemma} and its proof}
	\label{appd:lem6}
	%	Based on Lemma \ref{regret_artimes_lemma}, this decomposition can be further bounded.
	\begin{lemma}
		\label{regret_decom_lemma}
		In the typical event, the following \congr{results hold.}
		\vspace{-0.07in}
		\begin{equation}
		\notag
		\begin{aligned}
		&\text{1) \congr{for any sub-optimal arm} $k$, }(\mu_{(M)}-\mu[k])T^{expl}_{k}(T) = O \left (\frac{\Delta}{4\epsilon}\min\left\{\frac{\log(T)}{\mu_{(M+1)}-\mu[k]+4\epsilon},\sqrt{T\log(T)}\right\} \right );
		\\&\text{2) }\sum_{k\leq M}(\mu_{(k)}-\mu_{(M)})(T^{expl}-T^{expl}_{(k)}) = 
		O \left (\cssr{\sum_{k>M}}\min\left\{\frac{\log(T)}{\mu_{(M+1)}-\mu_{(k)}+4\epsilon},\sqrt{T\log(T)}\right\} \right).
		\end{aligned}
		\end{equation}
	\end{lemma}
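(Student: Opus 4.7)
The plan is to use Lemma~\ref{regret_artimes_lemma} as the main tool, with somewhat different arguments for the two parts. For part~1, fix a sub-optimal arm $(k)$ with $k>M$ and write $\mu_{(M)}-\mu_{(k)}=\Delta+(\mu_{(M+1)}-\mu_{(k)})$. The elementary estimate $\Delta\leq\frac{\Delta}{4\epsilon}(\mu_{(M+1)}-\mu_{(k)}+4\epsilon)$, together with the per-arm bound $T^{expl}_{(k)}=O(\log(T)/(\mu_{(M+1)}-\mu_{(k)}+4\epsilon)^{2})$ given by Lemma~\ref{regret_artimes_lemma}, yields the first branch directly:
\begin{equation*}
(\mu_{(M)}-\mu_{(k)})T^{expl}_{(k)}=O\!\left(\frac{\Delta}{4\epsilon}\cdot\frac{\log(T)}{\mu_{(M+1)}-\mu_{(k)}+4\epsilon}\right).
\end{equation*}
For the $\sqrt{T\log T}$ branch I would factor $(\mu_{(M)}-\mu_{(k)})T^{expl}_{(k)}=\sqrt{(\mu_{(M)}-\mu_{(k)})^{2}T^{expl}_{(k)}}\cdot\sqrt{T^{expl}_{(k)}}$; squaring the previous estimate bounds the first factor by $O((\Delta/(4\epsilon))\sqrt{\log T})$, and the second by $\sqrt{T}$.

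For part~2 the natural strategy is a per-phase decomposition of $R^{expl}$. During phase~$p$ of exploration each active arm is pulled $M_{p}2^{p}\lceil\log T\rceil$ times, so the phase's contribution to the exploration regret equals
\begin{equation*}
2^{p}\lceil\log(T)\rceil\sum_{(k)\in OPT_{p},\,(k')\in S_{p}}(\mu_{(k)}-\mu_{(k')}),
\end{equation*}
where $OPT_{p},S_{p}$ are the sets of active optimal and sub-optimal arms at phase~$p$. Splitting $\mu_{(k)}-\mu_{(k')}=(\mu_{(k)}-\mu_{(M)})+(\mu_{(M)}-\mu_{(k')})$ and aggregating over phases reproduces the two pieces of~\eqref{regret_expl_decom}: the $\mu_{(M)}-\mu_{(k')}$ piece is the sub-sum already controlled by part~1, while the $\mu_{(k)}-\mu_{(M)}$ piece is the opt-sum of part~2. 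The crucial observation is that any optimal $(j)\in OPT_{p}$ has not yet been accepted, so the derivation inside the proof of Lemma~\ref{regret_artimes_lemma} forces $T_{p-1}<s_{(j)}=O(\log(T)/(\mu_{(j)}-\mu_{(M)}+4\epsilon)^{2})$; rearranging gives $\mu_{(j)}-\mu_{(M)}\leq\sqrt{32\log(T)/T_{p-1}}$, so $\sum_{(j)\in OPT_{p}}(\mu_{(j)}-\mu_{(M)})\leq M_{p}\sqrt{32\log(T)/T_{p-1}}$. Combined with $M_{p}2^{p}\lceil\log T\rceil=T_{p}-T_{p-1}\leq 2T_{p-1}$ from the doubling identity already used in the proof of Lemma~\ref{regret_artimes_lemma}, the per-phase opt-contribution is $O(|S_{p}|\sqrt{T_{p-1}\log T})$. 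Exchanging the order of summation and telescoping the resulting geometric series yields
\begin{equation*}
\sum_{p}|S_{p}|\sqrt{T_{p-1}}=\sum_{k'>M}\sum_{p\leq p^{\ast}_{k'}}\sqrt{T_{p-1}}=O\!\left(\sum_{k'>M}\sqrt{T_{p^{\ast}_{k'}}}\right),
\end{equation*}
and one final application of Lemma~\ref{regret_artimes_lemma} to each $k'>M$, together with the trivial bound $T_{p^{\ast}_{k'}}\leq T$, gives $\sqrt{T_{p^{\ast}_{k'}}}=O(\min\{\sqrt{\log T}/(\mu_{(M+1)}-\mu_{(k')}+4\epsilon),\sqrt{T}\})$; multiplying by the outer $\sqrt{\log T}$ produces the claimed $O(\sum_{k'>M}\min\{\log T/(\mu_{(M+1)}-\mu_{(k')}+4\epsilon),\sqrt{T\log T}\})$.

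The hard part is part~2. The naive route---bounding $T^{expl}-T^{expl}_{(k)}\leq(K-M)\cdot 2\cdot 2^{p_{k}}\lceil\log T\rceil$ via Lemma~\ref{regret_artimes_lemma} applied to the optimal arm and then summing over $k\leq M$---picks up an extra factor of $M$ and produces a bound indexed by optimal-arm gaps rather than the stated sub-optimal-indexed form. Avoiding this requires the per-phase decomposition together with the ``active implies small gap'' consequence of Lemma~\ref{regret_artimes_lemma}; making the telescoping rigorous and correctly matching both branches of the minimum is where the main technical effort sits.
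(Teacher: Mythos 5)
Your part~1 is fine and is essentially the paper's own argument (your square-root factorization of the $\sqrt{T\log T}$ branch is an equivalent substitute for maximizing $\min\{\log T/\delta,\delta T\}$ over $\delta$). Your part~2 also starts on the right track: the per-phase identity $2^{p}\lceil\log T\rceil\sum_{(k)\in OPT_p,(k')\in S_p}(\mu_{(k)}-\mu_{(k')})$ and the observation that an arm still active in phase $p$ must satisfy $\mu_{(j)}-\mu_{(M)}\leq\sqrt{32\log(T)/T_{p-1}}$ are exactly the content of Lemmas~\ref{appendix_lemma_part1} and \ref{appendix_lemma_part2} in the paper, just organized by phases instead of by indicator double sums. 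The problem is the final summation step.

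Concretely, after you replace the increment $T_p-T_{p-1}=M_p2^p\lceil\log T\rceil$ by $2T_{p-1}$, you need $\sum_{p\leq p^{\ast}_{k'}}\sqrt{T_{p-1}}=O\bigl(\sqrt{T_{p^{\ast}_{k'}}}\bigr)$ with an absolute constant, and that sum is \emph{not} a geometric series: $T_p/T_{p-1}=1+M_p2^p/\sum_{l<p}M_l2^l\geq 1+M_p/M_1$, which can be as close to $1$ as $1+1/M$ once most optimal arms have been accepted (e.g.\ $M_1=M$ and $M_p=1$ afterwards gives $T_p\approx(2M+2^{p+1})\lceil\log T\rceil$, essentially flat for about $\log_2 M$ phases). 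In that regime $\sum_{p\leq p^{\ast}}\sqrt{T_{p-1}}$ exceeds $\sqrt{T_{p^{\ast}}}$ by a factor growing with $M$ (of order $\log M$ by dyadic grouping), so your chain of bounds delivers the lemma only up to an extra $M$-dependent factor---precisely the kind of multiplicative factor this lemma, and the paper's improvement over SIC-MMAB2, is meant to eliminate. The lossy move is discarding the increment structure: keep the per-phase contribution in the form $|S_p|\,(T_p-T_{p-1})\sqrt{32\log T/T_{p-1}}$ and use $T_p\leq 3T_{p-1}$ only through $(T_p-T_{p-1})/\sqrt{T_{p-1}}\leq(1+\sqrt{3})\bigl(\sqrt{T_p}-\sqrt{T_{p-1}}\bigr)$, which telescopes to $(1+\sqrt{3})\sqrt{T_{p^{\ast}_{k'}}}$ after exchanging the order of summation; this is exactly the device in the paper's proof of Lemma~\ref{appendix_lemma_part2}, and with that substitution the rest of your argument (including the final application of Lemma~\ref{regret_artimes_lemma} and the cap $T_{p^{\ast}_{k'}}\leq T$) goes through.
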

	
	%Lemma~\ref{regret_decom_lemma} is presented in order to use the decomposition of Eqn. \eqref{regret_expl_decom}. 
	The proof of the first part in Lemma~\ref{regret_decom_lemma} is as follows.
	\begin{proof}
		For a sub-optimal arm $k$, Lemma~\ref{regret_artimes_lemma} leads to $T_k^{expl}(T)\leq O \left (\min\left\{\frac{\log(T)}{(\mu_{(M+1)}-\mu[k]+4\epsilon)^2},T\right\} \right )$, and thus
		\begin{equation*}
		\begin{aligned}
		(\mu_{(M)}-\mu[k])T^{expl}_{k}(T)&=\frac{\mu_{(M)}-\mu[k]}{\mu_{(M+1)}-\mu[k]+4\epsilon}O\left (\min\left\{\frac{\log(T)}{(\mu_{(M+1)}-\mu[k]+4\epsilon)},(\mu_{(M+1)}-\mu[k]+4\epsilon)T\right\} \right)\\
		&\overset{(i)}{\leq} O\left(\frac{\Delta}{4\epsilon}\min\left\{\frac{\log(T)}{\delta},\delta T\right\}\right)\\
		&\overset{(ii)}{\leq} O\left (\frac{\Delta}{4\epsilon}\min\left\{\frac{\log(T)}{(\mu_{(M+1)}-\mu[k]+4\epsilon)},\sqrt{T\log(T)}\right\} \right),
		\end{aligned}
		\end{equation*}
		in which inequality (i) comes from $$\frac{\mu_{(M)}-\mu[k]}{\mu_{(M+1)}-\mu[k]+4\epsilon} = \frac{\mu_{(M)}-\mu[k]}{\mu_{(M)}-\mu[k]+4\epsilon-\Delta} \leq \frac{\Delta}{4\epsilon},$$
		and $\delta = \mu_{(M+1)}-\mu[k]+4\epsilon$. Inequality (ii) can be obtained with the observation that the term $\frac{\Delta}{4\epsilon}O(\min\{\frac{\log(T)}{\delta},\delta T\})$ is maximized by $\delta=\sqrt{\frac{\log(T)}{T}}$.
	%	We further have $$\frac{\mu_{(M)}-\mu[k]}{\mu_{(M+1)}-\mu[k]+4\epsilon} = \frac{\mu_{(M)}-\mu[k]}{\mu_{(M)}-\mu[k]+4\epsilon-\Delta} \leq \frac{\Delta}{4\epsilon}.$$ In addition, by denoting $\delta = \mu_{(M+1)}-\mu[k]+4\epsilon$, the above equation is less than $\frac{\Delta}{4\epsilon}O(\min\{\frac{\log(T)}{\delta},\delta T\})$ and it is maximized for $\delta = \sqrt{\frac{\log(T)}{T}}$. Finally, the first part in Lemma~\ref{regret_decom_lemma} is in order:
		%\begin{equation}
		%(\mu_{(M)}-\mu[k])T^{expl}_{k}(T) = O\left (\frac{\Delta}{4\epsilon}\min\{\frac{\log(T)}{(\mu_{(M+1)}-\mu[k]+4\epsilon)},\sqrt{T\log(T)}\} \right).
		%\end{equation}
	\end{proof}
	
	The second part of Lemma~\ref{regret_decom_lemma} is based on Lemmas~\ref{appendix_lemma_part1} and \ref{appendix_lemma_part2}.
	\begin{lemma}
		\label{appendix_lemma_part1}
		Define $\hat{t}_k$ as the number of exploratory pulls before accepting/rejecting the arm $k$ and \cssr{$H$ is the total number of exploration phases}. \cssr{Conditioned on the typical event}, we have:
		\begin{equation*}
		\sum_{k\leq M} \left (\mu_{(k)}-\mu_{(M)} \right )  \left (T^{expl}-T^{expl}_{(k)} \right ) \leq \sum_{j>M}\sum_{k\leq M}\sum_{p=1}^\cssr{H} 2^p \lceil \log(T) \rceil  \left  (\mu_{(k)}-\mu_{(M)} \right )\mathds{1}_{\min \left\{\hat{t}_{(j)},\hat{t}_{(k)} \right \}\geq T_{p-1}}.
		\end{equation*}
	\end{lemma}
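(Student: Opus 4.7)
The plan is to establish a per-arm identity for the ``missing-pulls'' quantity $T^{expl} - T^{expl}_{(k)}$ when $k \leq M$, express it as a sum of phase-level deficits over the exploration phases in which arm $(k)$ is still active, and then rewrite the phase-$p$ count $K_p - M_p$ of active sub-optimal arms as an indicator sum over $j > M$.

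Fix an optimal arm $(k)$ with $k \leq M$. Under the typical event, $(k)$ is eventually accepted (not rejected), and from the moment it is accepted onward, the player assigned to exploit $(k)$ pulls it at every remaining time step---both during subsequent exploration phases run by the remaining active players and during the final exploitation phase---so $T^{expl} - T^{expl}_{(k)}$ accrues only over phases in which $(k)$ is still active. In any such phase $p$, the collision-free sequential hopping of the $M_p$ active players over the $K_p$ active arms pulls $(k)$ at exactly $M_p \cdot 2^p \lceil \log(T) \rceil$ of the $K_p \cdot 2^p \lceil \log(T) \rceil$ time steps of the phase, giving a per-phase deficit of $(K_p - M_p) \cdot 2^p \lceil \log(T) \rceil$. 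Summing over phases yields
\begin{equation*}
T^{expl} - T^{expl}_{(k)} = \sum_{p=1}^{H} (K_p - M_p) \cdot 2^p \lceil \log(T) \rceil \cdot \mathds{1}_{(k) \text{ active in phase } p}.
\end{equation*}

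On the typical event no optimal arm is rejected and no sub-optimal arm is accepted, so the phase-$p$ active set splits cleanly into $M_p$ optimal arms and $K_p - M_p$ sub-optimal arms; in particular $K_p - M_p = \sum_{j > M} \mathds{1}_{(j) \text{ active in phase } p}$. Substituting this, exchanging the order of summation, multiplying by $\mu_{(k)} - \mu_{(M)} \geq 0$, and summing over $k \leq M$ gives
\begin{equation*}
\sum_{k \leq M}(\mu_{(k)} - \mu_{(M)})(T^{expl} - T^{expl}_{(k)}) = \sum_{j > M}\sum_{k \leq M}\sum_{p=1}^H 2^p \lceil \log(T) \rceil (\mu_{(k)} - \mu_{(M)}) \mathds{1}_{(k),\,(j) \text{ both active in phase } p}.
\end{equation*}
Because $\hat{t}_{(k)}$ is the cumulative per-arm pull count at the moment $(k)$ is decided and $T_{p-1}$ is the cumulative per-arm pull count through the end of phase $p-1$, the event that $(k)$ is active in phase $p$ is contained in $\{\hat{t}_{(k)} \geq T_{p-1}\}$, and similarly for $(j)$; replacing the joint-activity indicator by $\mathds{1}_{\min\{\hat{t}_{(j)}, \hat{t}_{(k)}\} \geq T_{p-1}}$ only enlarges the right-hand side, producing the stated bound.

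The main obstacle I anticipate is the careful bookkeeping around ``$(k)$ is active in phase $p$'' and its translation into the indicator $\mathds{1}_{\hat{t}_{(k)} \geq T_{p-1}}$: one must handle the off-by-one between ``decided at the end of phase $p-1$'' and ``still being pulled during phase $p$,'' which is precisely why the statement is an inequality rather than an equality. Everything else reduces to direct bookkeeping enabled by the collision-free exploration schedule and by the typical-event guarantees of Lemmas~\ref{regret_good_comm} and \ref{regret_correct_ar}.
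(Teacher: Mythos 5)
Your proof is correct and follows essentially the same route as the paper's: a per-phase deficit accounting of $(K_p-M_p)2^p\lceil\log(T)\rceil$ missed pulls for an active optimal arm, the identity $K_p-M_p=\sum_{j>M}\mathds{1}_{(j)\text{ active in phase }p}$ under the typical event, and the translation of activity into the indicator $\mathds{1}_{\min\{\hat{t}_{(j)},\hat{t}_{(k)}\}\geq T_{p-1}}$. The only cosmetic difference is that you state the per-arm deficit as an equality before relaxing the indicators, whereas the paper writes the inequality directly; the substance is identical.
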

	\begin{proof}
		For an optimal arm $k$, during phase $p$, if $k$ has already been accepted, it will be pulled $K_p2^p\lceil \log(T) \rceil$ times. If it is still active (i.e., $\hat{t}_k>T_{p-1}$), it will be pulled $M_p2^p\lceil \log(T) \rceil$ times, meaning that this arm is not pulled for $(K_p-M_p)2^p\lceil \log(T) \rceil$ times. 
		Thus, it holds that $T^{expl}_k\geq T^{expl}-\sum_{p=1}^\cssr{H} 2^p(K_p-M_p)\cssr{\lceil\log(T)\rceil}\mathds{1}_{\hat{t}_k> T_{p-1}}$. Notice that $K_p-M_p = \sum_{j>M}\mathds{1}_{\hat{t}_{(j)}> T_{p-1}}$. We have $T^{expl}_k\geq T^{expl}-\sum_{p=1}^\cssr{H} \sum_{j>M}2^p\cssr{\lceil\log(T)\rceil}\mathds{1}_{\min\{\hat{t}_{(j)},\hat{t}_{(k)}\}>T_{p-1}}$, which proves the lemma. %And it converted the expression of the time that an optimal arm is not pulled into the time that a sub-optimal arm is pulled instead.
	\end{proof}
	
	\begin{lemma}
		\label{appendix_lemma_part2}
		\cssr{Conditioned on the typical event}, we have:
		\begin{equation*}
		\sum_{k\leq M}\sum_{p=1}^\cssr{H} 2^p \lceil \log(T) \rceil \left (\mu_{(k)}-\mu_{(M)} \right )\mathds{1}_{\min\left \{\hat{t}_{(j)},\hat{t}_{(k)} \right \}\geq T_{p-1}}\leq O\left (\min\left \{\frac{\log(T)}{\mu_{(M)}-\mu_{(j)}+4\epsilon},\sqrt{T\log(T)} \right \} \right ).
		\end{equation*}
	\end{lemma}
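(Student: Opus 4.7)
For a fixed sub-optimal arm $j>M$, the goal is to bound
\[
S_j := \sum_{k\le M}\sum_{p=1}^{H} 2^p \lceil \log(T) \rceil (\mu_{(k)}-\mu_{(M)})\mathds{1}_{\min\{\hat t_{(j)},\hat t_{(k)}\}\ge T_{p-1}}.
\]
The first step is to collapse the inner sum over $p$ via a telescoping argument. Since $T_p-T_{p-1}=M_p 2^p\lceil\log T\rceil\ge 2^p\lceil\log T\rceil$ (as $M_p\ge 1$ whenever phase $p$ is entered) and Eqn.~\eqref{tp} gives $T_p\le 3 T_{p-1}$, for any $X\ge 0$ we have
\[
\sum_{p:\,T_{p-1}\le X} 2^p\lceil\log T\rceil \;\le\; T_{p^*}\;\le\; 3X,
\]
where $p^*$ is the largest index with $T_{p^*-1}\le X$. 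Setting $X=\min\{\hat t_{(j)},\hat t_{(k)}\}$ reduces the task to bounding $\sum_{k\le M}(\mu_{(k)}-\mu_{(M)})\min\{\hat t_{(j)},\hat t_{(k)}\}$ up to a factor of $3$.

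Next, I would substitute the per-arm accept/reject time bounds from Lemma~\ref{regret_artimes_lemma}, namely $\hat t_{(k)}\le C\log T/(\mu_{(k)}-\mu_{(M)}+4\epsilon)^2$ for optimal $k\le M$ and $\hat t_{(j)}\le C\log T/(\mu_{(M+1)}-\mu_{(j)}+4\epsilon)^2$, and split the sum over $k$ based on the comparison of $\mu_{(k)}-\mu_{(M)}+4\epsilon$ with $\mu_{(M+1)}-\mu_{(j)}+4\epsilon$. When the former dominates, the minimum equals $\hat t_{(k)}$ and the summand telescopes to $O(\log T/(\mu_{(k)}-\mu_{(M)}+4\epsilon))$; when the latter dominates, the minimum is $\hat t_{(j)}$ and the summand is $O(\log T/(\mu_{(M+1)}-\mu_{(j)}+4\epsilon))$ after using $\mu_{(k)}-\mu_{(M)}\le \mu_{(M+1)}-\mu_{(j)}+4\epsilon$. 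The identity $\mu_{(M)}-\mu_{(j)}+4\epsilon = \Delta+(\mu_{(M+1)}-\mu_{(j)}+4\epsilon)$ together with Assumption~1 (which ensures $\Delta/4\epsilon$ is a bounded multiplicative factor) then lets me replace the denominator by $\mu_{(M)}-\mu_{(j)}+4\epsilon$, yielding the first branch of the claimed $\min$.

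For the $\sqrt{T\log T}$ alternative, I would use the trivial cap $\hat t_{(\cdot)}\le T$ and the standard AM-GM balancing $\min\{\log T/\delta,\,\delta T\}\le \sqrt{T\log T}$ at $\delta=\sqrt{\log T/T}$, applied with $\delta=\mu_{(M)}-\mu_{(j)}+4\epsilon$. The hard part will be controlling the sum over $k\le M$ so that it does not pick up an extraneous factor of $M$. The key lever is the monotonic structure induced by the ordering $\mu_{(1)}\ge \cdots\ge \mu_{(M)}$: larger-gap optimal arms are forced out of the indicator quickly because their $\hat t_{(k)}$ is small, while smaller-gap ones contribute only through the vanishing weight $(\mu_{(k)}-\mu_{(M)})$. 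Arguing this tradeoff carefully---likely by grouping the $k$ according to whether $\mu_{(k)}-\mu_{(M)}$ is above or below $\mu_{(M+1)}-\mu_{(j)}+4\epsilon$, and bounding each group separately---should close the loop with the target $O(\log T/(\mu_{(M)}-\mu_{(j)}+4\epsilon))$ bound.
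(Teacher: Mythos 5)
There is a genuine gap, and it sits exactly at the step you flag as ``the hard part.'' Your opening reduction---collapsing the sum over $p$ separately for each fixed $k$ via $\sum_{p:\,T_{p-1}\le X}2^p\lceil\log T\rceil\le T_{p^*}\le 3X$ with $X=\min\{\hat t_{(j)},\hat t_{(k)}\}$---is valid but irrecoverably lossy. The quantity $2^p\lceil\log T\rceil$ is the \emph{per-player} pull count in phase $p$, while $T_p$ and $\hat t_{(k)}$ count \emph{total} samples across the $M_p$ active players; bounding the per-$k$ inner sum by $3\min\{\hat t_{(j)},\hat t_{(k)}\}$ and then summing over $k\le M$ effectively charges the full cumulative exploration time once per optimal arm, which is where a factor of $M$ enters. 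And it cannot be removed afterwards by grouping the $k$: the reduced quantity $\sum_{k\le M}(\mu_{(k)}-\mu_{(M)})\min\{\hat t_{(j)},\hat t_{(k)}\}$ is itself too large in bad instances. Take all optimal arms clustered with common gap $g=\mu_{(k)}-\mu_{(M)}$, $\epsilon\ll g$, and a sub-optimal arm $j$ with $\mu_{(M+1)}-\mu_{(j)}+4\epsilon=g+4\epsilon$; within the typical event the realized accept/reject times can all be of order $\log T/(g+4\epsilon)^2$, so the reduced sum is of order $(M-1)\log T/g$, whereas the lemma claims $O(\log T/(\mu_{(M)}-\mu_{(j)}+4\epsilon))=O(\log T/g)$ when $\Delta\ll g$. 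So no downstream case split on $\mu_{(k)}-\mu_{(M)}$ versus $\mu_{(M+1)}-\mu_{(j)}+4\epsilon$ can rescue the claimed bound from that intermediate quantity.

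The paper's proof avoids this by never decoupling $k$ from $p$. It first converts the weight using the phase-dependent fact that an optimal arm still active at phase $p$ must have a small gap, $\mu_{(k)}-\mu_{(M)}<\Delta(p)-4\epsilon$ with $\Delta(p)=\sqrt{c\log T/T_{p-1}}$; then it uses $\sum_{k\le M}\mathds{1}_{\hat t_{(k)}\ge T_{p-1}}=M_p$, so that the per-phase contribution is $\Delta(p)\,M_p 2^p\lceil\log T\rceil=\Delta(p)(T_p-T_{p-1})$---this is precisely the cancellation of the $M$ factor your reduction discards. Finally, $\sum_{p\le N^j}\Delta(p)(T_p-T_{p-1})$ telescopes (using $T_{p+1}\le 3T_p$) to $O(\log T/\Delta(N^j+1))=O(\sqrt{\hat t_{(j)}\log T})$, where $N^j$ is the phase at which arm $j$ exits, and the stated bound follows from $\hat t_{(j)}\le O\bigl(\min\{\log T/(\mu_{(M+1)}-\mu_{(j)}+4\epsilon)^2,\,T\}\bigr)$. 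In short: the gap bound must be applied phase-by-phase (through $\Delta(p)$) rather than arm-by-arm (through $\hat t_{(k)}$), and the truncation must come from arm $j$'s exit phase; without that structural change your plan cannot reach the lemma.
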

	
	\begin{proof}
		Define $A_j = \sum_{k\leq M}\sum_{p=1}^\cssr{H} 2^p \lceil \log(T) \rceil(\mu_{(k)}-\mu_{(M)})\mathds{1}_{\min\{\hat{t}_{(j)},\hat{t}_{(k)}\}\geq T_{p-1}}$. Notice that $$\hat{t}_{(k)} \leq \min\left\{\frac{c\log(T)}{(\mu_{(k)}-\mu_{(M)}+4\epsilon)^2},T\right\},$$
		and denote $\Delta(p)=\sqrt{\frac{c\log(T)}{T_{p-1}}}$. The inequity $\hat{t}_{\cssr{(k)}}>T_{p-1}$ implies $\mu_{(k)}-\mu_{(M)}<\Delta(p)-4\epsilon$. We also denote $N^j$ as the smallest integer satisfying $\hat{t}_{(j)} \leq T_{N^j}$. Then we have the following:
		\begin{equation*}
		\begin{aligned}
		A_j & \leq \sum_{k\leq M}\sum_{p=1}^{N^j} 2^p \lceil \log(T) \rceil(\Delta(p)-4\epsilon)\mathds{1}_{\hat{t}_{\cssr{(k)}}\geq T_{p-1}}\\
		&\leq\sum_{p=1}^{N^j}\Delta(p)2^p\lceil \log(T) \rceil\sum_{k\leq M}\mathds{1}_{\hat{t}_{\cssr{(k)}}\geq T_{p-1}} \\ 
		&= \sum_{p=1}^{N^j}\Delta(p)2^p\lceil \log(T) \rceil M_p\\
		&\leq \sum_{p=1}^{N^j}\Delta(p)(T_p-T_{p-1}) \\
		&=c\log(T) \sum_{p=1}^{N^j}\Delta(p)\left(\frac{1}{\Delta(p+1)}+\frac{1}{\Delta(p)}\right)\left (\frac{1}{\Delta(p+1)}-\frac{1}{\Delta(p)} \right ).
		\end{aligned}
		\end{equation*}
		Since $T_{p+1}\leq 3T_p$, $\Delta(p)\left (\frac{1}{\Delta(p+1)}+\frac{1}{\Delta(p)} \right )=1+\sqrt{\frac{T_p}{T_{p-1}}}\leq 1+\sqrt{3}$. Thus, 
		$$A_j\leq c\log(T)\sum_{p=1}^{N^j}\left (\frac{1}{\Delta(p+1)}-\frac{1}{\Delta(p)} \right )\leq (1+\sqrt{3})c\log(T)\frac{1}{\Delta(N^j+1)}.$$
		With the definition of $N^j$, we have $\cssr{\hat{t}_{(j)}}\geq T_{N^j-1}$. With inequality $T_{N^j+1}\leq 3T_{N^j}$ we have $\Delta(N^j)\geq \sqrt{\frac{c\log(T)}{3\cssr{\hat{t}_{(j)}}}}$. $A_j\leq (3+\sqrt{3})\sqrt{c\cssr{\hat{t}_{(j)}}\log(T)}$ then holds. With $\cssr{\hat{t}_{(j)}\leq }   O\left (\min\{\frac{c\log(T)}{(\mu_{(M+1)}-\mu_{(j)}+4\epsilon)^2}, T\} \right )$, we have 
		$$A_j \leq(3+\sqrt{3})\min \left \{\frac{c\log(T)}{\mu_{(M+1)}-\mu_{(j)}+4\epsilon},\sqrt{cT\log(T)} \right\}.$$
	\end{proof}
	
	\subsection{Communication phase}
	This section presents the proof related to the bound of the communication regret. %The expanded exploration length and the fixed length of quantized sample means play an essential role in having an order-optimal regret.
	\subsubsection{Proof for Lemma~\ref{regret_comm_lemma}}
	\begin{proof}
		\cssr{Conditioned on the typical event}, we denote $H$ as the number of exploration phases. The communication length for sending arm statistics and acc/rej arm sets for $p\in [H]$ is at most $N'(KM+2M)$. Lemma~\ref{regret_artimes_lemma} states that $H$ satisfies $T_H = \sum_{l=1}^H M_l 2^l \lceil \log(T) \rceil =O(\cssr{\max_{k\in[K]}\{s_k\}})=O \left (\min\{\frac{\log(T)}{4\epsilon},T\} \right )$. Thus,
		\begin{equation*}
		H = O\left (\log(\min\{\frac{1}{4\epsilon},\cssr{T}\}) \right),
		\end{equation*}
		which leads to a regret of $O(N'(KM^2+2M^2)\log(\min\{\frac{1}{4\epsilon},T\})$. Next, transmitting acc/rej arm sets at most incurs a regret of \cssr{$M^2KN'$}. Putting them together, the total communication regret is:
		\begin{equation*}
		O\left( N'(KM^2+2M^2)\log\left (\min \left \{\frac{1}{4\epsilon},\cssr{T} \right \} \right)+\cssr{N'}M^2K \right).
		\end{equation*}
	\end{proof}
	
	\section{Supplementary Materials for Experiments}
	\label{appendix_sim}
	\subsection{Minor changes to SIC-MMAB2}
	Some minor changes can be made to SIC-MMAB2 \citep{Boursier2019} to improve its convergence and make it more practical. First, the original length of exploration in phase $p$ is $K_p2^p T_0$, where $T_0=\lceil \frac{2400\log(T)}{\mu_{\min}}\rceil$. The factor $2400$ is too large and makes it almost impossible to converge (or even finish one round of exploration) in the time horizon of our experiments. We thus change it to $T_0=\lceil \frac{24\log(T)}{\mu_{\min}}\rceil$, and we have verified that it converges successfully for the instances in the experiments. Another minor change is to add a random selection in the declaration phases.  In SIC-MMAB2 \citep{Boursier2019}, players sequentially declare the arms in their own \cssr{rejected} sets. However, with similar exploration time steps across players, the declaration sets are almost identical for all players. With a sequential selection (rather than a random selection in our implementation), all players will declare the same arm with high probability, which is in fact noticed to happen very frequently in the experiment. %, which doesn't provide extra information for each other and leads to more declaration phases.
	
	\subsection{Supplementary experiment results}\label{appendix:exp}
	
	\begin{figure*}[htb]
	\begin{minipage}{0.5\textwidth}
		\setlength{\abovecaptionskip}{-2pt} 
        \centering
        \includegraphics[width=5.5cm,height=4cm]{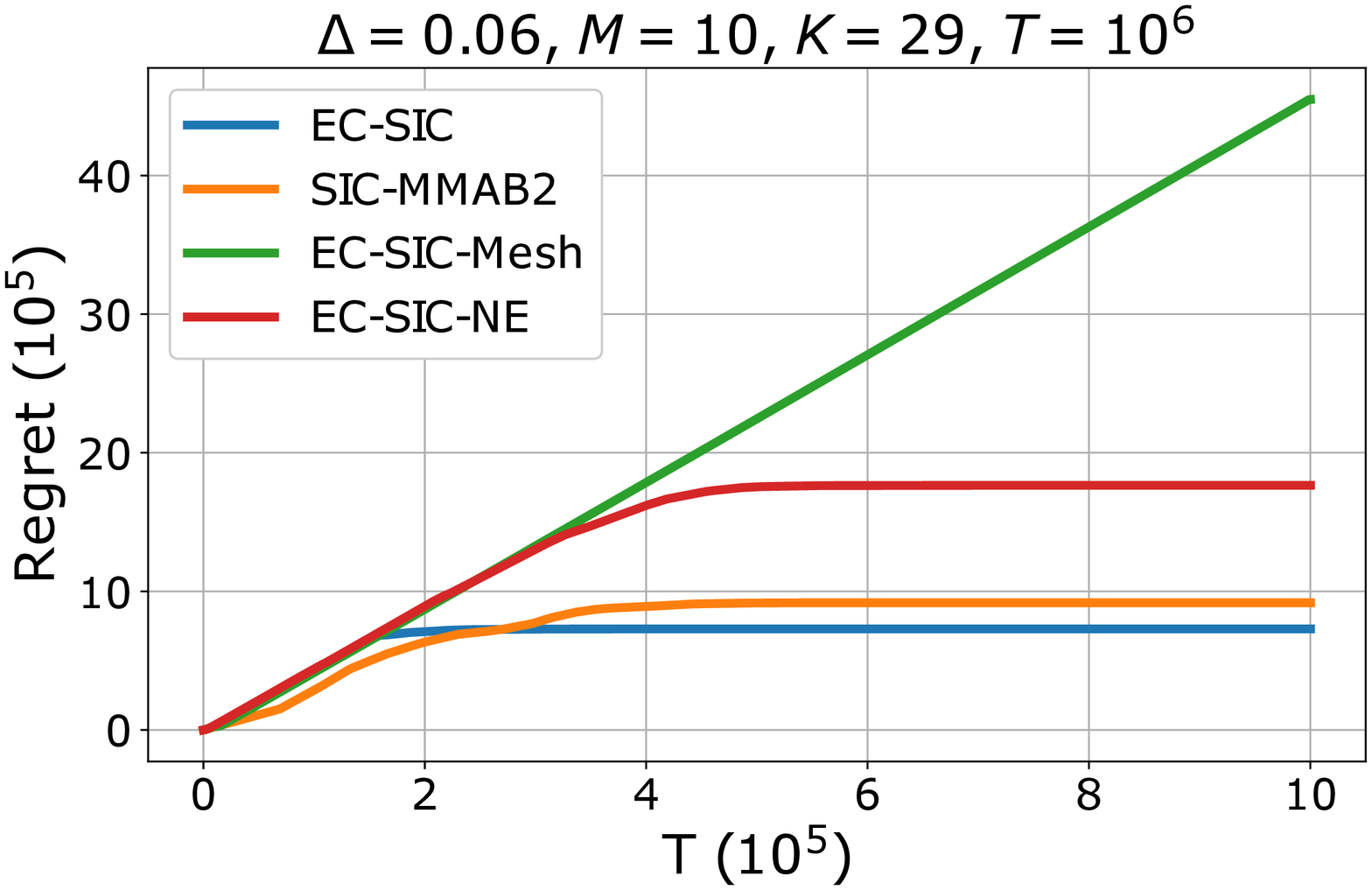}
		\caption{Large game}
		\label{fig:large_game}
	\end{minipage}
	\begin{minipage}{0.5\textwidth}
		\setlength{\abovecaptionskip}{-2pt} 
        \centering
        \includegraphics[width=5.5cm,height=4cm]{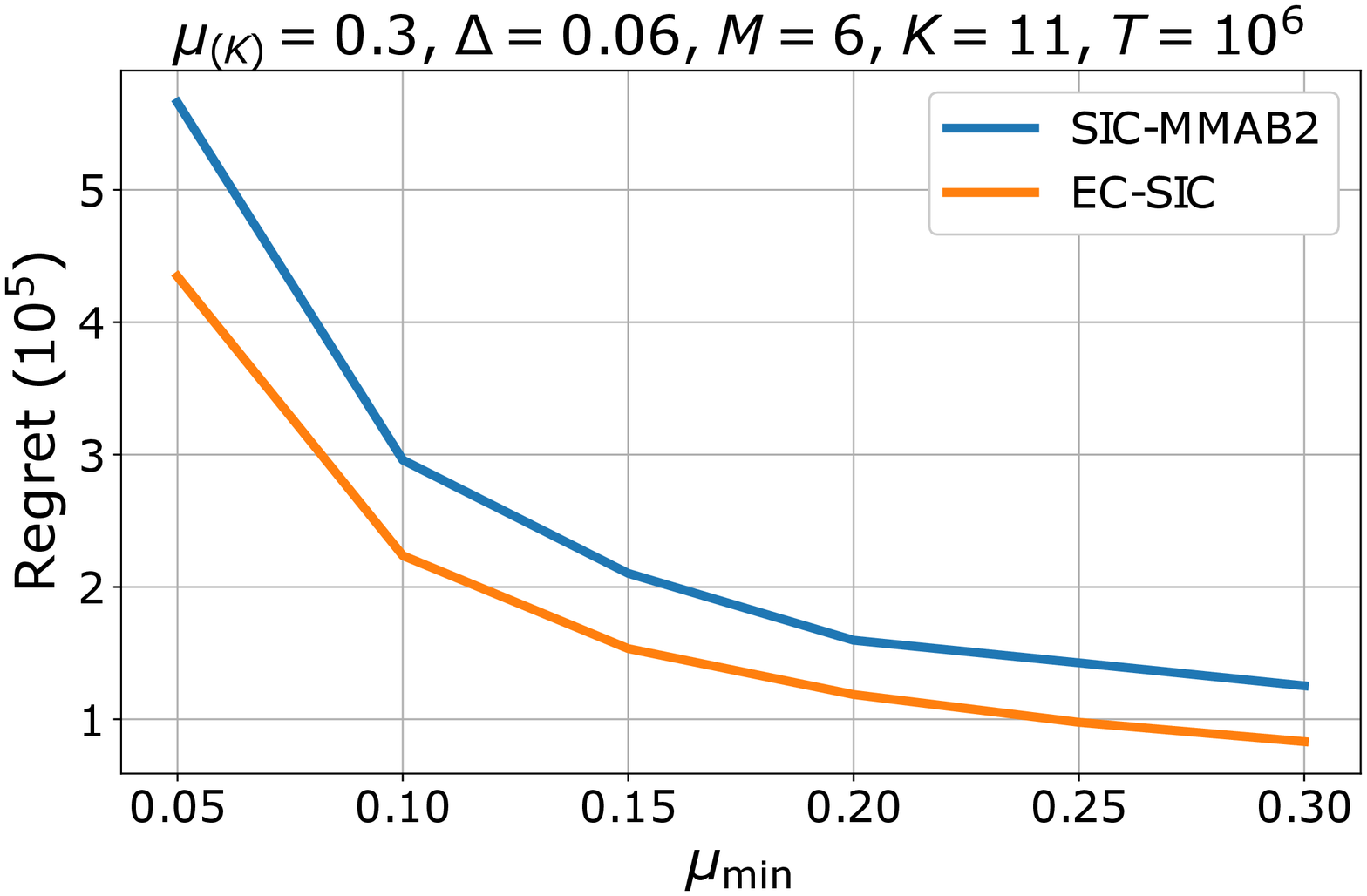}
	    %\vspace{-0.2in}
	    \caption{Performance with different $\mu_{\min}$}
	    \label{fig:mu}
	\end{minipage}
    \end{figure*}
    
    The impact of the leader-follower protocol and the enhancement in Section~\ref{sec:enhance} are evaluated in a larger game ($M=10$, $K=29$). With $\Delta = 0.06$, EC-SIC without the leader-follower protocol (labeled as ``EC-SIC-Mesh''), EC-SIC using none of the enhancement in Section \ref{sec:enhance} (labeled as ``EC-SIC-NE'') are compared with EC-SIC and SIC-MMAB2. Compared to the stable performance of EC-SIC and SIC-MMAB2, Fig.~\ref{fig:large_game} shows that in practice, due to the unnecessary communication between every pair of players, EC-SIC-Mesh cannot even finish the first communication phase. Although it converges eventually, EC-SIC-NE has a larger regret. These results highlight the benefit of the tree-structured communication and the selection of better communication arms.
    
%     \begin{figure}[htp]
%         \setlength{\abovecaptionskip}{-2pt} 
%         \centering
%         \includegraphics[width=5.5cm,height=4cm]{figs/large_game_compare.eps}
% 		\caption{Large game}
% 		\label{fig:large_game}
%     \end{figure}
    
    In the case of utilizing repetition code in EC-SIC, we carry on experiments to evaluate the dependency on the knowledge of $\mu_{\min}$. For $\mu_{(k)}=0.3$, we evaluate EC-SIC and SIC-MMAB2 with $\mu_{\min}$ from $0.05$ to $0.3$, hence creating a mismatched ``estimation'' of $\mu_{\min}$. The results shown in Figure \ref{fig:mu} state that decreasing $\mu_{\min}$ leads to an increasing regret of both SIC-MMAB2 and EC-SIC, which corroborates the theoretical analysis. Furthermore, EC-SIC has better performance than SIC-MMAB2 across different ``estimates'' of  $\mu_{\min}$.

    % \begin{figure}[htb]
	   % \setlength{\abovecaptionskip}{-2pt} 
    %     \centering
    %     \includegraphics[width=5.5cm,height=4cm]{figs/Mu.eps}
	   % %\vspace{-0.2in}
	   % \caption{Performance with different $\mu_{\min}$}
	   % \label{fig:mu}
    % \end{figure}
    
    The knowledge of $\Delta$ is assumed in the algorithms and their theoretical analysis. In practice, a precise value of $\Delta$ may not always be available. In the last experiment, we demonstrate the robustness of the algorithms by feeding it with inaccurate information $\Delta_e$ instead of the true $\Delta$. As shown in Figure \ref{fig:pes_delta}, with a pessimistic estimation of $\Delta$, the inaccurate information only leads to some additional but acceptable communication loss. The overall regret is still better than SIC-MMAB2. For the optimistic estimations, Figure \ref{fig:opt_delta} shows that the algorithm is effective even with $\Delta_e=2\Delta$. When the estimation error further grows ($\Delta_e=3 \Delta$ or $6  \Delta$), communication errors start to occur, which lead to a few arms that are identified incorrectly. It nevertheless still outperforms SIC-MMAB2  in terms of regret. Thus, practically speaking, EC-SIC has good robustness, and we further comment that it is preferable to have a pessimistic estimation. %In the case of an optimistic estimation, the algorithm can still show the superiority over the state-of-art method.

    \begin{figure*}[htb]
	\begin{minipage}{0.5\textwidth}
		\setlength{\abovecaptionskip}{-2pt} 
        \centering
        \includegraphics[width=5.5cm,height=4cm]{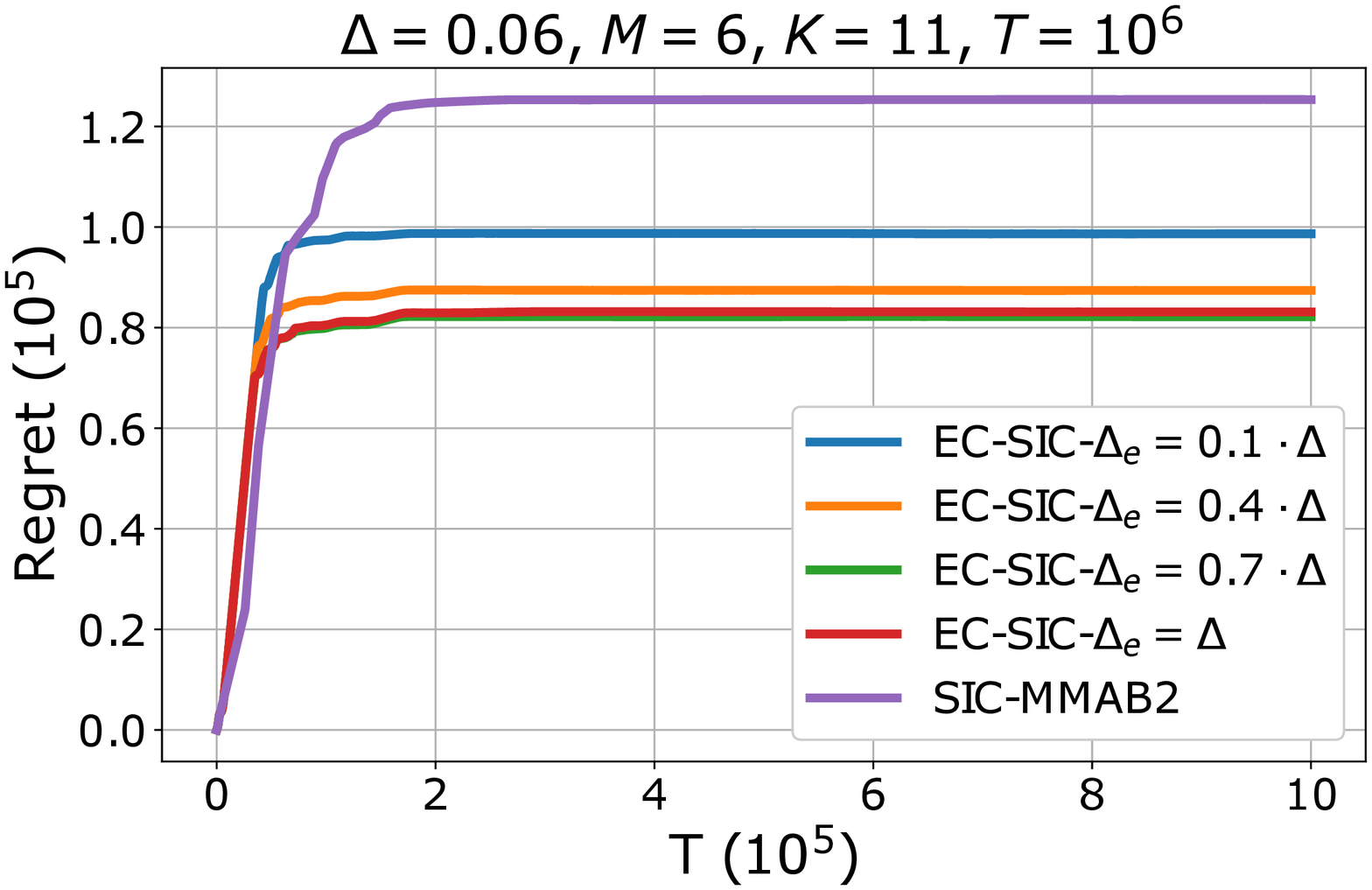}
		%\vspace{-0.2in}
		\caption{Pessimistic estimation}
		\label{fig:pes_delta}
	\end{minipage}
	\begin{minipage}{0.5\textwidth}
		\setlength{\abovecaptionskip}{-2pt} 
        \centering
        \includegraphics[width=5.5cm,height=4cm]{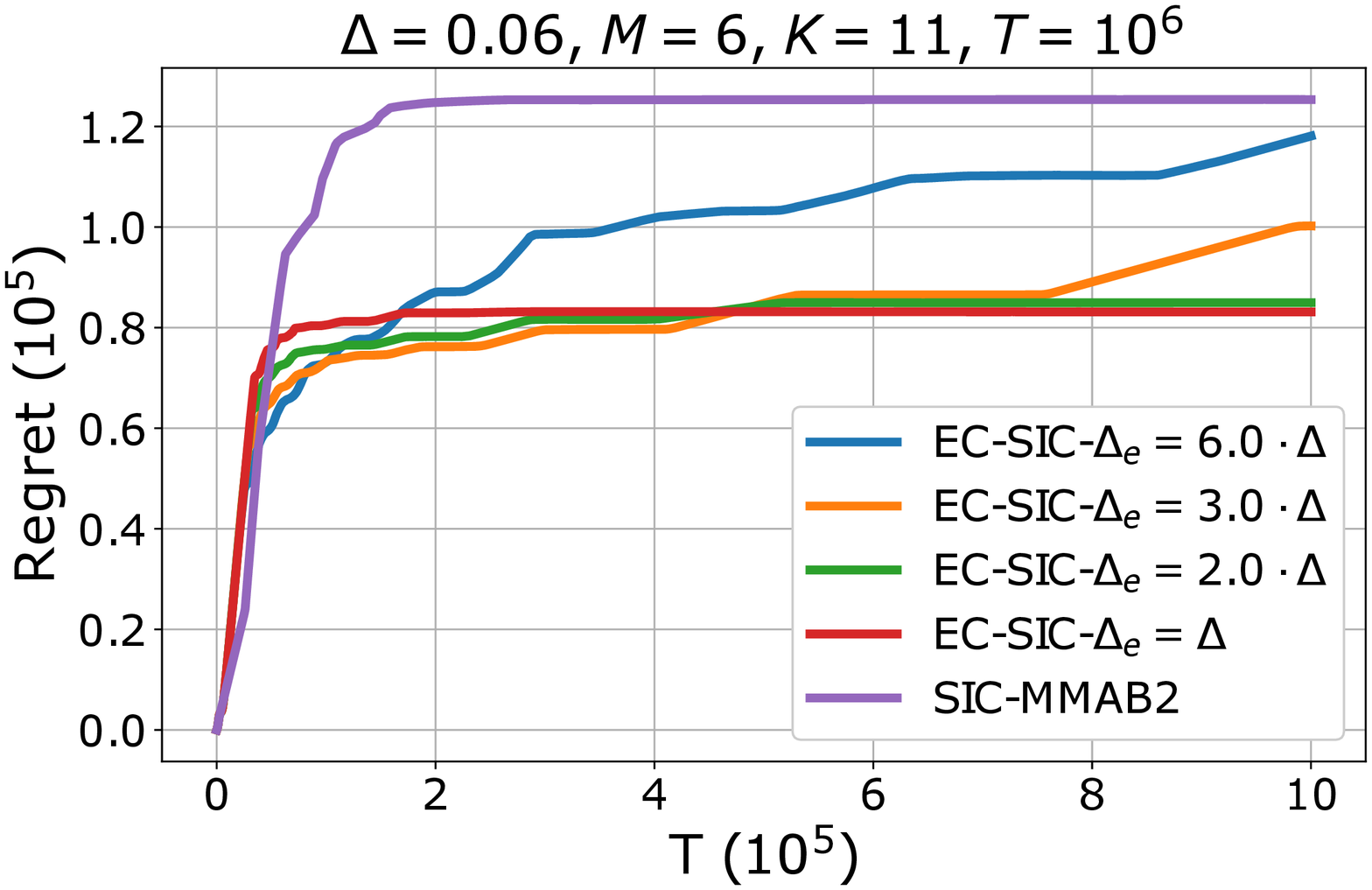}
		%\vspace{-0.2in}
		\caption{Optimistic estimation}
		\label{fig:opt_delta}
	\end{minipage}
    \end{figure*}
    
\end{document}